\documentclass{article}

\def\ArXiV{1}

\ifx\ArXiV\undefined
    \usepackage{neurips_2024}
\else
    \usepackage[preprint]{neurips_2024}
\fi

\usepackage[utf8]{inputenc} %
\usepackage[T1]{fontenc}    %
\usepackage{hyperref}       %
\usepackage{url}            %
\usepackage{booktabs}       %
\usepackage{amsfonts}       %
\usepackage{nicefrac}       %
\usepackage{microtype}      %
\usepackage{xcolor}         %

\usepackage{amsmath}
\usepackage{amssymb}
\usepackage{mathtools}
\usepackage{amsthm}
\usepackage{mfirstuc}

\usepackage{enumitem}
\usepackage{xcolor}
\usepackage[utf8]{inputenc}
\usepackage{thmtools, thm-restate} %

\usepackage{graphicx}

\hypersetup{%
  colorlinks=true,
  linkcolor=green!50!black,
  urlcolor=blue!70!black,
  citecolor=blue!70!black
}

\usepackage[capitalize,noabbrev]{cleveref}

\declaretheorem{theorem}
\declaretheorem[sibling=theorem]{lemma}
\declaretheorem[sibling=theorem]{proposition}
\declaretheorem[sibling=theorem]{definition}
\declaretheorem[sibling=theorem]{corollary}
\declaretheorem[sibling=theorem]{remark}
\declaretheorem[sibling=theorem]{example}

\usepackage[textsize=tiny]{todonotes}

\newcommand{\CoolModelFullNameUpper}{Bend and Mix Model}
\newcommand{\CoolModelFullName}{Bend and Mix Model}
\newcommand{\CoolModel}{BMM}

\title{On the Properties and Estimation of Pointwise~Mutual~Information~Profiles}
\author{%
  \textbf{Pawe{\l} Czy{\.z}}\thanks{Equal contribution $^\dagger$Joint supervision}$^{\; \; 1,2}$ \quad \textbf{Frederic Grabowski}$^{*\, 3}$\\%
  \textbf{Julia E. Vogt}$^{4,5}$\quad \textbf{Niko Beerenwinkel}$^{\dagger\, 1,5}$ \quad \textbf{Alexander Marx}$^{\dagger\, 2,4}$\\%
  \small $^1$Department of Biosystems Science and Engineering, ETH Zurich\quad
  \small $^2$ETH AI Center, ETH Zurich\\
  \small $^3$Institute of Fundamental Technological Research, Polish Academy of Sciences\\
  \small $^4$Department of Computer Science, ETH Zurich\quad
  \small $^5$SIB Swiss Institute of Bioinformatics
}

\begin{document}

\renewcommand{\paragraph}[1]{\noindent\textbf{#1}}

\maketitle

\begin{abstract}
The pointwise mutual information profile, or simply profile, is the distribution of pointwise mutual information for a given pair of random variables. One of its important properties is that its expected value is precisely the mutual information between these random variables.
In this paper, 
we analytically describe the profiles of multivariate normal distributions and introduce a novel family of distributions, \emph{\CoolModelFullName{}s}, for which the profile can be accurately estimated using Monte Carlo methods.
We then show how \CoolModelFullName{}s can be used to study the limitations of existing mutual information estimators, investigate the behavior of neural critics used in variational estimators, and understand the effect of experimental outliers on mutual information estimation.
Finally, we show how \CoolModelFullName{}s can be used to obtain model-based Bayesian estimates of mutual information, suitable for problems with available domain expertise in which uncertainty quantification is necessary.
\end{abstract}

\addtocontents{toc}{\protect\setcounter{tocdepth}{-1}}

\section{Introduction}

Mutual information (MI) is a key measure of dependence between two random variables (r.v.s). Due to its theoretical properties, including invariance to reparametrization, it has found many applications~\citep{Polyanskiy-Wu-Information-Theory}.
However, estimating mutual information from a finite sample remains challenging.
Constructing mutual information estimators remains an active area of research~\citep{kay-elliptic, kraskov:04:ksg, oord:18:infonce, belghazi:18:mine, Carrara2023-KSG-MCTS}.
In response to the introduction of new estimators, various benchmarks have been proposed to assess trade-offs~\citep{khan:07:relative, poole2019variational,beyond-normal-2023}.

In this manuscript we study the pointwise mutual information profile (PMI profile), a closely related statistical measure which has received much less attention. The PMI profile is a distribution whose expected value is the mutual information between two considered r.v.s. We determine the PMI profile for multivariate normal distributions analytically and show that the PMI profile, similarly to mutual information, is invariant to reparametrization. This last property has an important implication for MI benchmarks \citep{beyond-normal-2023} which transform simple distributions with known MI to create more complex benchmarking tasks: using the invariance of the profile, we demonstrate that the family of distributions obtained in this manner is inherently limited.

To overcome this limitation, we introduce a family of \CoolModelFullName{}s (\CoolModel{}s), a class of distributions for which the PMI profile can be sampled efficiently.
By extension, since MI is the expected value of the PMI profile, it can also be readily estimated using Monte Carlo methods.
\CoolModel{}s include highly-expressive Gaussian mixture models, and thus allow us to model complex distributions.
Through a series of numerical experiments we demonstrate the usefulness of \CoolModel{}s for creating non-trivial benchmark tasks; in particular, we discuss robustness of mutual information estimators to inlier and outlier noise. Additionally, \CoolModel{}s allow us to investigate the properties of PMI profiles directly.
Based on empirical evidence, we argue that existing estimators based on
neural variational lower bounds~\citep{NWJ2007, belghazi:18:mine} implicitly estimate the PMI profile, even though they do not reliably estimate the pointwise mutual information \emph{function}.

Finally, we show that \CoolModel{}s can provide model-based Bayesian estimates of both the MI and the PMI profile, generalizing the earlier approaches of \citet{kay-elliptic} and \citet{Brillinger-2004} to a large class of models. Although this approach is not universal, we demonstrate that in low-dimensional problems with available domain expertise, \CoolModel{}s provide state-of-the-art point estimates. Additionally, they offer a principled approach to uncertainty quantification in MI estimation, which has been lacking in other approaches.

\begin{figure*}[t]
\begin{center}
\includegraphics[width=0.9\textwidth]{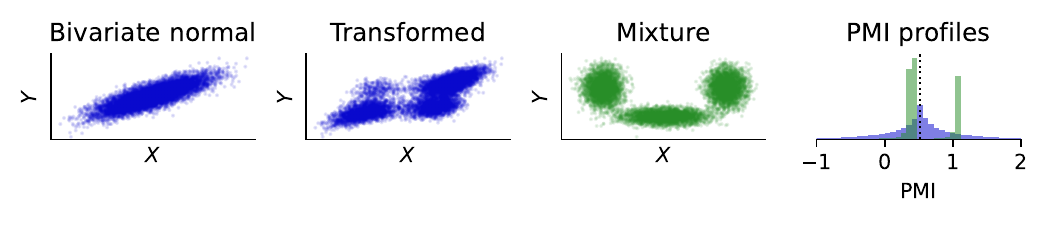}
\end{center}
\vspace{-0.75cm}
\caption{First two panels: samples from a bivariate normal distribution and the same distribution with marginals transformed.
Both distributions have the same PMI profile (blue histogram in the fourth panel).
Third panel: mixture distribution, which cannot be obtained as a transformation of the normal distribution due to a distinct PMI profile (green histogram in the fourth panel).
All three distributions have the same mutual information, marked with the black line in panel four.
\label{fig:distinct-profiles-illustration}
}
\vspace{-0.8em}
\end{figure*}

To summarize, the main contributions of this manuscript are as follows:
\vspace{-5pt}
\begin{itemize}[noitemsep,topsep=2pt,parsep=1pt,partopsep=0pt,leftmargin=*]
    \item We introduce the PMI profile, an invariant generalizing mutual information, determine the PMI profile for normal distributions and prove further characterising properties (Sec.~\ref{subsection:theory-pmi-profiles}).
    \item We define the highly-expressive family of \CoolModelFullName{}s for which the PMI profile, and hence mutual information, is tractable using Monte Carlo methods (Sec.~\ref{subsection:theory-cool-models}). 
    \item We use \CoolModelFullName{}s to construct novel benchmarking tasks (Sec.~\ref{section:cool-tasks}--\ref{section:outliers}), study variational MI estimators (Sec.~\ref{section:variational-lower-bounds-and-critics}), and propose model-based Bayesian MI estimators (Sec.~\ref{section:our-new-estimators}).
\end{itemize}

\section{Theoretical framework}
\label{section:pmi-profiles-theory}

In this section, we first introduce PMI profiles and then define \CoolModelFullName{}s, an expressive family of distributions for which the PMI profile can be estimated with Monte Carlo methods.

\newcommand{\Pnice}[2]{\mathcal P\!\left({#1,#2}\right)}
\newcommand{\Euclidean}[1]{ \mathbb R^{#1} }

\newcommand{\Xmanifold}{\mathcal X}
\newcommand{\Ymanifold}{\mathcal Y}
\newcommand{\Xrv}{X}
\newcommand{\Yrv}{Y}

\subsection{Pointwise mutual information profiles}\label{subsection:theory-pmi-profiles}

Consider random variables $\Xrv$ and $\Yrv$ valued in spaces $\Xmanifold$ and $\Ymanifold$.
Provided that the joint distribution $P_{\Xrv\Yrv}$ is sufficiently regular (in the sense of belonging to the family $\Pnice{\Xmanifold}{\Ymanifold}$ defined in App.~\ref{appendix:assumptions-nice-distributions}),
one can define the pointwise mutual information function as in \citet{pinsker1964information}:

\newcommand{\PMI}{\mathrm{PMI}}
\newcommand{\Profile}{\mathrm{Prof}}

\begin{definition}\label{definition:pmi-function}
Let $\Xrv$ and $\Yrv$ be random variables valued in $\Xmanifold$ and $\Ymanifold$, respectively, such that $P_{XY}\in \Pnice{\Xmanifold}{\Ymanifold}$.
Pointwise mutual information (PMI) is defined\footnote{We use the natural logarithm, meaning that all quantities are measured in nats.} as
\[ \PMI_{\Xrv\Yrv}(x, y) = \log \frac{p_{\Xrv\Yrv}(x, y)}{p_\Xrv(x)\, p_\Yrv(y)},\]
where $p_{\Xrv\Yrv}$, $p_\Xrv$ and $p_\Yrv$ are PDFs (or PMFs) of the joint and marginal distributions, respectively.
\end{definition}

Pointwise mutual information can be understood as a measure of similarity of two events: it is positive if they jointly occur at a higher frequency than if they were sampled independently from the marginal distributions.
It is also often associated with variational MI estimators \citep{NWJ2007, belghazi:18:mine, oord:18:infonce}, where one aims to approximate it using a neural network (cf.~Sec.~\ref{section:variational-lower-bounds-and-critics}).
However, in this work we focus on the distribution of its values, i.e., its \emph{profile}:

\begin{definition}\label{definition:pmi-profile}
The pointwise mutual information profile%
\footnote{%
Although we are not aware of a prior formal definition and studies of the PMI profile, histograms of approximate PMI between words have been studied before in the computational linguistics community~\citep{allen-2019}. %
} %
$\Profile_{ XY }$ is defined as the
distribution of the random variable
\(
    T = \PMI_{ XY }(X, Y). 
\)
\end{definition}

\newcommand{\mutualinformation}[2]{\mathbf{I}\!\left(#1; #2\right)}
\newcommand{\EV}{\mathbb{E}}

We discuss generalizations of the PMI profile in App.~\ref{appendix:pmi-profile-hard-maths}.
A key property of the PMI profile is that its mean is equal to mutual information,
\(
    \mutualinformation{X}{Y} = \EV_{T\sim \Profile_{XY} }[T].
\)
In the following, we characterize the profile for multivariate normal and discrete distributions, and study its invariance properties. It is known that MI is invariant under diffeomorphisms~\citep{kraskov:04:ksg}.
More generally, in App.~\ref{appendix:proof-pmi-profile-invariance}, we show that the entire profile is invariant:
\begin{restatable}{theorem}{invariancepmiprofile}\label{theorem:invariance-pmi-profile}
Let $P_{XY}\in \Pnice{\Xmanifold}{\Ymanifold}$ and $f\colon \Xmanifold\to\Xmanifold$ and $g\colon \Ymanifold\to \Ymanifold$ be diffeomorphisms.
Then for $X'=f(X)$ and $Y'=g(Y)$ it holds that $P_{X'Y'}\in \Pnice{\Xmanifold}{\Ymanifold}$ and
\(
    \Profile_{ XY } = \Profile_{ X'Y' }.
\)
\end{restatable}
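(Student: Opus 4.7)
The plan is to exploit the fact that under a diffeomorphism the Jacobian factors that appear in the transformed joint and marginal densities cancel in the PMI ratio, so that the PMI function is covariantly transformed and the induced random variable is unchanged almost surely.

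First I would use the standard change-of-variables formula to write the densities of $X' = f(X)$, $Y' = g(Y)$, and the joint $(X', Y')$. Writing $J_f(x) = |\det Df(x)|$ and similarly for $g$, the transformed joint density is
\[
  p_{X'Y'}(x', y') = p_{XY}\bigl(f^{-1}(x'), g^{-1}(y')\bigr)\, J_{f^{-1}}(x')\, J_{g^{-1}}(y'),
\]
and the transformed marginals are $p_{X'}(x') = p_X(f^{-1}(x'))\, J_{f^{-1}}(x')$ and $p_{Y'}(y') = p_Y(g^{-1}(y'))\, J_{g^{-1}}(y')$. The product structure of the independent reference $p_{X'} p_{Y'}$ produces exactly the same Jacobian factors as the joint, so taking the logarithm of the ratio yields
\[
  \PMI_{X'Y'}(x', y') = \PMI_{XY}\bigl(f^{-1}(x'), g^{-1}(y')\bigr).
\]
This is the key algebraic identity; before invoking it I would also check that $P_{X'Y'} \in \Pnice{\Xmanifold}{\Ymanifold}$, i.e.\ that the regularity conditions listed in the appendix (existence of densities with respect to appropriate reference measures, absolute continuity of the joint with respect to the product of marginals) are preserved by diffeomorphisms; this is standard but is where the assumption on $\Pnice{\cdot}{\cdot}$ is actually used.

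Given the covariance identity, the second step is essentially one line: evaluate both sides at the random point $(X', Y') = (f(X), g(Y))$. Since $f^{-1}(f(X)) = X$ and $g^{-1}(g(Y)) = Y$ pointwise, one obtains
\[
  T' := \PMI_{X'Y'}(X', Y') = \PMI_{XY}(X, Y) =: T
\]
as equality of random variables (not just in distribution). Consequently $\Profile_{X'Y'}$ and $\Profile_{XY}$, being the laws of $T'$ and $T$, coincide.

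The main obstacle is not the algebra but the measure-theoretic bookkeeping: one must ensure that the change-of-variables formula is applicable on the relevant (possibly non-Euclidean) spaces $\Xmanifold$ and $\Ymanifold$, and that the family $\Pnice{\Xmanifold}{\Ymanifold}$ is closed under pushforward by diffeomorphisms so that $\PMI_{X'Y'}$ is well-defined. Once those preservation conditions are verified (most cleanly by framing PMI as a Radon--Nikodym derivative of $P_{X'Y'}$ with respect to $P_{X'} \otimes P_{Y'}$, whose pullback under $f \times g$ equals the corresponding derivative for $P_{XY}$), the conclusion follows immediately without any further computation.
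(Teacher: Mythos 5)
Your proposal is correct and follows essentially the same route as the paper: the same change-of-variables lemma showing the Jacobian factors cancel so that $\PMI_{X'Y'} = \PMI_{XY}\circ(f^{-1}\times g^{-1})$, followed by the observation that the profile is unchanged. Your way of finishing --- noting that $T' = \PMI_{X'Y'}(f(X),g(Y)) = \PMI_{XY}(X,Y) = T$ holds as an identity of random variables on the same probability space --- is a slightly more direct phrasing of the paper's pushforward computation (and matches the paper's own remark that $f\times g$ cancels with its inverse under composition of pushforwards).
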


We demonstrate this result in Fig.~\ref{fig:distinct-profiles-illustration}.
The profile of the mixture distribution cannot be obtained as a transformation of the bivariate normal distribution.
This, in particular, implies that benchmarks relying on transforming normal distributions for generating more complex problems~\citep{beyond-normal-2023}, cannot create distributions with PMI profiles that differ from the normal distributions being transformed.
In fact, for multivariate normal distributions we can express the PMI profile analytically:

\begin{restatable}{theorem}{pmiprofilemultinormal}\label{prop:pmi-profile-multinormal}
    Let $X$ and $Y$ be r.v.s~such that the joint distribution $P_{XY}  \in \Pnice{ \mathbb R^m }{\mathbb R^n}$ is multivariate normal.
    If $k=\min(m, n)$ and $\rho_{1}, \rho_{2}, \dotsc, \rho_k$ are canonical correlations between $X$ and $Y$, then the profile
    \( \Profile_{ XY } \) is a generalized $\chi^2$ distribution, namely the distribution of the variable
    \[
        T = \mutualinformation{X}{Y} + \sum_{i=1}^k \frac{\rho_i}{2} (Q_i - Q'_i),
    \]
    where $Q_i$ and $Q'_i$ are i.i.d.~variables sampled according to the $\chi^2_1$ distribution.
    In particular, $\Profile_{XY}$ is symmetric around its median, which coincides with the mean
    \(
        \mutualinformation{X}{Y} = -\frac{1}{2}\sum_{i=1}^k \log\!\left(1-\rho_i^2\right),
    \)
    and all the moments of this distribution exist; its variance is equal to $\rho_1^2 + \cdots + \rho_k^2$.
\end{restatable}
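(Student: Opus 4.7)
The plan is to exploit Theorem~\ref{theorem:invariance-pmi-profile} to bring $(X,Y)$ into canonical CCA form, show that the PMI then splits as a sum of $k$ independent scalar contributions, analyze one bivariate pair by hand, and sum.

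For the reduction, whiten $X$ and $Y$ by the affine maps $x\mapsto \Sigma_{XX}^{-1/2}(x-\mu_X)$ and $y\mapsto \Sigma_{YY}^{-1/2}(y-\mu_Y)$, then compose with the orthogonal maps $x\mapsto U^\top x$, $y\mapsto V^\top y$ coming from the SVD $\Sigma_{XX}^{-1/2}\Sigma_{XY}\Sigma_{YY}^{-1/2} = UDV^\top$, whose nonzero singular values are exactly the canonical correlations $\rho_1 \ge \dots \ge \rho_k \ge 0$. These composed maps are diffeomorphisms of $\mathbb{R}^m$ and $\mathbb{R}^n$, so Theorem~\ref{theorem:invariance-pmi-profile} leaves $\Profile_{XY}$ unchanged. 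In the new coordinates $(X,Y)$ is zero-mean Gaussian with $\mathrm{Cov}(X) = I_m$, $\mathrm{Cov}(Y) = I_n$ and $\mathrm{Cov}(X_i, Y_j) = \rho_i \delta_{ij}$ for $i \le k$; any extra coordinates when $m \ne n$ are standard normal and independent of everything else. The joint density then factors as a product of $k$ bivariate normals times independent standard normal factors, and the extras cancel inside the log-ratio, yielding $\PMI_{XY}(x,y) = \sum_{i=1}^k \PMI_i(x_i, y_i)$ as a sum of $k$ mutually independent scalar contributions.

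For the scalar case with $(U,V)$ bivariate normal of zero means, unit variances, and correlation $\rho$, expanding the log-density ratio and rewriting it in the independent Gaussians $S = U+V$ and $D = U-V$ (with variances $2(1+\rho)$ and $2(1-\rho)$) collapses the quadratic part into $\tfrac{\rho}{2}(Q - Q')$, where $Q = S^2/[2(1+\rho)]$ and $Q' = D^2/[2(1-\rho)]$ are i.i.d.\ $\chi^2_1$. Summing $k$ independent copies produces $T = -\tfrac{1}{2}\sum_i \log(1-\rho_i^2) + \sum_{i=1}^k \tfrac{\rho_i}{2}(Q_i - Q_i')$. Taking expectations recovers the standard formula $\mutualinformation{X}{Y} = -\tfrac{1}{2}\sum_i \log(1-\rho_i^2)$; symmetry about the mean is immediate, since each $Q_i - Q_i'$ is symmetric about $0$ (swap $Q_i \leftrightarrow Q_i'$); all moments exist because $\chi^2_1$ has all moments; and $\mathrm{Var}(Q_i - Q_i') = 4$ yields $\mathrm{Var}(T) = \sum_i \rho_i^2$.

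The only non-routine step is the CCA reduction when $m \ne n$: one must verify that the SVD of the whitened cross-covariance really yields the described diagonal cross-covariance structure and that the excess independent Gaussian coordinates cancel cleanly inside the PMI ratio rather than contributing a spurious term. Once this bookkeeping is in place, the bivariate calculation via $(S, D)$ and the final summation are purely algebraic.
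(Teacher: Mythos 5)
Your proof is correct and follows essentially the same route as the paper's: both use Theorem~\ref{theorem:invariance-pmi-profile} to pass to the CCA-whitened canonical form and then diagonalize the resulting quadratic form into a weighted difference of independent $\chi^2_1$ variables. Your per-pair substitution $S=U+V$, $D=U-V$ is exactly the blockwise content of the paper's global transformation $A$ (whose $P_\pm$ blocks send $(\eta_i,\epsilon_i)$ to $(X_i\pm Y_i)/\sqrt{2(1\pm\rho_i)}$), so the only difference is that you factor the density into independent bivariate pieces first rather than invoking Imhof's general machinery for quadratic forms.
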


Additionally, in App.~\ref{appendix:known-pmi-profiles}, we derive bounds on the variance of the PMI profile of multivariate normal distributions given the MI.
In the same place, we prove the following results, characterising PMI profiles when MI is zero and for discrete random variables:

\begin{restatable}{proposition}{pmiprofileindependent}\label{prop:pmi-profile-independent}
    Let $X$ and $Y$ be r.v.s~with joint distribution $P_{XY}\in \Pnice{\Xmanifold}{\Ymanifold}$. Then, $\mutualinformation{X}{Y}=0$ if and only if
    \(
        \Profile_{XY} = \delta_0
    \)
    is the Dirac measure with a single atom at $0$.
\end{restatable}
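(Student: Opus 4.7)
The plan is to prove the two directions separately, using the characterization of mutual information as the mean of the PMI profile in one direction and the classical Gibbs' inequality in the other.

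The direction $(\Leftarrow)$ is immediate from the property mentioned right after Definition~\ref{definition:pmi-profile}: since $\mutualinformation{X}{Y} = \EV_{T \sim \Profile_{XY}}[T]$, if $\Profile_{XY} = \delta_0$ then $T \equiv 0$, so its mean is $0$.

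For the direction $(\Rightarrow)$, I would invoke the well-known identity $\mutualinformation{X}{Y} = \mathrm{KL}(P_{XY} \,\|\, P_X \otimes P_Y)$ together with Gibbs' inequality, which tells us that the KL divergence vanishes if and only if the two measures coincide. Thus $\mutualinformation{X}{Y}=0$ implies $P_{XY} = P_X \otimes P_Y$ as probability measures on $\Xmanifold \times \Ymanifold$, which means $X$ and $Y$ are independent. Under the regularity assumptions in $\Pnice{\Xmanifold}{\Ymanifold}$, the densities with respect to the product reference measure then satisfy $p_{XY}(x,y) = p_X(x)\, p_Y(y)$ almost everywhere, so $\PMI_{XY}(x,y) = 0$ for $P_{XY}$-almost every $(x,y)$. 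Hence the pushforward $\Profile_{XY}$ of $P_{XY}$ under $\PMI_{XY}$ is the Dirac measure at $0$.

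The only subtlety worth addressing carefully concerns almost-everywhere subtleties: the statement $\PMI_{XY}(X,Y) = 0$ $P_{XY}$-a.s. is a priori weaker than $p_{XY} = p_X p_Y$ a.e. with respect to the product reference measure, because the null set where they might disagree could have positive reference measure while having zero $P_{XY}$ mass. However, since the direction we need runs the other way (starting from equality of measures and deducing the PMI is zero $P_{XY}$-a.s.), this subtlety is actually on the trivial side here. The main, and only mild, obstacle is ensuring the regularity needed to apply the KL characterization; this is exactly what the class $\Pnice{\Xmanifold}{\Ymanifold}$ (defined in App.~\ref{appendix:assumptions-nice-distributions}) is designed to guarantee, so the argument reduces to a clean invocation of Gibbs' inequality together with the mean property of the profile.
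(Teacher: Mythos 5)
Your proposal is correct and follows essentially the same route as the paper: the mean property of the profile for one direction, and (MI $=0$) $\Rightarrow$ independence $\Rightarrow$ $p_{XY}=p_Xp_Y$ $\Rightarrow$ $\PMI_{XY}\equiv 0$ for the other. The paper states the independence step without naming Gibbs' inequality and uses the smoothness and positivity of the densities in $\Pnice{\Xmanifold}{\Ymanifold}$ to upgrade the equality to every point, but your almost-everywhere version suffices equally well for the pushforward to be $\delta_0$.
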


\begin{restatable}{proposition}{pmiprofilediscrete}
    If $X$ and $Y$ are discrete r.v.s~with $P_{XY}\in \Pnice{\Xmanifold}{\Ymanifold}$, then the PMI profile is discrete:
    \[
        \Profile_{XY} = \sum_{x\in \mathcal X} \sum_{y\in \mathcal Y} p_{XY}(x, y) \, \delta_{ \PMI_{\Xrv\Yrv}(x, y)}.
    \]
\end{restatable}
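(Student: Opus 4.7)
The plan is to reduce the claim directly to the definition of the PMI profile as the pushforward measure of $P_{XY}$ under the PMI function. By Definition~\ref{definition:pmi-profile}, $\Profile_{XY}$ is the law of the random variable $T = \PMI_{XY}(X,Y)$. Equivalently, $\Profile_{XY} = (\PMI_{XY})_* P_{XY}$, i.e.\ the image measure of the joint distribution under the measurable map $\PMI_{XY}\colon \Xmanifold \times \Ymanifold \to \mathbb{R}$.

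First, I would observe that for discrete $X$ and $Y$, the joint distribution can be written as a countable sum of Dirac masses, $P_{XY} = \sum_{x\in\Xmanifold}\sum_{y\in\Ymanifold} p_{XY}(x,y)\, \delta_{(x,y)}$, where the weights are the values of the joint PMF. This is just the standard representation of a discrete probability measure on $\Xmanifold \times \Ymanifold$; here membership in $\Pnice{\Xmanifold}{\Ymanifold}$ ensures that the PMI function is well-defined wherever $p_{XY}(x,y)>0$, because $p_X(x)$ and $p_Y(y)$ are then automatically positive (otherwise $(x,y)$ would contribute zero probability).

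Next, I would invoke the elementary fact that pushforward commutes with countable sums of point masses: for any measurable $f$, $f_* \delta_{z} = \delta_{f(z)}$, and hence $f_* \left(\sum_i w_i \delta_{z_i}\right) = \sum_i w_i \, \delta_{f(z_i)}$. Applying this with $f = \PMI_{XY}$ and $z = (x,y)$ yields exactly the stated formula
\[
\Profile_{XY} = \sum_{x\in\Xmanifold}\sum_{y\in\Ymanifold} p_{XY}(x,y)\, \delta_{\PMI_{XY}(x,y)}.
\]

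There is no real obstacle here; the only minor care point is to restrict the sum to the support $\{(x,y) : p_{XY}(x,y) > 0\}$ so that $\PMI_{XY}(x,y)$ is defined, which is harmless since points outside the support contribute weight zero regardless of how (or whether) $\PMI_{XY}$ is defined there. The whole argument is a one-line unpacking of the definition combined with the pushforward formula for discrete measures.
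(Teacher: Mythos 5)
Your proof is correct and follows essentially the same route as the paper: write $P_{XY}$ as a countable sum of point masses and push it forward under $\PMI_{XY}$, using $f_\#\delta_z=\delta_{f(z)}$ and linearity. Your extra remark about restricting to the support is harmless but unnecessary here, since membership in $\Pnice{\Xmanifold}{\Ymanifold}$ already guarantees the PMFs are everywhere positive, so $\PMI_{XY}$ is defined at every point.
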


For more complex distributions, analytic formulae governing the PMI profiles are not known.
Below we address this issue by offering a Monte Carlo approximation for a wide family of distributions.

\subsection{\CoolModelFullNameUpper{}s} \label{subsection:theory-cool-models}

As discussed above, PMI profiles are analytically known only for a small number of basic distributions and their reparametrizations, which do not change the PMI profile (Fig.~\ref{fig:distinct-profiles-illustration}).
To generate a wider family of distributions with distinct PMI profiles, we introduce
\emph{\CoolModelFullName{}s} which combine two strategies: \emph{bending} a distribution (transforming with diffeomorphisms; see also the literature on normalizing flows, e.g., \citet{Kobyzev-Normalizing_flows} and \cite{Papamakarios-Normalizing_flows}) and \emph{mixing} (combining multiple models into a mixture model). 
Although the mixing operation generally leads to distributions whose PMI profile is not available analytically, we can efficiently construct numerical approximations via Monte Carlo approaches.
To ensure this, we require the following property:

\begin{definition} %
    Every distribution $P_{XY}\in \Pnice{\Xmanifold}{\Ymanifold}$ for which
    we can efficiently sample $(X, Y)\sim P_{XY}$
    and numerically evaluate the densities $p_{XY}(x, y)$, $p_X(x)$ and $p_Y(y)$ at every point $(x, y)\in \Xmanifold\times\Ymanifold$
    is considered a \CoolModelFullName{} (\CoolModel{}).
\end{definition}

Any distribution that satisfies this definition can be used as a basic building block for a \CoolModel{}.
Examples include discrete distributions as well as multivariate normal and Student distributions.
More complex distributions, such as a mixture of discrete and continuous random variables (see discussion and example in~App.~\ref{appendix:discrete-variables}), can then be constructed using bending and mixing operations:

\begin{restatable}{proposition}{transformationsarefine}\label{prop:transformations-are-fine}
    If $P_{XY}$ is a \CoolModel{} and $f$ and $g$ are diffeomorphisms with numerically tractable Jacobians (e.g., normalizing flows), then $P_{f(X)g(Y)}$ is a \CoolModel{}.
\end{restatable}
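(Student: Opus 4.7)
The plan is to verify each of the three requirements in the definition of a \CoolModel{} for $P_{f(X)g(Y)}$, relying on the corresponding properties of $P_{XY}$ together with the change-of-variables formula for diffeomorphisms acting coordinate-wise.

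First I would handle sampling. Since $P_{XY}$ is a \CoolModel{}, we can draw $(X, Y) \sim P_{XY}$; applying $f$ and $g$ to the respective coordinates then yields a sample $(f(X), g(Y)) \sim P_{f(X)g(Y)}$. Only forward evaluation of $f$ and $g$ is needed, which is available since they are given as explicit maps.

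Next I would handle density evaluation via the standard change-of-variables formula. Because the transformation $(x, y) \mapsto (f(x), g(y))$ acts independently on the two blocks, its total Jacobian is block-diagonal and its determinant factors, so the joint density transforms as
\[
p_{f(X)g(Y)}(x', y') = \frac{p_{XY}\bigl(f^{-1}(x'), g^{-1}(y')\bigr)}{\lvert \det J_f(f^{-1}(x')) \rvert \, \lvert \det J_g(g^{-1}(y')) \rvert},
\]
and the marginals transform analogously with a single Jacobian factor each. Every term on the right-hand side is numerically tractable: the densities $p_{XY}$, $p_X$, $p_Y$ are tractable by the assumption that $P_{XY}$ is a \CoolModel{}, and the Jacobian determinants of $f$ and $g$ are tractable by hypothesis. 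The regularity condition $P_{f(X)g(Y)} \in \Pnice{\Xmanifold}{\Ymanifold}$ is already established by Theorem~\ref{theorem:invariance-pmi-profile}, so nothing new needs to be shown there.

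The only real obstacle — more a modelling caveat than a mathematical one — is that density evaluation at an arbitrary query point $(x', y')$ requires pointwise access to $f^{-1}$ and $g^{-1}$, not merely to $f$ and $g$. This is precisely the property that standard normalizing-flow constructions are engineered to provide, and it is implicit in the ``numerically tractable Jacobians (e.g., normalizing flows)'' hypothesis; in particular, any flow used in practice either has a closed-form inverse or a sufficiently fast numerical one. With that understood, all three \CoolModel{} conditions are verified and the proposition follows.
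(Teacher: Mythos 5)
Your proof is correct and follows essentially the same route as the paper: sampling by pushing forward a sample of $(X,Y)$ through $f$ and $g$, and density evaluation via the change-of-variables formula with the block-diagonal Jacobian (which the paper obtains from its Lemma on the invariance of PMI). Your explicit remark that pointwise access to $f^{-1}$ and $g^{-1}$ is also required is a fair observation that the paper leaves implicit in its tractability hypothesis.
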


\begin{restatable}{proposition}{mixturesarefine}\label{prop:mixtures-are-fine}
    Consider \CoolModel{}s $P_{X_1Y_1}, \dotsc, P_{X_KY_K}$. If $w_1, \dotsc, w_K$ are positive weights, s.t. $w_1 + \cdots + w_K = 1$, then the mixture distribution
     \(
        P_{X'Y'} = w_1 P_{X_1Y_1} + \cdots + w_K P_{X_KY_K}
     \)
     is a \CoolModel{}.
\end{restatable}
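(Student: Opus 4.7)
The plan is to verify the two operational requirements in the definition of a \CoolModel{} — tractable sampling and pointwise density evaluation for the joint and both marginals — and to check that the mixture still belongs to the regularity class $\Pnice{\Xmanifold}{\Ymanifold}$. Since a finite convex combination of densities is again a density (and absolute continuity with respect to a product reference measure is preserved under finite mixtures), the regularity membership reduces to a routine check against the definition in App.~\ref{appendix:assumptions-nice-distributions}, which I would dispatch first as a short preliminary.

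For sampling, I would use the standard two-stage ancestral procedure: draw a latent index $K' \in \{1,\dotsc, K\}$ from the categorical distribution with weights $w_1, \dotsc, w_K$, then return a sample $(X', Y') \sim P_{X_{K'} Y_{K'}}$. The first step is trivially tractable and the second is tractable by the assumption that each $P_{X_k Y_k}$ is itself a \CoolModel{}. This immediately produces the correct mixture law.

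For density evaluation, the key identities are
\[
    p_{X'Y'}(x, y) = \sum_{k=1}^K w_k\, p_{X_k Y_k}(x, y), \qquad p_{X'}(x) = \sum_{k=1}^K w_k\, p_{X_k}(x), \qquad p_{Y'}(y) = \sum_{k=1}^K w_k\, p_{Y_k}(y),
\]
with the marginal formulas following from linearity of integration applied to the mixture joint density. Each summand on the right-hand side is, by hypothesis, numerically tractable, so the three densities of $P_{X'Y'}$ can be evaluated pointwise by computing $K$ tractable densities and taking a weighted sum.

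There is no substantive obstacle here; the statement is essentially a compatibility check between the \CoolModel{} definition and the finite-mixture construction. The only subtlety worth flagging explicitly in the write-up is that, unlike the joint and the individual marginals, the product $p_{X'}(x)\, p_{Y'}(y)$ is \emph{not} equal to $\sum_k w_k\, p_{X_k}(x)\, p_{Y_k}(y)$ in general — this is precisely why mixing enlarges the family of attainable PMI profiles (cf.\ Fig.~\ref{fig:distinct-profiles-illustration}), and it justifies why numerical Monte Carlo approximation of $\Profile_{X'Y'}$ is still needed even though each component's profile may be known.
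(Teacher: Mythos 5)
Your proposal is correct and follows essentially the same route as the paper's proof: ancestral sampling via a categorical latent index with weights $w_1, \dotsc, w_K$, and pointwise evaluation of the joint and marginal densities as the corresponding weighted sums of the component densities. The additional remarks on membership in $\Pnice{\Xmanifold}{\Ymanifold}$ and on $p_{X'}(x)\,p_{Y'}(y)$ not factoring through the mixture are correct but not needed for the argument.
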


\newcommand{\Uniform}{\mathrm{Uniform}}

We prove both propositions in App.~\ref{appendix:fine-distributions-lemma}.
Note that since Gaussian mixture models are universal approximators of smooth densities, and they belong to the \CoolModel{} family, in principle one could use these constructions to approximate any distribution.
In practice, the evaluation time of PMI grows linearly with the number of components and can become costly for a high number of mixture components (or when large normalizing flow architectures are used).

Unfortunately, the MI of a mixture cannot be computed naively by a weighted sum of the information shared within the individual components. To illustrate this, we study two special cases in which constructing a mixture \emph{decreases}, as well as \emph{increases} the overall MI compared to the weighted sum of the components:
\begin{restatable}{proposition}{inequalityinformationmixture}\label{proposition:inequality-information-in-mixture}
    Consider r.v.s~$(X_k, Y_k)$ such that $\mutualinformation{X_k}{Y_k}<\infty$ for $k=1, \dotsc, K$. Let $(X', Y')$ be their mixture with weights $w_1, \dotsc, w_K$. Then,
    \[
        0\le \mutualinformation{X'}{Y'} \le \sum_{k=1}^K w_k \,\mutualinformation{X_k}{Y_k} + \log K.
    \]
    Moreover, these inequalities are tight:
    \begin{enumerate}[nosep]
        \item There exists a mixture such that $\mutualinformation{X'}{Y'}=\log K$ even though $\mutualinformation{X_k}{Y_k} = 0$ for all $k$.
        \item There exists a mixture such that $\mutualinformation{X'}{Y'}=0$ even though $\mutualinformation{X_k}{Y_k} > 0$ for all $k$.
    \end{enumerate}
\end{restatable}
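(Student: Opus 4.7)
The plan is as follows. The lower bound is immediate from the non-negativity of mutual information, so the work is in the upper bound and the two tightness examples.

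For the upper bound, I would introduce an auxiliary discrete component indicator $C$ taking values in $\{1,\dots,K\}$ with $P(C=k)=w_k$, coupled so that conditionally on $C=k$ the pair $(X',Y')$ is distributed as $(X_k,Y_k)$. Applying the chain rule of mutual information in two ways gives
\[
 I(C,X';Y') = \mutualinformation{X'}{Y'} + I(C;Y'\mid X') = I(C;Y') + I(X';Y'\mid C),
\]
so, using $I(C;Y'\mid X')\ge 0$, we obtain
\[
 \mutualinformation{X'}{Y'} \le I(C;Y') + I(X';Y'\mid C).
\]
Then I would identify the two summands with the terms in the claim: the standard identity for conditional MI with respect to a discrete conditioning variable gives $I(X';Y'\mid C)=\sum_k w_k\,\mutualinformation{X_k}{Y_k}$, and bounding the mutual information by the entropy of the discrete variable yields $I(C;Y')\le H(C)=-\sum_k w_k\log w_k\le \log K$. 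Adding these finishes the upper bound. The one delicate point is justifying the identity $I(X';Y'\mid C)=\sum_k w_k I(X_k;Y_k)$ in the general $\Pnice{\Xmanifold}{\Ymanifold}$ setting; this follows by writing the conditional MI as an expectation over $C$ of the KL divergence between the conditional joint and product-of-conditional-marginals, which in each slice $C=k$ reduces to $\mutualinformation{X_k}{Y_k}$ by construction.

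For tightness of the upper bound (part 1), I would take the components to be Dirac masses at distinct points, $P_{X_kY_k}=\delta_{(a_k,b_k)}$ with $a_1,\dots,a_K$ distinct and $b_1,\dots,b_K$ distinct, and choose $w_k=1/K$. Each component has $\mutualinformation{X_k}{Y_k}=0$ since the distribution is a point mass (hence a product of its marginals). Under the mixture, $X'$ and $Y'$ each take $K$ values uniformly and $Y'$ is a deterministic function of $X'$, so $\mutualinformation{X'}{Y'}=H(X')=\log K$. If one prefers continuous examples in $\Pnice{\Xmanifold}{\Ymanifold}$, replacing the Diracs by narrow Gaussians with disjoint supports yields the same value in the limit, and gives $\mutualinformation{X'}{Y'}$ arbitrarily close to $\log K$.

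For part 2 (the mixture may collapse to independence), I would give the standard $K=2$ construction on $\{0,1\}^2$: let $(X_1,Y_1)$ be uniform on $\{(0,0),(1,1)\}$ and $(X_2,Y_2)$ be uniform on $\{(0,1),(1,0)\}$, with $w_1=w_2=1/2$. Then each component has $\mutualinformation{X_k}{Y_k}=\log 2>0$, while the equal-weight mixture is uniform on $\{0,1\}^2$, giving independent marginals and $\mutualinformation{X'}{Y'}=0$. I expect the main (minor) obstacle to be the measure-theoretic justification of the conditional-MI identity in the upper bound argument; the combinatorial examples for tightness are essentially immediate.
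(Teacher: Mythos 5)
Your proof is correct and follows essentially the same route as the paper: an auxiliary categorical component indicator, the chain rule with $I(\cdot\,;C)\le H(C)\le\log K$, and the identity $\mutualinformation{X'}{Y'\mid C}=\sum_k w_k\,\mutualinformation{X_k}{Y_k}$ via the expected-KL form of conditional mutual information. Your tightness examples are just discrete analogues of the paper's interval-based constructions (and your primary Dirac example already attains $\log K$ exactly, so the approximate Gaussian variant is unnecessary).
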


\label{subsection:theory-estimating-pmi-profiles}

More general bounds on MI in mixture models have also been studied by \citet{Haussler-1997-mixtures-entropy} and \citet{Kolchinsky-2017-mixtures-entropy}, but for completeness we include a proof in App.~\ref{subsection:creating-and-destroying-information}.

However, the properties of \CoolModel{}s are chosen so that we can estimate the MI using Monte Carlo approaches up to an arbitrary accuracy, rather than relying on upper and lower bounds. We can sample $T\sim \Profile_{XY}$ by sampling a data point $(x, y)$ from the joint distribution $P_{XY}$ and evaluating $t=\PMI_{XY}(x, y)$.
Then, MI can be approximated with a Monte Carlo estimate of the integral
\(
    \mutualinformation{X}{Y} = \EV[T].
\)
Assuming $\mutualinformation{X}{Y} < \infty$, the Monte Carlo estimator of MI is guaranteed to be unbiased.
For a detailed discussion of Monte Carlo standard error (MCSE) under different regularity conditions see~\citet{Flegal-Monte_Carlo-Standard_Error} and \citet{Koehler-Monte_Carlo_error}.
Analogously, we can estimate the PMI profile with a histogram:
for a bin $B \subset \mathbb R$ one can introduce its indicator function $\mathbf{1}_B$ and integrate
\(
    \EV[ \mathbf{1}_B(T) ].
\)
Its cumulative density function can be approximated with an empirical sample using the expectations
\(
    \EV[\mathbf{1}_{(-\infty, a_k]}(T)]
\)
for a given sequence $(a_k)$.
Since the indicator functions are bounded between $0$ and $1$, the Monte Carlo estimator for both quantities is unbiased. For $N$ samples, the standard error is bounded above by $1/\sqrt{4N}$, according to the inequality of~\citet{Popoviciu-BoundedVariance}.

\section{Case studies}
\label{section:applications}

In this section, we apply \CoolModelFullName{}s to four distinct problems.
In Sec.~\ref{section:cool-tasks} we demonstrate how they can be used to extend existing benchmarks of MI estimators.
In Sec.~\ref{section:outliers} we show how \CoolModel{}s can be used %
to investigate the robustness of MI estimation to outliers and inliers.
In Sec.~\ref{section:variational-lower-bounds-and-critics} we apply \CoolModel{}s to investigate the biases and sample efficiency of variational estimators employing neural critics.
Finally, in Sec.~\ref{section:our-new-estimators} we show that if the distribution $P_{XY}$ is known to be well-approximated by a family of \CoolModel{}s, we can provide Bayesian estimates of both MI and the PMI profile. 

\subsection{Novel distributions for estimator evaluation}
\label{section:cool-tasks}

\begin{figure*}[t]
\begin{minipage}{0.65\linewidth}
\centering
\includegraphics[height=2.5cm]{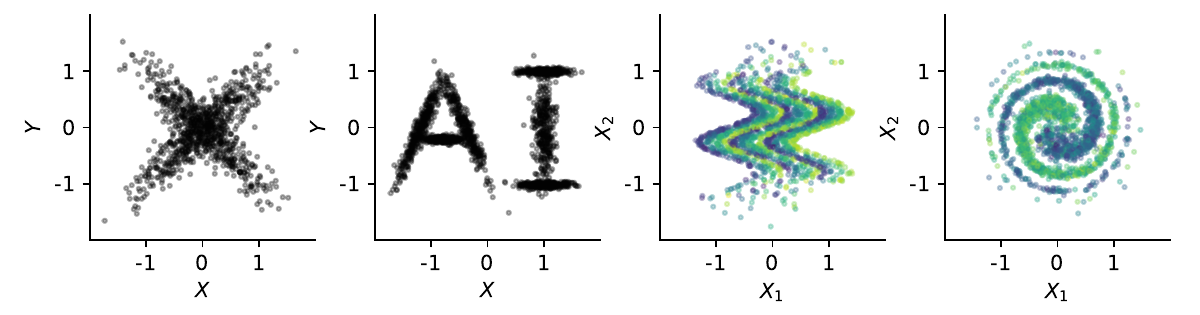}
\end{minipage}
\hfill
\begin{minipage}{0.32\linewidth}
    \includegraphics[height=2.5cm]{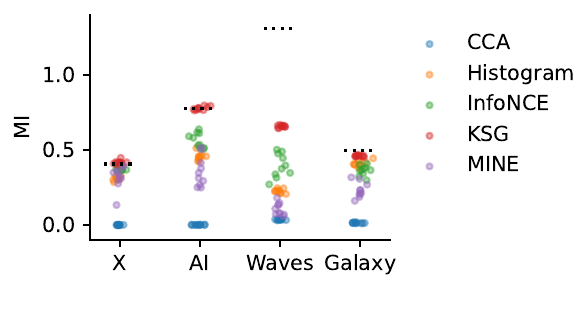}
\end{minipage}
\caption{Samples from the example distributions.
Distributions X and AI represent one-dimensional variables $X$ and $Y$.
Distributions Waves and Galaxy plot two-dimensional $X$ variable using spatial coordinates, while one-dimensional $Y$ variable is represented by color.
The rightmost plot presents estimates according to different mutual information algorithms using independently generated data sets with $N=5\,000$ points each, compared to the ground-truth MI of the distribution (dotted line).
\label{fig:cool-tasks}
}
\end{figure*}

\newcommand{\nTasksNewBenchmark}{26}

To illustrate how \CoolModel{}s can be used to create expressive benchmark tasks,
we implemented a benchmark of \nTasksNewBenchmark{} continuous distributions in TensorFlow Probability on JAX~\citep{Dillon-TensorFlowProbability,JAX} (App.~\ref{appendix:extended-benchmark}).
Additionally, in App.~\ref{appendix:discrete-variables}, we consider distributions involving discrete variables.
In Fig.~\ref{fig:cool-tasks} we visualise samples from four example distributions: the X distribution is a mixture of two bivariate normal distributions.
The marginal distributions $P_X$ and $P_Y$ are normal, although the joint distribution $P_{XY}$ is not.
The AI distribution is a mixture of six bivariate normal distributions, illustrating how expressive \CoolModel{}s can be.
The two final distributions consist of a two-dimensional $X$ variable and a one-dimensional $Y$ variable, allowing us to apply more complex transformations to the $X$ variable.
The Waves distribution is a mixture of twelve multivariate normal distributions with the $X$ variable transformed, while the Galaxy distribution is a mixture of two multivariate normal distributions with the $X$ variable transformed by the spiral diffeomorphism~\citep{beyond-normal-2023}.
A detailed description of these distributions can be found in App.~\ref{appendix:cool-tasks}.

For each \CoolModel{} we sampled ten data sets with $N=5\,000$ points and applied five estimators: the histogram-based estimator~\citep{Cellucci-HistogramsMI, Darbellay-HistogramsMI}, the 
popular KSG estimator \citep{kraskov:04:ksg}, canonical correlation analysis (CCA; \citet{kay-elliptic, Brillinger-2004}), and two neural estimators: InfoNCE~\citep{oord:18:infonce} and MINE~\citep{belghazi:18:mine} (see App.~\ref{appendix:estimator-hyperparameters} for hyperparameters used).
The estimates are shown in Fig.~\ref{fig:cool-tasks}.

Even though problems in  Fig.~\ref{fig:cool-tasks} are low-dimensional and do not encode more information than 1.5 nats, they pose a considerable challenge for the estimators.
The KSG estimator, which performs well in low-dimensional tasks, gave the best estimate in all tasks.
However, the Waves task was not solved by any estimator.
The CCA estimator, excelling at distributions that are close to multivariate normal \citep{beyond-normal-2023}, is not able to capture any information at all.
This suggests that \CoolModel{}s can provide a rich set of distributions that can be used to test MI estimators.
In ~App.~\ref{appendix:extended-benchmark} we evaluate KSG, CCA and four neural estimators on the compiled list of \nTasksNewBenchmark{} proposed benchmark tasks.
Particularly interesting are the tasks including inliers, which we study in the next section.

\subsection{Modeling inliers and outliers}
\label{section:outliers}

\begin{figure*}[t]
   \centering \includegraphics[height=3cm]{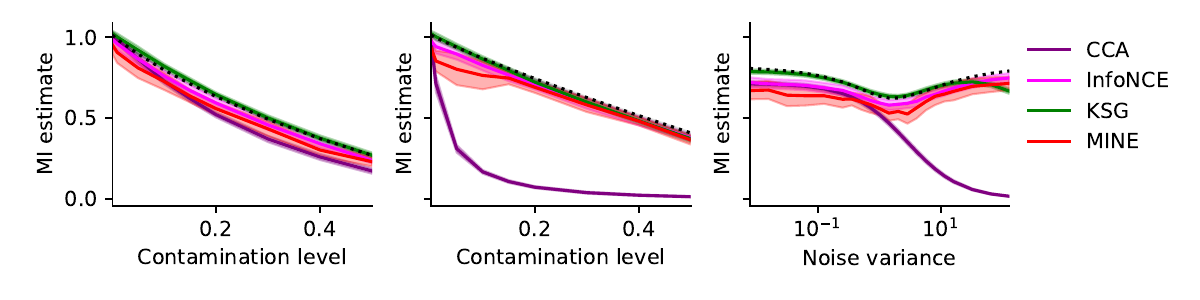} \vspace{-0.5cm}
    \caption{Left: increasing the contamination level $\alpha$ with inlier noise distribution. 
    Middle: increasing the contamination level $\alpha$ with outlier noise distribution.
    Right: increasing the variance of the noisy normal distribution for constant contamination of $20\%$.
    Outliers
    have less impact than inliers.
    \label{fig:outliers}
    }
\end{figure*}

In this section, we use \CoolModel{}s to study the effect of inliers and outliers on MI estimation.
Consider an electric circuit or a biological system modeled as a communication channel $p_{Y\mid X}(y\mid x)$.
The researcher controls the input variable $X$, which results in a distribution $P_X$, and subsequently measures the outcome variable $Y$.
The mutual information $\mutualinformation{X}{Y}$ is then estimated from the experimental samples~\citep{Nalecz-Jawecki-2023}.

However, every experimental system can suffer from occasional failures.
We model the output of a failing system with a noise distribution with a PDF $n(y)$.
If the probability of system failure, denoted by $\alpha$, and the distribution of noise $n(y)$ do not depend on the input value $x$, the communication channel becomes a mixture
\(
    p_{Y'\mid X}(y\mid x) = (1-\alpha) p_{Y\mid X}(y\mid x) + \alpha n(y).
\)
By multiplying both sides by the density $p_X(x)$, the distribution of the channel inputs provided by the scientist, we arrive at the mixture distribution $P_{XY'}$ with PDF
\(
    p_{XY'}(x, y) = (1-\alpha) p_{XY}(x, y) + \alpha n(y) p_X(x).
\)
If the system failure is unnoticed, one can only measure $Y'$, rather than $Y$.
It is therefore of interest to understand how much $\mutualinformation{X}{Y}$ and $\mutualinformation{X}{Y'}$ can differ under realistic assumptions on the noise $n(y)$ and whether the standard MI estimation techniques are robust to it.
In App.~\ref{appendix:failing-channel-inequality} we prove the following upper bound:
\begin{restatable}{proposition}{inequalityfailingchannel}\label{proposition:inequality-failing-channel}
    Let $\alpha \in [0, 1]$ be a parameter.
    Consider variables $X$, $Y$ and $Y'$, s.t.~
    \(
        p_{XY'}(x, y) = (1-\alpha)p_{XY}(x, y) + \alpha n(y) p_X(x).
    \)
    Then,
    \(
        \mutualinformation{X}{Y'} \le (1-\alpha) \, \mutualinformation{X}{Y}.
    \)
\end{restatable}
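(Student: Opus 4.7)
The plan is to introduce a latent Bernoulli ``failure indicator'' that records which mixture component each sample came from, and then apply the chain rule together with the data-processing-style monotonicity of mutual information.

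Concretely, I would let $Z$ be a random variable independent of $X$ with $P(Z=0)=1-\alpha$ and $P(Z=1)=\alpha$, and construct a joint distribution of $(X, Y, Y', Z)$ so that conditionally on $Z=0$ we have $Y'=Y$ (giving the channel $p_{Y\mid X}$), while conditionally on $Z=1$ we have $Y'\sim n$ independently of $X$. Marginalizing out $Z$ recovers exactly the stated mixture $p_{XY'}(x,y) = (1-\alpha) p_{XY}(x,y) + \alpha\, n(y) p_X(x)$, so this coupling is legitimate.

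The key chain of inequalities is then
\begin{align*}
    \mutualinformation{X}{Y'} \;\le\; \mutualinformation{X}{(Y', Z)} \;=\; \mutualinformation{X}{Z} + \mutualinformation{X}{Y' \mid Z},
\end{align*}
where the inequality is the standard fact that augmenting the right-hand variable cannot decrease MI, and the equality is the chain rule. Since $Z\perp X$ by construction, $\mutualinformation{X}{Z}=0$. Expanding the conditional MI as a weighted average over $Z$, and using that conditionally on $Z=0$ the variable $Y'$ equals $Y$, while conditionally on $Z=1$ it is independent of $X$, yields
\begin{align*}
    \mutualinformation{X}{Y'\mid Z} = (1-\alpha)\,\mutualinformation{X}{Y} + \alpha \cdot 0,
\end{align*}
which gives the claim. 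Alternatively, one can obtain the same bound directly from the convexity of $p_{Y\mid X} \mapsto \mutualinformation{X}{Y}$ (with $p_X$ held fixed), since $p_{Y'\mid X}$ is the convex combination $(1-\alpha)p_{Y\mid X}+\alpha n$, and the ``pure noise'' channel $n$ contributes zero MI.

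The only genuinely delicate point is making sure the auxiliary coupling $(X, Y, Y', Z)$ is well-defined in the general measure-theoretic setting assumed by $\Pnice{\Xmanifold}{\Ymanifold}$ (so that the chain rule and the monotonicity under adjoining $Z$ actually apply). This is routine but worth stating explicitly in the appendix: one builds the joint law by specifying the marginal of $Z$, then the conditionals of $(X, Y, Y')$ given $Z$, and checks that the induced marginal of $(X, Y')$ matches the given mixture density. Once this construction is in place, the rest of the argument is a two-line application of standard information-theoretic identities.
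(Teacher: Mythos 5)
Your proposal is correct and follows essentially the same route as the paper's proof: the paper likewise introduces an auxiliary Bernoulli indicator $Z$ independent of $X$, applies $\mutualinformation{X}{Y'} \le \mutualinformation{X}{Y', Z} = \mutualinformation{X}{Z} + \mutualinformation{X}{Y'\mid Z}$, and expands the conditional mutual information as the weighted sum $(1-\alpha)\,\mutualinformation{X}{Y} + \alpha\cdot 0$. The only differences are the (immaterial) swap of the labels of $Z$ and your added remark about the convexity of MI in the channel, which the paper does not use.
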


Alternatively, we can model the system as a \CoolModel{} and evaluate this quantity exactly.
For example, consider a setting with a two-dimensional input variable $X$ and two-dimensional output variables $Y$ (perfect output) and $Y'$ (contaminated output).
As the joint density $p_{XY}$ we use a multivariate normal with unit scale and correlations $\mathrm{corr}(X_1, Y_1) = \mathrm{corr}(X_2, Y_2) = 0.8$ and for the noise $n(y)$ we use a multivariate normal distribution with covariance $\sigma^2I_2$.
If $\sigma^2\approx 1$ this results in inliers, where the noise distribution is hard to distinguish from the signal.
For $\sigma^2 \ll 1$ the system failures are all close to $0$, while outliers are present for $\sigma^2 \gg 1$.

In Fig.~\ref{fig:outliers} we present the results of three experiments: in the first two, we changed the contamination level $\alpha \in [0, 0.5]$ for $\sigma^2 = 1$ (inlier noise) and $\sigma^2=5^2$ (outlier noise) respectively.
In the third experiment, we fixed $\alpha=0.2$ and varied $\sigma^2 \in \left[2^{-7}, 2^{8}\right]$.
We see that the inlier noise results in a slightly faster decrease of mutual information, while the outlier noise decreases almost linearly following the upper bound $(1-\alpha) \mutualinformation{X}{Y}$.
Interestingly, in this low-dimensional setting, the KSG, MINE, and InfoNCE estimators reliably estimate the mutual information $\mutualinformation{X}{Y'}$, which can significantly differ from $\mutualinformation{X}{Y}$.
Although CCA would be the preferred method to estimate $\mutualinformation{X}{Y}$ without any noise in this linear setting~\citep{beyond-normal-2023}, even a small number of outliers ($\alpha=5\%$) can result in unreliable estimates.

\subsection{Variational estimators and the PMI profile}
\label{section:variational-lower-bounds-and-critics}

Variational estimators of mutual information are frequently used in self-supervised learning~\citep{oord:18:infonce} and optimize a critic function $f\colon \Xmanifold\times \Ymanifold\to \mathbb R$ to obtain an approximate lower bound on mutual information.
For example, \citet{belghazi:18:mine} use the Donsker--Varadhan loss,
\(
    I_\text{DV}(f) = \mathbb E_{P_{XY}}[f] - \log \mathbb E_{ P_X\otimes P_Y }\left[\exp f\right]
\), 
which is a lower bound on $\mutualinformation{X}{Y}$ for any bounded function $f$.
This lower bound becomes tight when $f=\PMI_{XY} + c$ where $c$ is any real number, i.e., $I_\text{DV}(f) = \mutualinformation{X}{Y}$. Hence, one can approach MI estimation by optimizing $I_\text{DV}$ over a flexible family of functions $f$, usually parameterized by a neural network. Other examples include the NWJ estimator~\citep{NWJ2007},
which becomes tight for $f=\PMI_{XY} + 1$, and InfoNCE~\citep{oord:18:infonce}, %
which has a functional degree of freedom, i.e., 
\(
    f(x, y) = \PMI_{XY}(x, y) + c(x)
\) for a function $c$.
We provide more details on these methods in App.~\ref{appendix:more-on-neural-estimators}. Typically, these losses are interpreted as approximate lower bounds on MI and connect $f$ to the PMI function~\citep{poole2019variational,song:20:understanding} in the infinite sample limit.
In practice, however, only a finite sample is available and the critic $f$ is modeled via some parametric family (e.g., a neural network), which has to be learned from the data available.
Since the ideal critic function of a neural estimator is an approximation of PMI, we investigate how well the PMI function can be learned by the previously described estimators, as well as investigate how they behave under misspecification.

\textbf{Critics and the PMI function}~~We simulated $N=5\,000$ data points from a mixture of four bivariate normal distributions (Fig.~\ref{fig:critics-pmi-plotted}) with $\mutualinformation{X}{Y} = 0.36$ and fitted the neural critics (see App.~\ref{appendix:experimental-details}) to half of the data, retaining the latter half as the test set, on which the final estimates were obtained, yielding
$I_\text{NWJ} = 0.33$, $I_\text{DV} = 0.32$, $I_\text{NCE} = 0.35$.
This (minor) difference can be attributed to the learned critic not matching the true PMI (up to the required constants), estimation error due to evaluation on a finite batch, or both.
In Fig.~\ref{fig:critics-pmi-plotted} we plot the PMI function and the optimized critics.
As the Donsker--Varadhan estimator estimates PMI only up to an additive constant, we normalized the critic and the PMI plots to have zero mean.
Analogously, we removed the functional degree of freedom $c(x)$ from the InfoNCE estimator by removing the mean calculated along the $y$ dimension.
Overall we saw a mismatch, suggesting that neural critics do not capture the PMI function of the distribution well.
However, we can compare the PMI profile with the histogram of the values predicted by the critic in Fig.~\ref{fig:critics-pmi-plotted} (shifting the PMI profile and the histogram to have mean 0 for the Donsker--Varadhan estimator; for InfoNCE it is not possible to compare the PMI profile with the critic values due to the functional degree of freedom $c(x)$).
For both NWJ and DV, we see little discrepancy between the (shifted) PMI profile and learned values.
\emph{This suggests that although the neural critics may not learn the PMI function properly in regions with low density, the PMI profile (and, hence, the MI) can still be approximated well.}

\begin{figure*}[t]
\centering
\includegraphics[height=3.5cm]{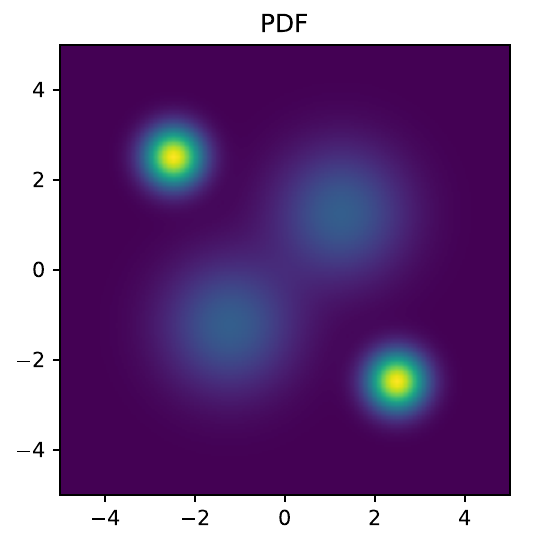}
\hspace{0.1cm}
    \includegraphics[height=3.5cm]{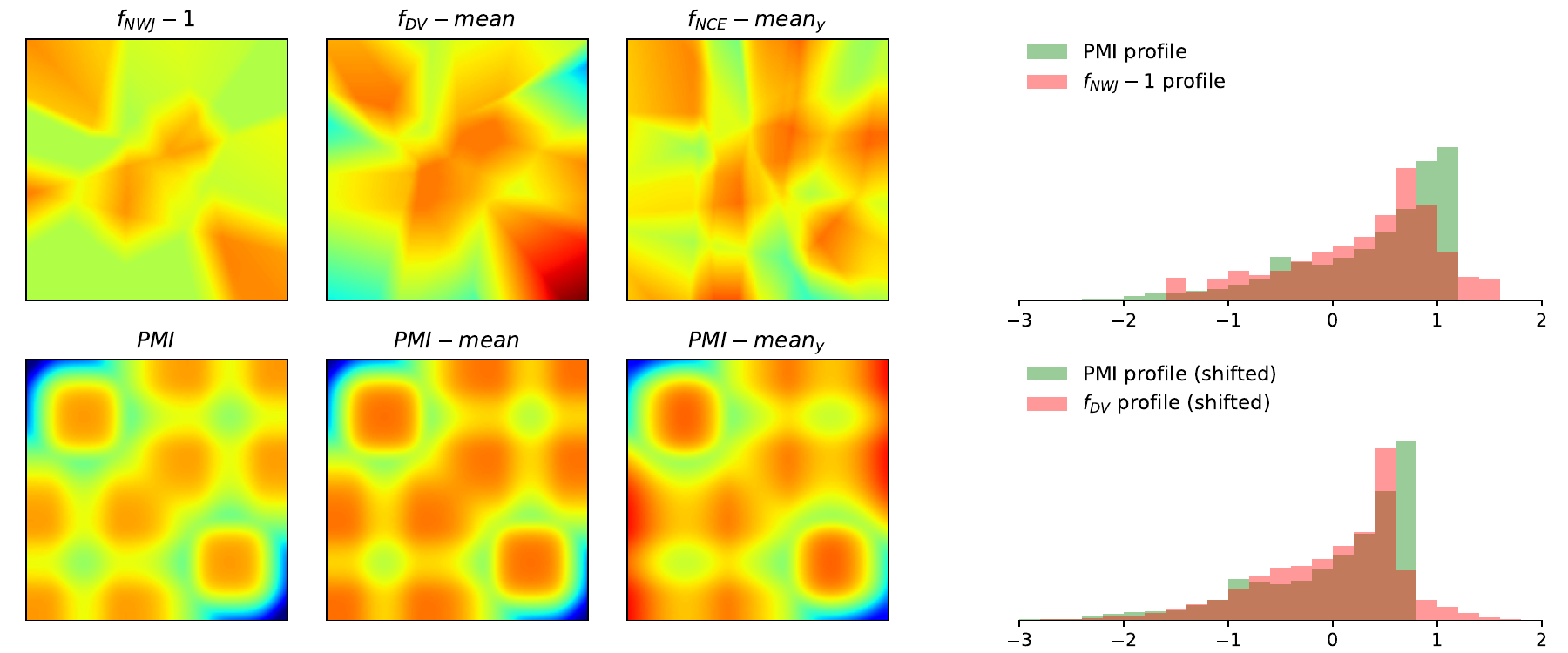}
\caption{%
    Left: PDF of the considered distribution.
    Middle: neural critic and PMI values.
    Right: normalized neural critic and PMI profiles.
\label{fig:critics-pmi-plotted}
}
\end{figure*}

\begin{figure*}[t]
\centering
\includegraphics[width=1\linewidth]{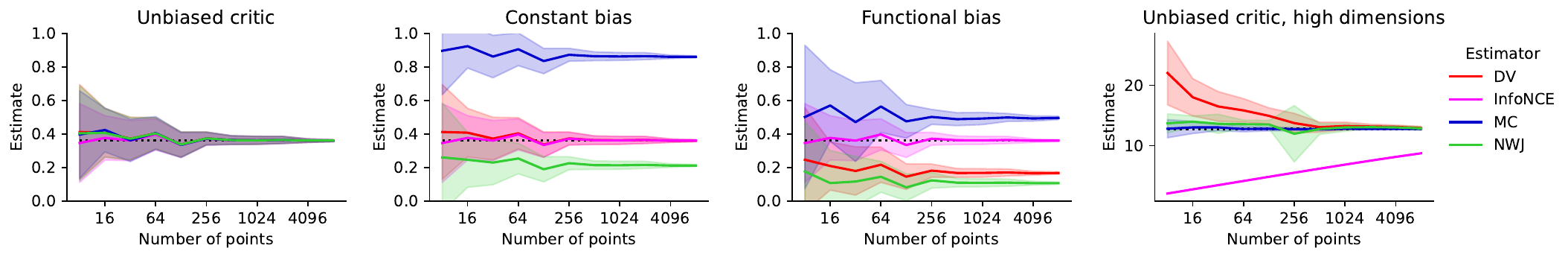}
\caption{%
Estimation of mutual information using a function approximating PMI as a function of sample size for Monte Carlo (MC), InfoNCE, Donsker-Varadhan and NWJ losses.
From left to right: true PMI function from Fig.~\ref{fig:critics-pmi-plotted} is used, a constant bias is added, a functional bias is added. The rightmost plot: true PMI function for a different, high-dimensional problem is used.
\label{fig:critics-integration}
}
\end{figure*}

\textbf{Robustness to misspecification of the critic}~~In the next experiment (Fig.~\ref{fig:critics-integration}), we  evaluated $I_\text{DV}$, $I_\text{NWJ}$, $I_\text{InfoNCE}$, and the simple Monte Carlo estimator (MC) for increasing sample size $N$ where we provided them with an ``oracle'' critic $f$, which we varied as follows: $f(x, y) - \PMI_{XY}(x, y) \in \{0,  c, \sin(x^2) \}$. 
Using a perfect critic, $f=\PMI_{XY}$, we saw that all estimators performed well in this problem.
When a constant bias was added, $f = \PMI_{XY} + c$, the Monte Carlo and NWJ estimators became biased.
Finally, InfoNCE was the only unbiased estimator when a functional degree of freedom was added, $f(x, y) = \PMI_{XY}(x, y) + \sin(x^2)$, confirming the known theoretical limitations of the estimators.

\textbf{Increasing the Dimensionality } In low dimensions variational approximations perform comparably to the Monte Carlo estimator when the correct critic function is used, and are more robust to misspecification due to additional degrees of freedom.
However, this additional robustness results in a bias in higher dimensions (Fig.~\ref{fig:critics-integration}, right panel).
We simulated a data set using a multivariate normal distribution with 25 strongly interacting components, $\mathrm{corr}(X_i, Y_i) = 0.8$, which 
results in $\mutualinformation{X}{Y}\approx 12.8$.
Although the Monte Carlo estimator and NWJ estimator (which are not robust to additional degrees of freedom) provided estimates close to the ground-truth (with Monte Carlo slightly outperforming NWJ in both bias and variance), Donsker--Varadhan estimator resulted in a strong positive bias for small batch sizes and InfoNCE had a negative bias with very low variance which has been observed in previous studies and has been related to the $O(\log N)$ behavior of this approximate lower bound~\citep{oord:18:infonce, song:20:understanding}.

\subsection{Model-based mutual information estimation}
\label{section:our-new-estimators}

\begin{figure*}[t]
\centering
\includegraphics[width=\linewidth]{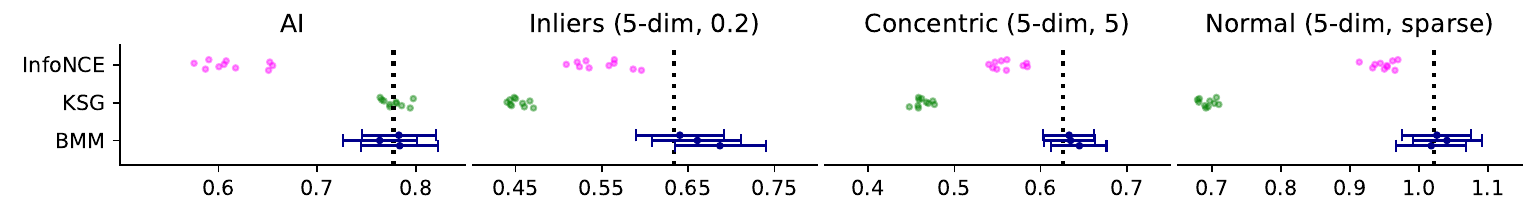}%
\vspace{-0.3cm}
\caption{\CoolModel{}s provide uncertainty estimates in the form of credible intervals and, when domain expertise is available, can be more accurate than generic black-box alternatives.
We plot the posterior mean and an interval created using 10th and 90th percentile of the posterior distribution. Ground truth is represented with a dashed line.
\label{fig:gmm-vs-others}
}
\end{figure*}

\begin{figure*}[t]
\centering
\includegraphics[width=0.5\linewidth]{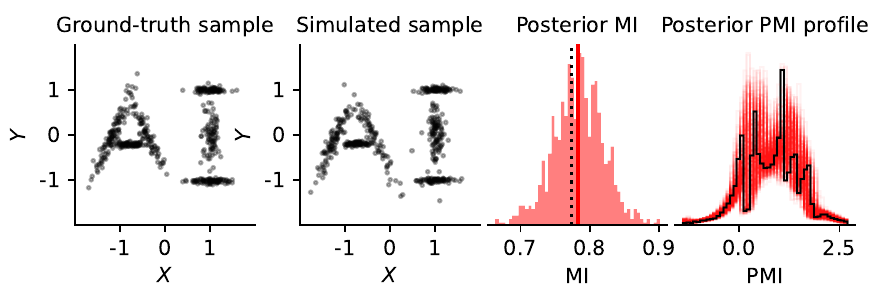}%
\includegraphics[width=0.5\linewidth]{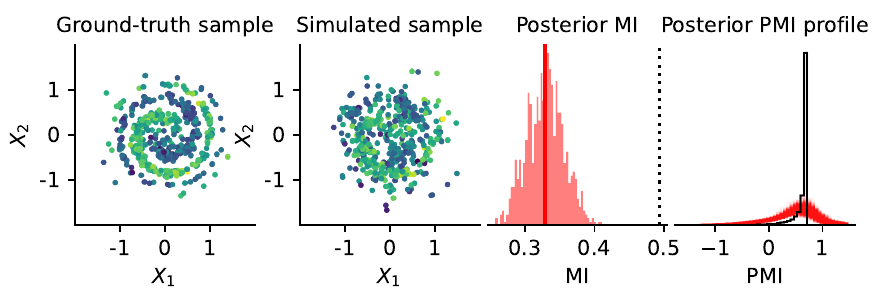}
\vspace{-0.5cm}
\caption{First four panels: data generated according to the AI distribution, data generated according to a single MCMC sample, posterior distribution of the mutual information (red line denotes posterior mean and black line denotes the ground-truth value), and the posterior distribution of the PMI profile (black curve denotes the ground-truth value). For a well-specified model the posterior reflects epistemic uncertainty well. The next four panels represent an experiment with a misspecified model fitted to the Galaxy distribution. Misspecification results in biased inferences (two right-most panels) with miscalibrated posterior, although it can be diagnosed by comparing true data with the data generated from the model.
\label{fig:demonstration-gmm}
}
\end{figure*}

\newcommand{\modelmutualinformation}[1]{\mathbf{I}(P_{#1})}

As the final application of \CoolModel{}s, we investigate the problem of Bayesian, model-based  MI estimation.

\textbf{The general idea}~~Consider a family of distributions $\{P_\theta \mid \theta\in \Theta \}$, such that for every value of the parameter vector $\theta$, the mutual information contained in $P_\theta$ is known.
That is, for $(X_\theta, Y_\theta)\sim P_\theta$ we can compute $\mutualinformation{X_\theta}{Y_\theta}$, which we denote by $\modelmutualinformation{\theta}$.
If the observed data are modeled as i.i.d.~r.v.s~$(X_i, Y_i)\sim P_\theta$, one can estimate $\theta$ and use $\modelmutualinformation{\theta}$ as an estimate of the mutual information.
For example, \citet{Brillinger-2004} interprets the CCA-based MI estimator of \citet{kay-elliptic} as the plug-in estimator $\modelmutualinformation{\hat \theta}$, where 
$\hat \theta$ is the maximum likelihood estimate (MLE) in the model in which $P_\theta$ is a multivariate normal distribution.
For discrete data it is well-known the plug-in estimator is biased~\citep{goebel:05:chisq,suzuki:16:jic,Bu-2018-Estimation_of_KL_divergence,marx:19:scci} and one particular way of regularizing the estimate is to approximate the Bayesian posterior over the MI~\citep{Hutter-2001}.
We note that one can construct a Bayesian alternative for the CCA-based estimator by using a prior on the covariance matrix~\citep{LKJ-prior-2009} and Markov chain Monte Carlo~\citep[Ch.~11]{Gelman-2013-BayesianDataAnalysis} to provide samples $\theta_1, \dotsc, \theta_M$ from the posterior $P(\theta\mid X_1, Y_1, \dotsc, X_N, Y_N)$ and construct a sample-based approximation to the posterior on mutual information, $\modelmutualinformation{\theta_1}, \dotsc, \modelmutualinformation{\theta_M}$.

More generally, consider a statistical model $\{P_\theta \mid \theta\in \Theta\}$ such that all $P_\theta$ are \CoolModel{}s, and a prior $P(\theta)$.
One can then apply Bayesian inference algorithms to construct a sample $\theta_1, \dotsc, \theta_M$ from the posterior.
Although the exact values for $\modelmutualinformation{\theta_1}, \dotsc, \modelmutualinformation{\theta_M}$ are not available, they can be approximated as in Sec.~\ref{section:pmi-profiles-theory}.
Hence, we can construct an approximate posterior distribution and quantify epistemic uncertainty of the estimate (Fig.~\ref{fig:demonstration-gmm}).
Since \CoolModel{}s include both mixture models and normalizing flows, this approach could, in principle, be used as a general technique for building model-based mutual information estimators, where the generative model can be constructed using domain knowledge.
Moreover, this is the \emph{first Bayesian estimator of the PMI profile}: as all distributions $P_{\theta_m}$ are \CoolModel{}s, one can %
construct $M$ histograms (or CDFs) approximating the profile.

\textbf{Proof of concept}~~To illustrate this approach we implemented a sparse Gaussian mixture model (see App.~\ref{appendix:gaussian-mixtures}) in NumPyro~\citep{Phan-2019-NumPyro} and used the NUTS sampler~\citep{Hoffman-2014-NUTS-sampler} to obtain the Bayesian posterior for selected low-dimensional problems from the proposed benchmark (see Table~\ref{table:bmm-results} in App.~\ref{appendix:bmm-performance-on-benchmark}).
We visualise the predictions for four distributions in Fig.~\ref{fig:gmm-vs-others}: we stress that other methods can only provide a single point estimate (for each given data set), while \CoolModel{}s can provide uncertainty quantification in the form of posterior distribution, visualised by credible intervals.
Interestingly, even for low-dimensional problems where KSG has been considered state-of-the-art~\citep{beyond-normal-2023}, the \CoolModel{}-based estimator provided better estimates.

\textbf{On the role of assumptions}~~All parametric statistical methods rely on assumptions~\citep[Sec.~2]{Gelman2020-BayesianWorkflow}.
In particular, Bayesian inference can result in unreliable inferences when the model is misspecified, i.e., the true data-generating process $P_{XY}$ does not belong to the assumed family $P_\theta$~\citep{Watson-Holmes-2014}.
To illustrate the role of the assumptions, we obtained a Bayesian posterior conditioned on 500 data points from the AI and Galaxy distributions (see Fig.~\ref{fig:demonstration-gmm}).
We see that the for the AI distribution, the posterior is concentrated around the ground-truth mutual information value and the ground-truth PMI profile is well-approximated by the posterior samples.

The misspecification in the Galaxy distribution can be immediately diagnosed via posterior predictive checking~\citep[Ch.~6]{Gelman-2013-BayesianDataAnalysis}: in Fig.~\ref{fig:demonstration-gmm} we see that a data sample simulated from the model looks substantially different from the observed data, meaning that the model did not capture the distribution well (cf.~App.~\ref{appendix:gaussian-mixtures}).
This provides a clear indication that the estimates should not be trusted: most of the probability mass of the Bayesian posterior is far from the ground-truth mutual
 information.
Similarly, the posterior on the PMI profile is biased.
We therefore recommend using model-checking techniques such as posterior predictive checks and discriminator-based validation~\citep[Sec.~4]{Sankaran-Holmes-GenerativeModels} to understand the deficiencies of the model.
To mitigate the risk of overfitting, which can bias the model (see App.~\ref{appendix:gaussian-mixtures}), we recommend cross-validation~\citep{Piironen2017-cross-validation}.

In summary, although \CoolModel{}s are not generic estimators applicable to problems without available domain knowledge, in some situations, principled Bayesian modeling can be beneficial to obtain better estimates along with uncertainty quantification.

\section{Conclusion}
\label{section:discussion-conclusion}

In this article, we have studied pointwise mutual information profiles (PMI profiles), determining them analytically for multivariate normal distributions (Theorem~\ref{prop:pmi-profile-multinormal}), and proposed the family of \CoolModelFullName{}s (\CoolModel{}s), which include multivariate normal and Student distributions, mixture models and normalizing flows, for which the PMI profile can be approximated using Monte Carlo methods.
We showed how \CoolModel{}s can be used to provide novel benchmark tasks to test MI estimators and calculate MI transmitted through a communication channel in the presence of inliers and outliers (which can be used in the experimental design in electrical and biological sciences).
\CoolModel{}s allowed us to study how well neural critics approximate PMI profiles.
Finally, we showed how Bayesian estimates of MI between continuous r.v.s~can be performed using \CoolModel{}s; additionally, the proposed method estimates the PMI profiles. 
Although this approach is not universal, we find it suitable for problems with precise domain knowledge available (which can be used to construct the generative distribution $P_\theta$ and provide the prior $P(\theta)$) and in which uncertainty quantification is desired.

\paragraph{Limitations and further research}
Generalized PMI profiles (see~App.~\ref{appendix:pmi-profile-hard-maths}) allow one to provide a wide family of invariants, which can be related to $f$-divergences~\citep{Nowozin-2016-f-GAN}.
We leave the exploration of potential ramifications of this generalization to future work.

Although in Sec.~\ref{section:our-new-estimators} we propose \CoolModel{}s as a basis for model-based Bayesian inference of mutual information, model misspecification~\citep{Watson-Holmes-2014} as well as underfitting and overfitting can bias inference.
We therefore recommend, as usual in Bayesian statistics, to validate the model~\citep{Gelman2020-BayesianWorkflow}.
Overfitting and model misspecification may be more difficult to detect especially in high-dimensional situations or when more expressive models are constructed, for example using normalizing flows, as Bayesian inference for models involving neural networks is known to be challenging~\citep{Izmailov-BNN_posteriors}.
We hypothesise that tempering the likelihood~\citep{Gruenwald-vanOmmen} or employing alternative inference schemes~\citep{Lyddon-Nonparametric_learning} may improve robustness to model misspecification, although we leave this topic to future work.

In Sec.~\ref{section:outliers}, we study a~channel in which the probability of failure $\alpha$ is constant and independent of the input variable $X$.
However, in both biological and electric systems this assumption may be broken: for large values of $x$ the system may result in an outlier more easily.
Unfortunately, cases in which $\alpha$ depends on $x$ are not easily modeled by \CoolModel{}s.
Moreover, \CoolModel{}s do not allow modeling additive noise.
Although adding Gaussian noise can be modeled as a continuous mixture, marginalizing the latent variable to obtain an analytic form of $\log p_{Y'}(y)$ is not possible.
Employing unbiased estimators, such as SUMO~\citep{Luo-SUMO-unbiased}, may hence become an interesting direction for future research.

We hope that our analysis of pointwise mutual information profiles provides a fresh perspective on mutual information estimators, and that \CoolModel{}s will prove to be a useful tool for benchmark design and will renew interest in Bayesian approaches to mutual information estimation.

\paragraph{Reproducibility} All experiments have been implemented as Snakemake workflows~\citep{Moelder-2021-snakemake}.
The code attached to the submission allows one to reproduce all experimental results and figures. 
We commit to making the whole code publicly available upon acceptance.

\ifx\ArXiV\undefined {}
\else {
    \paragraph{Acknowledgments} %
    FG was supported by the OPUS 18 grant operated by Narodowe Centrum Nauki (National Science Centre, Poland) 2019/35/B/NZ2/03898.
    PC and AM were supported by a fellowship from the ETH AI Center.
    We would like to thank Paweł Nał{\k e}cz-Jawecki and Julia Kostin for helpful suggestions on the manuscript.
}
\fi

\bibliography{main}

\begin{thebibliography}{56}
\providecommand{\natexlab}[1]{#1}
\providecommand{\url}[1]{\texttt{#1}}
\expandafter\ifx\csname urlstyle\endcsname\relax
  \providecommand{\doi}[1]{doi: #1}\else
  \providecommand{\doi}{doi: \begingroup \urlstyle{rm}\Url}\fi

\bibitem[Allen \& Hospedales(2019)Allen and Hospedales]{allen-2019}
Allen, C. and Hospedales, T.
\newblock Analogies explained: Towards understanding word embeddings.
\newblock In Chaudhuri, K. and Salakhutdinov, R. (eds.), \emph{Proceedings of
  the 36th International Conference on Machine Learning}, volume~97 of
  \emph{Proceedings of Machine Learning Research}, pp.\  223--231. PMLR, 09--15
  Jun 2019.
\newblock URL \url{https://proceedings.mlr.press/v97/allen19a.html}.

\bibitem[Belghazi et~al.(2018)Belghazi, Baratin, Rajeshwar, Ozair, Bengio,
  Courville, and Hjelm]{belghazi:18:mine}
Belghazi, M.~I., Baratin, A., Rajeshwar, S., Ozair, S., Bengio, Y., Courville,
  A., and Hjelm, D.
\newblock Mutual information neural estimation.
\newblock In \emph{International conference on machine learning}, pp.\
  531--540. PMLR, 2018.

\bibitem[Bradbury et~al.(2018)Bradbury, Frostig, Hawkins, Johnson, Leary,
  Maclaurin, Necula, Paszke, Vander{P}las, Wanderman-{M}ilne, and Zhang]{JAX}
Bradbury, J., Frostig, R., Hawkins, P., Johnson, M.~J., Leary, C., Maclaurin,
  D., Necula, G., Paszke, A., Vander{P}las, J., Wanderman-{M}ilne, S., and
  Zhang, Q.
\newblock {JAX}: composable transformations of {P}ython+{N}um{P}y programs,
  2018.
\newblock URL \url{http://github.com/google/jax}.

\bibitem[Brillinger(2004)]{Brillinger-2004}
Brillinger, D.~R.
\newblock Some data analyses using mutual information.
\newblock \emph{Brazilian Journal of Probability and Statistics}, 18\penalty0
  (2):\penalty0 163--182, 2004.
\newblock ISSN 01030752, 23176199.
\newblock URL \url{http://www.jstor.org/stable/43601047}.

\bibitem[Bu et~al.(2018)Bu, Zou, Liang, and
  Veeravalli]{Bu-2018-Estimation_of_KL_divergence}
Bu, Y., Zou, S., Liang, Y., and Veeravalli, V.~V.
\newblock Estimation of kl divergence: Optimal minimax rate.
\newblock \emph{IEEE Transactions on Information Theory}, 64\penalty0
  (4):\penalty0 2648--2674, 2018.

\bibitem[Carrara \& Ernst(2023)Carrara and Ernst]{Carrara2023-KSG-MCTS}
Carrara, N. and Ernst, J.
\newblock Using {Monte Carlo} tree search to calculate mutual information in
  high dimensions, 2023.

\bibitem[Cellucci et~al.(2005)Cellucci, Albano, and
  Rapp]{Cellucci-HistogramsMI}
Cellucci, C.~J., Albano, A.~M., and Rapp, P.~E.
\newblock Statistical validation of mutual information calculations: Comparison
  of alternative numerical algorithms.
\newblock \emph{Physical review E}, 71\penalty0 (6):\penalty0 066208, 2005.

\bibitem[Czy\.{z} et~al.(2023)Czy\.{z}, Grabowski, Vogt, Beerenwinkel, and
  Marx]{beyond-normal-2023}
Czy\.{z}, P., Grabowski, F., Vogt, J., Beerenwinkel, N., and Marx, A.
\newblock Beyond normal: On the evaluation of mutual information estimators.
\newblock In Oh, A., Neumann, T., Globerson, A., Saenko, K., Hardt, M., and
  Levine, S. (eds.), \emph{Advances in Neural Information Processing Systems},
  volume~36, pp.\  16957--16990. Curran Associates, Inc., 2023.

\bibitem[Darbellay \& Vajda(1999)Darbellay and Vajda]{Darbellay-HistogramsMI}
Darbellay, G.~A. and Vajda, I.
\newblock Estimation of the information by an adaptive partitioning of the
  observation space.
\newblock \emph{IEEE Transactions on Information Theory}, 45\penalty0
  (4):\penalty0 1315--1321, 1999.

\bibitem[Dillon et~al.(2017)Dillon, Langmore, Tran, Brevdo, Vasudevan, Moore,
  Patton, Alemi, Hoffman, and Saurous]{Dillon-TensorFlowProbability}
Dillon, J.~V., Langmore, I., Tran, D., Brevdo, E., Vasudevan, S., Moore, D.,
  Patton, B., Alemi, A., Hoffman, M.~D., and Saurous, R.~A.
\newblock Tensorflow distributions.
\newblock \emph{CoRR}, abs/1711.10604, 2017.
\newblock URL \url{http://arxiv.org/abs/1711.10604}.

\bibitem[Flegal et~al.(2008)Flegal, Haran, and
  Jones]{Flegal-Monte_Carlo-Standard_Error}
Flegal, J.~M., Haran, M., and Jones, G.~L.
\newblock {Markov Chain Monte Carlo}: {C}an we trust the third significant
  figure?
\newblock \emph{Statistical Science}, 23\penalty0 (2):\penalty0 250 -- 260,
  2008.
\newblock \doi{10.1214/08-STS257}.
\newblock URL \url{https://doi.org/10.1214/08-STS257}.

\bibitem[Fr{\"u}hwirth-Schnatter \& Malsiner-Walli(2019)Fr{\"u}hwirth-Schnatter
  and Malsiner-Walli]{Fruehwirth-From_here_to_infinity}
Fr{\"u}hwirth-Schnatter, S. and Malsiner-Walli, G.
\newblock From here to infinity: sparse finite versus {D}irichlet process
  mixtures in model-based clustering.
\newblock \emph{Advances in Data Analysis and Classification}, 13\penalty0
  (1):\penalty0 33--64, Mar 2019.
\newblock ISSN 1862-5355.
\newblock \doi{10.1007/s11634-018-0329-y}.
\newblock URL \url{https://doi.org/10.1007/s11634-018-0329-y}.

\bibitem[Gao et~al.(2017)Gao, Kannan, Oh, and
  Viswanath]{Gao-2017-DiscreteContinuous}
Gao, W., Kannan, S., Oh, S., and Viswanath, P.
\newblock Estimating mutual information for discrete-continuous mixtures.
\newblock In Guyon, I., Luxburg, U.~V., Bengio, S., Wallach, H., Fergus, R.,
  Vishwanathan, S., and Garnett, R. (eds.), \emph{Advances in Neural
  Information Processing Systems}, volume~30. Curran Associates, Inc., 2017.

\bibitem[Gelman et~al.(2013)Gelman, Carlin, Stern, Dunson, Vehtari, and
  Rubin]{Gelman-2013-BayesianDataAnalysis}
Gelman, A., Carlin, J., Stern, H., Dunson, D., Vehtari, A., and Rubin, D.
\newblock \emph{{B}ayesian Data Analysis, Third Edition}.
\newblock Chapman \& Hall/CRC Texts in Statistical Science. Taylor \& Francis,
  2013.
\newblock ISBN 9781439840955.
\newblock URL \url{https://books.google.pl/books?id=ZXL6AQAAQBAJ}.

\bibitem[Gelman et~al.(2020)Gelman, Vehtari, Simpson, Margossian, Carpenter,
  Yao, Kennedy, Gabry, Bürkner, and Modrák]{Gelman2020-BayesianWorkflow}
Gelman, A., Vehtari, A., Simpson, D., Margossian, C.~C., Carpenter, B., Yao,
  Y., Kennedy, L., Gabry, J., Bürkner, P.-C., and Modrák, M.
\newblock {B}ayesian workflow, 2020.

\bibitem[Goebel et~al.(2005)Goebel, Dawy, Hagenauer, and
  Mueller]{goebel:05:chisq}
Goebel, B., Dawy, Z., Hagenauer, J., and Mueller, J.
\newblock An approximation to the distribution of finite sample size mutual
  information estimates.
\newblock In \emph{IEEE International Conference on Communications, 2005. ICC
  2005. 2005}, volume~2, pp.\  1102--1106 Vol. 2, 2005.

\bibitem[Grabowski et~al.(2019)Grabowski, Czyż, Kochańczyk, and
  Lipniacki]{Grabowski-2019-systems-biology}
Grabowski, F., Czyż, P., Kochańczyk, M., and Lipniacki, T.
\newblock Limits to the rate of information transmission through the {MAPK}
  pathway.
\newblock \emph{Journal of The Royal Society Interface}, 16\penalty0
  (152):\penalty0 20180792, 2019.
\newblock \doi{10.1098/rsif.2018.0792}.
\newblock URL
  \url{https://royalsocietypublishing.org/doi/abs/10.1098/rsif.2018.0792}.

\bibitem[Gr{\"u}nwald \& van Ommen(2017)Gr{\"u}nwald and van
  Ommen]{Gruenwald-vanOmmen}
Gr{\"u}nwald, P. and van Ommen, T.
\newblock {Inconsistency of Bayesian Inference for Misspecified Linear Models,
  and a Proposal for Repairing It}.
\newblock \emph{Bayesian Analysis}, 12\penalty0 (4):\penalty0 1069 -- 1103,
  2017.
\newblock \doi{10.1214/17-BA1085}.
\newblock URL \url{https://doi.org/10.1214/17-BA1085}.

\bibitem[Haussler \& Opper(1997)Haussler and
  Opper]{Haussler-1997-mixtures-entropy}
Haussler, D. and Opper, M.
\newblock {Mutual information, metric entropy and cumulative relative entropy
  risk}.
\newblock \emph{The Annals of Statistics}, 25\penalty0 (6):\penalty0 2451 --
  2492, 1997.
\newblock \doi{10.1214/aos/1030741081}.
\newblock URL \url{https://doi.org/10.1214/aos/1030741081}.

\bibitem[Hoffman \& Gelman(2014)Hoffman and Gelman]{Hoffman-2014-NUTS-sampler}
Hoffman, M.~D. and Gelman, A.
\newblock The {No-U-Turn Sampler}: Adaptively setting path lengths in
  {Hamiltonian Monte Carlo}.
\newblock \emph{Journal of Machine Learning Research}, 15\penalty0
  (47):\penalty0 1593--1623, 2014.
\newblock URL \url{http://jmlr.org/papers/v15/hoffman14a.html}.

\bibitem[Hutter(2001)]{Hutter-2001}
Hutter, M.
\newblock Distribution of mutual information.
\newblock In Dietterich, T., Becker, S., and Ghahramani, Z. (eds.),
  \emph{Advances in Neural Information Processing Systems}, volume~14. MIT
  Press, 2001.

\bibitem[Imhof(1961)]{Imhof-Generalized-chi-squared}
Imhof, J.~P.
\newblock Computing the distribution of quadratic forms in normal variables.
\newblock \emph{Biometrika}, 48\penalty0 (3/4):\penalty0 419--426, 1961.
\newblock ISSN 00063444.
\newblock URL \url{http://www.jstor.org/stable/2332763}.

\bibitem[Izmailov et~al.(2021)Izmailov, Vikram, Hoffman, and
  Wilson]{Izmailov-BNN_posteriors}
Izmailov, P., Vikram, S., Hoffman, M.~D., and Wilson, A. G.~G.
\newblock What are {B}ayesian neural network posteriors really like?
\newblock In Meila, M. and Zhang, T. (eds.), \emph{Proceedings of the 38th
  International Conference on Machine Learning}, volume 139 of
  \emph{Proceedings of Machine Learning Research}, pp.\  4629--4640. PMLR,
  18--24 Jul 2021.
\newblock URL \url{https://proceedings.mlr.press/v139/izmailov21a.html}.

\bibitem[Jendoubi \& Strimmer(2019)Jendoubi and Strimmer]{Jendoubi2019-CCA}
Jendoubi, T. and Strimmer, K.
\newblock A whitening approach to probabilistic canonical correlation analysis
  for omics data integration.
\newblock \emph{BMC Bioinformatics}, 20\penalty0 (1):\penalty0 15, Jan 2019.
\newblock ISSN 1471-2105.
\newblock \doi{10.1186/s12859-018-2572-9}.
\newblock URL \url{https://doi.org/10.1186/s12859-018-2572-9}.

\bibitem[Kay(1992)]{kay-elliptic}
Kay, J.
\newblock Feature discovery under contextual supervision using mutual
  information.
\newblock In \emph{IJCNN International Joint Conference on Neural Networks},
  volume~4, pp.\  79--84, 1992.
\newblock \doi{10.1109/IJCNN.1992.227286}.

\bibitem[Khan et~al.(2007)Khan, Bandyopadhyay, Ganguly, Saigal, Erickson~III,
  Protopopescu, and Ostrouchov]{khan:07:relative}
Khan, S., Bandyopadhyay, S., Ganguly, A.~R., Saigal, S., Erickson~III, D.~J.,
  Protopopescu, V., and Ostrouchov, G.
\newblock Relative performance of mutual information estimation methods for
  quantifying the dependence among short and noisy data.
\newblock \emph{Physical Review E}, 76\penalty0 (2):\penalty0 026209, 2007.

\bibitem[Kobyzev et~al.(2021)Kobyzev, Prince, and
  Brubaker]{Kobyzev-Normalizing_flows}
Kobyzev, I., Prince, S.~D., and Brubaker, M.~A.
\newblock Normalizing flows: An introduction and review of current methods.
\newblock \emph{IEEE Transactions on Pattern Analysis \& Machine Intelligence},
  43\penalty0 (11):\penalty0 3964--3979, nov 2021.
\newblock ISSN 1939-3539.
\newblock \doi{10.1109/TPAMI.2020.2992934}.

\bibitem[Koehler et~al.(2009)Koehler, Brown, and
  Haneuse]{Koehler-Monte_Carlo_error}
Koehler, E., Brown, E., and Haneuse, S.-P.
\newblock On the assessment of {Monte Carlo} error in simulation-based
  statistical analyses.
\newblock \emph{The American Statistician}, 63\penalty0 (2):\penalty0 155--162,
  2009.
\newblock \doi{10.1198/tast.2009.0030}.

\bibitem[Kolchinsky \& Tracey(2017)Kolchinsky and
  Tracey]{Kolchinsky-2017-mixtures-entropy}
Kolchinsky, A. and Tracey, B.~D.
\newblock Estimating mixture entropy with pairwise distances.
\newblock \emph{Entropy}, 19\penalty0 (7), 2017.
\newblock ISSN 1099-4300.
\newblock \doi{10.3390/e19070361}.
\newblock URL \url{https://www.mdpi.com/1099-4300/19/7/361}.

\bibitem[Kraskov et~al.(2004)Kraskov, St{\"o}gbauer, and
  Grassberger]{kraskov:04:ksg}
Kraskov, A., St{\"o}gbauer, H., and Grassberger, P.
\newblock Estimating mutual information.
\newblock \emph{Physical Review E}, 69\penalty0 (6):\penalty0 066138, 2004.

\bibitem[Lee(2012)]{Lee-2003-SmoothManifolds}
Lee, J.
\newblock \emph{Introduction to Smooth Manifolds}.
\newblock Graduate Texts in Mathematics. Springer, 2nd edition, 2012.
\newblock ISBN 9781441999825.
\newblock URL \url{https://doi.org/10.1007/978-1-4419-9982-5}.

\bibitem[Lewandowski et~al.(2009)Lewandowski, Kurowicka, and
  Joe]{LKJ-prior-2009}
Lewandowski, D., Kurowicka, D., and Joe, H.
\newblock Generating random correlation matrices based on vines and extended
  onion method.
\newblock \emph{Journal of Multivariate Analysis}, 100\penalty0 (9):\penalty0
  1989--2001, 2009.
\newblock ISSN 0047-259X.
\newblock \doi{https://doi.org/10.1016/j.jmva.2009.04.008}.
\newblock URL
  \url{https://www.sciencedirect.com/science/article/pii/S0047259X09000876}.

\bibitem[Luo et~al.(2020)Luo, Beatson, Norouzi, Zhu, Duvenaud, Adams, and
  Chen]{Luo-SUMO-unbiased}
Luo, Y., Beatson, A., Norouzi, M., Zhu, J., Duvenaud, D., Adams, R.~P., and
  Chen, R. T.~Q.
\newblock {SUMO:} unbiased estimation of log marginal probability for latent
  variable models.
\newblock In \emph{8th International Conference on Learning Representations,
  {ICLR} 2020, Addis Ababa, Ethiopia, April 26-30, 2020}. OpenReview.net, 2020.
\newblock URL \url{https://openreview.net/forum?id=SylkYeHtwr}.

\bibitem[Lyddon et~al.(2018)Lyddon, Walker, and
  Holmes]{Lyddon-Nonparametric_learning}
Lyddon, S., Walker, S., and Holmes, C.~C.
\newblock Nonparametric learning from bayesian models with randomized objective
  functions.
\newblock In Bengio, S., Wallach, H., Larochelle, H., Grauman, K.,
  Cesa-Bianchi, N., and Garnett, R. (eds.), \emph{Advances in Neural
  Information Processing Systems}, volume~31. Curran Associates, Inc., 2018.

\bibitem[Marx \& Vreeken(2019)Marx and Vreeken]{marx:19:scci}
Marx, A. and Vreeken, J.
\newblock Testing conditional independence on discrete data using stochastic
  complexity.
\newblock In \emph{The 22nd International Conference on Artificial Intelligence
  and Statistics}, pp.\  496--505. PMLR, 2019.

\bibitem[Marx et~al.(2021)Marx, Yang, and van Leeuwen]{marx:21:myl}
Marx, A., Yang, L., and van Leeuwen, M.
\newblock Estimating conditional mutual information for discrete-continuous
  mixtures using multi-dimensional adaptive histograms.
\newblock In \emph{Proceedings of the SIAM International Conference on Data
  Mining (SDM)}, pp.\  387--395, 2021.

\bibitem[McAllester \& Stratos(2020)McAllester and
  Stratos]{McAllester-2020-FormalLimitations}
McAllester, D. and Stratos, K.
\newblock Formal limitations on the measurement of mutual information.
\newblock In Chiappa, S. and Calandra, R. (eds.), \emph{Proceedings of the
  Twenty Third International Conference on Artificial Intelligence and
  Statistics}, volume 108 of \emph{Proceedings of Machine Learning Research},
  pp.\  875--884. PMLR, 26--28 Aug 2020.
\newblock URL \url{https://proceedings.mlr.press/v108/mcallester20a.html}.

\bibitem[Murphy(2023)]{Murphy-ProbabilisticMachineLearning-AdvancedTopics}
Murphy, K.~P.
\newblock \emph{Probabilistic Machine Learning: Advanced Topics}.
\newblock MIT Press, 2023.
\newblock URL \url{http://probml.github.io/book2}.

\bibitem[Mölder et~al.(2021)Mölder, Jablonski, Letcher, Hall, Tomkins-Tinch,
  Sochat, Forster, Lee, Twardziok, Kanitz, Wilm, Holtgrewe, Rahmann, Nahnsen,
  and Köster]{Moelder-2021-snakemake}
Mölder, F., Jablonski, K., Letcher, B., Hall, M., Tomkins-Tinch, C., Sochat,
  V., Forster, J., Lee, S., Twardziok, S., Kanitz, A., Wilm, A., Holtgrewe, M.,
  Rahmann, S., Nahnsen, S., and Köster, J.
\newblock Sustainable data analysis with {S}nakemake.
\newblock \emph{F1000Research}, 10\penalty0 (33), 2021.
\newblock \doi{10.12688/f1000research.29032.1}.

\bibitem[Na{\l}{\k e}cz-Jawecki et~al.(2023)Na{\l}{\k e}cz-Jawecki, Gagliardi,
  Kochańczyk, Dessauges, Pertz, and Lipniacki]{Nalecz-Jawecki-2023}
Na{\l}{\k e}cz-Jawecki, P., Gagliardi, P.~A., Kochańczyk, M., Dessauges, C.,
  Pertz, O., and Lipniacki, T.
\newblock The {MAPK}/{ERK} channel capacity exceeds 6 bit/hour.
\newblock \emph{PLOS Computational Biology}, 19\penalty0 (5):\penalty0 1--21,
  05 2023.
\newblock \doi{10.1371/journal.pcbi.1011155}.
\newblock URL \url{https://doi.org/10.1371/journal.pcbi.1011155}.

\bibitem[Nguyen et~al.(2007)Nguyen, Wainwright, and Jordan]{NWJ2007}
Nguyen, X., Wainwright, M.~J., and Jordan, M.
\newblock Estimating divergence functionals and the likelihood ratio by
  penalized convex risk minimization.
\newblock In Platt, J., Koller, D., Singer, Y., and Roweis, S. (eds.),
  \emph{Advances in Neural Information Processing Systems}, volume~20. Curran
  Associates, Inc., 2007.

\bibitem[Nowozin et~al.(2016)Nowozin, Cseke, and Tomioka]{Nowozin-2016-f-GAN}
Nowozin, S., Cseke, B., and Tomioka, R.
\newblock f-{GAN}: Training generative neural samplers using variational
  divergence minimization.
\newblock In Lee, D., Sugiyama, M., Luxburg, U., Guyon, I., and Garnett, R.
  (eds.), \emph{Advances in Neural Information Processing Systems}, volume~29.
  Curran Associates, Inc., 2016.
\newblock URL
  \url{https://proceedings.neurips.cc/paper_files/paper/2016/file/cedebb6e872f539bef8c3f919874e9d7-Paper.pdf}.

\bibitem[Oord et~al.(2018)Oord, Li, and Vinyals]{oord:18:infonce}
Oord, A. v.~d., Li, Y., and Vinyals, O.
\newblock Representation learning with contrastive predictive coding.
\newblock \emph{arXiv preprint arXiv:1807.03748}, 2018.

\bibitem[Papamakarios et~al.(2021)Papamakarios, Nalisnick, Rezende, Mohamed,
  and Lakshminarayanan]{Papamakarios-Normalizing_flows}
Papamakarios, G., Nalisnick, E., Rezende, D.~J., Mohamed, S., and
  Lakshminarayanan, B.
\newblock Normalizing flows for probabilistic modeling and inference.
\newblock \emph{Journal of Machine Learning Research}, 22\penalty0
  (57):\penalty0 1--64, 2021.
\newblock URL \url{http://jmlr.org/papers/v22/19-1028.html}.

\bibitem[Petersen \& Pedersen(2012)Petersen and
  Pedersen]{Petersen-MatrixCookbook}
Petersen, K.~B. and Pedersen, M.~S.
\newblock The matrix cookbook, November 2012.
\newblock URL \url{http://www2.compute.dtu.dk/pubdb/pubs/3274-full.html}.
\newblock Version 20121115.

\bibitem[Phan et~al.(2019)Phan, Pradhan, and Jankowiak]{Phan-2019-NumPyro}
Phan, D., Pradhan, N., and Jankowiak, M.
\newblock Composable effects for flexible and accelerated probabilistic
  programming in numpyro.
\newblock \emph{arXiv preprint arXiv:1912.11554}, 2019.

\bibitem[Piironen \& Vehtari(2017)Piironen and
  Vehtari]{Piironen2017-cross-validation}
Piironen, J. and Vehtari, A.
\newblock Comparison of {B}ayesian predictive methods for model selection.
\newblock \emph{Statistics and Computing}, 27\penalty0 (3):\penalty0 711--735,
  May 2017.
\newblock ISSN 1573-1375.
\newblock \doi{10.1007/s11222-016-9649-y}.
\newblock URL \url{https://doi.org/10.1007/s11222-016-9649-y}.

\bibitem[Pinsker \& Feinstein(1964)Pinsker and
  Feinstein]{pinsker1964information}
Pinsker, M. and Feinstein, A.
\newblock \emph{Information and Information Stability of Random Variables and
  Processes}.
\newblock Holden-Day series in time series analysis. Holden-Day, 1964.
\newblock ISBN 9780816268047.

\bibitem[Politis(1991)]{politis:91:entropy-mixture}
Politis, D.~N.
\newblock \emph{On the entropy of a mixture distribution}.
\newblock Purdue University. Department of Statistics, 1991.

\bibitem[Polyanskiy \& Wu(2022)Polyanskiy and
  Wu]{Polyanskiy-Wu-Information-Theory}
Polyanskiy, Y. and Wu, Y.
\newblock \emph{Information Theory: From Coding to Learning}.
\newblock Cambridge University Press, 2022.
\newblock Book draft.

\bibitem[Poole et~al.(2019)Poole, Ozair, Van Den~Oord, Alemi, and
  Tucker]{poole2019variational}
Poole, B., Ozair, S., Van Den~Oord, A., Alemi, A., and Tucker, G.
\newblock On variational bounds of mutual information.
\newblock In \emph{International Conference on Machine Learning}, pp.\
  5171--5180. PMLR, 2019.

\bibitem[Popoviciu(1935)]{Popoviciu-BoundedVariance}
Popoviciu, T.
\newblock Sur les équations algébriques ayant toutes leurs racines réelles.
\newblock \emph{Mathematica (Cluj)}, 9:\penalty0 129--145, 1935.

\bibitem[Sankaran \& Holmes(2023)Sankaran and
  Holmes]{Sankaran-Holmes-GenerativeModels}
Sankaran, K. and Holmes, S.~P.
\newblock Generative models: An interdisciplinary perspective.
\newblock \emph{Annual Review of Statistics and Its Application}, 10\penalty0
  (1):\penalty0 325--352, 2023.
\newblock \doi{10.1146/annurev-statistics-033121-110134}.
\newblock URL \url{https://doi.org/10.1146/annurev-statistics-033121-110134}.

\bibitem[Song \& Ermon(2020)Song and Ermon]{song:20:understanding}
Song, J. and Ermon, S.
\newblock Understanding the limitations of variational mutual information
  estimators.
\newblock In \emph{International Conference on Learning Representations}, 2020.

\bibitem[Suzuki(2016)]{suzuki:16:jic}
Suzuki, J.
\newblock An estimator of mutual information and its application to
  independence testing.
\newblock \emph{Entropy}, 18\penalty0 (4):\penalty0 109, 2016.

\bibitem[Watson \& Holmes(2016)Watson and Holmes]{Watson-Holmes-2014}
Watson, J. and Holmes, C.
\newblock {Approximate Models and Robust Decisions}.
\newblock \emph{Statistical Science}, 31\penalty0 (4):\penalty0 465 -- 489,
  2016.
\newblock \doi{10.1214/16-STS592}.
\newblock URL \url{https://doi.org/10.1214/16-STS592}.

\end{thebibliography}
\bibliographystyle{icml2024}

\newpage
\appendix

\begin{center}
    \rule{\linewidth}{3pt}
    \vspace{-0.25cm}
    
    {\huge Appendix}
    \rule{\linewidth}{1pt}
\end{center}

\renewcommand{\contentsname}{Table of Contents}
\tableofcontents
\newpage
\addtocontents{toc}{\protect\setcounter{tocdepth}{2}}

\section{Technical results}
\label{appendix:technical-results}

In this section we formalize the results described in Sec.~\ref{section:pmi-profiles-theory}.
In Sec.~\ref{appendix:assumptions-nice-distributions} we precisely define the family of $\Pnice{\Xmanifold}{\Ymanifold}$ distributions, which is then used to prove that the PMI profile is invariant to diffeomorphisms.
In Sec.~\ref{appendix:known-pmi-profiles} we derive the PMI profiles in the cases where they are analytically tractable.
Then, in Sec.~\ref{appendix:fine-distributions-lemma} we formalize the properties of \CoolModelFullName{}s.
In Sec.~\ref{appendix:failing-channel-inequality} we derive the bounds on mutual information in finite mixture models.
Finally, in Sec.~\ref{appendix:pmi-profile-hard-maths} we provide a measure-theoretic analysis of the PMI profile, which offers more general invariance results, though it requires more complex mathematical machinery.

\subsection{Proof of the invariance of the pointwise mutual information profile}
\label{appendix:proof-pmi-profile-invariance}
\label{appendix:assumptions-nice-distributions}

In this paper, we consider random variables valued in smooth manifolds without boundary~\citep[Ch.~1]{Lee-2003-SmoothManifolds} equipped with their Borel $\sigma$-algebras and given reference measures.
In particular, the results in Section~\ref{section:pmi-profiles-theory} apply to both discrete and continuous r.v.s,
as the considered manifolds include open subsets of the Euclidean space $\mathbb R^n$ with the Lebesgue measure as well as zero-dimensional discrete spaces $\{1, \dotsc, m\}$ with the counting measure.

For a given pair of smooth manifolds $\Xmanifold$ and $\Ymanifold$, we equip their product $\Xmanifold\times \Ymanifold$ with the product measure and define the set $\Pnice{\Xmanifold}{\Ymanifold}$ to consist of all probability measures $P_{\Xrv\Yrv}$ on $\Xmanifold\times \Ymanifold$ such that the joint measure $P_{\Xrv\Yrv}$, as well as the marginal measures $P_{\Xrv}(A) = P_{\Xrv\Yrv}(A\times \Ymanifold)$ and $P_{\Yrv}(B) = P_{\Xrv\Yrv}(\Xmanifold\times B)$, have smooth and positive PDFs (or PMFs) $p_{XY}$, $p_X$ and $p_Y$ with respect to the reference measures on $\Xmanifold\times \Ymanifold$, $\Xmanifold$ and $\Ymanifold$, respectively.

In particular, the PMI function (Definition~\ref{definition:pmi-function}) exists (without the need to define expressions of the form $\log 0/0$) and is a smooth function from the manifold $\Xmanifold\times \Ymanifold$ to the set of real numbers, $\mathbb R$.
Note that as smooth functions are measurable, the PMI profile, introduced in Definition~\ref{definition:pmi-profile}, indeed exists.

More generally, it is possible to work with standard Borel spaces (instead of smooth manifolds) and prove invariance of the profile with respect to arbitrary continuous injective mappings (instead of diffeomorphisms).
We discuss this approach in Appendix~\ref{appendix:pmi-profile-hard-maths}.

Recall a well-known result~\citep[Appendix]{kraskov:04:ksg}:

\begin{lemma}[Invariance of PMI]
\label{lemma:invariance-of-pmi}
Let $X' = f(X)$ and $Y' = g(Y)$, where $f$ and $g$ are diffeomorphisms. Then for every $x'$ and $y'$ we have
\[
    \PMI_{X'Y'}(x', y') = \PMI_{XY}(x, y), %
\]
where $x = f^{-1}(x')$ and $y = g^{-1}(y')$. 
\end{lemma}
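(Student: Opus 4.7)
The plan is to apply the change-of-variables formula for probability densities separately to the joint and marginal distributions, and observe that the Jacobian factors cancel in the ratio defining PMI. First I would note that, since $f$ and $g$ are diffeomorphisms, so is the product map $(f,g) \colon \Xmanifold \times \Ymanifold \to \Xmanifold \times \Ymanifold$, and its Jacobian determinant at $(x', y')$ factors as $|\det J_{f^{-1}}(x')| \cdot |\det J_{g^{-1}}(y')|$ because the map acts coordinate-wise. This, combined with the regularity assumption $P_{XY} \in \Pnice{\Xmanifold}{\Ymanifold}$, ensures that the pushforward distribution $P_{X'Y'}$ has a smooth positive density and belongs to $\Pnice{\Xmanifold}{\Ymanifold}$ as well, so that $\PMI_{X'Y'}$ is well-defined.

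Next I would apply the standard change-of-variables formula to obtain explicit expressions for the three relevant densities:
\[
p_{X'Y'}(x', y') = p_{XY}(x, y)\, |\det J_{f^{-1}}(x')|\, |\det J_{g^{-1}}(y')|,
\]
\[
p_{X'}(x') = p_{X}(x)\, |\det J_{f^{-1}}(x')|, \qquad p_{Y'}(y') = p_{Y}(y)\, |\det J_{g^{-1}}(y')|,
\]
where $x = f^{-1}(x')$ and $y = g^{-1}(y')$. The marginal formulas follow either from the same change-of-variables argument applied to the pushforward of $P_X$ under $f$ (respectively $P_Y$ under $g$), or from integrating the joint density against the appropriate reference measure and using Fubini.

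Finally, I would substitute these three expressions into the definition of $\PMI_{X'Y'}(x', y')$ and observe that the Jacobian factors $|\det J_{f^{-1}}(x')|$ and $|\det J_{g^{-1}}(y')|$ appear once in the numerator and once each in the denominator, so they cancel, leaving exactly the ratio $p_{XY}(x,y)/(p_X(x)\, p_Y(y))$, whose logarithm is $\PMI_{XY}(x,y)$. There is no real obstacle here beyond bookkeeping; the mildly delicate point is checking that the product-map Jacobian really splits as a product, which is the reason the argument fails for a general diffeomorphism of $\Xmanifold \times \Ymanifold$ that mixes the two coordinates (and, indeed, PMI is only invariant under transformations that respect the product structure). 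This pointwise identity is precisely what will be needed to push forward the distributional equality $\Profile_{XY} = \Profile_{X'Y'}$ in Theorem~\ref{theorem:invariance-pmi-profile}.
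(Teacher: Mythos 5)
Your proposal is correct and follows essentially the same route as the paper's proof: apply the change-of-variables formula to the joint and marginal densities, use the block-diagonal structure of $D(f^{-1}\times g^{-1})$ to factor the Jacobian determinant as $\det Df^{-1}(x')\cdot \det Dg^{-1}(y')$, and observe that these factors cancel in the ratio defining PMI. Your write-up is merely more explicit about the cancellation step, which the paper leaves implicit.
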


\begin{proof}
From
\[
    p_{X'Y'}(x', y') = p_{XY}(x, y) \left\vert\det D\left(f^{-1}\times g^{-1}\right)(x', y') \right\vert
\]
and analogous quantities we conclude that $p_{X'Y'}$ as well as $p_{X'}$ and $p_{Y'}$ are smooth and everywhere positive functions, so that $\PMI_{X'Y'}$ is well-defined.
As $D(f^{-1}\times g^{-1})(x', y')$ is a block matrix with $Df^{-1}(x')$ and $Dg^{-1}(y')$ blocks on the diagonal and other blocks zero, we have $\det D(f^{-1}\times g^{-1})(x', y') = \det Df^{-1}(x') \cdot \det Dg^{-1}(y')$.
\end{proof}

Now we can prove the invariance of the PMI profile:
\invariancepmiprofile*

\begin{proof}
From the proof of Lemma~\ref{lemma:invariance-of-pmi} we conclude that $P_{X'Y'}\in \Pnice{\Xmanifold}{\Ymanifold}$.
Then, we note the profile is the pushforward measure
\[
   \Profile_{XY} := (\PMI_{XY})_{\#} P_{XY}.
\]
Now let $B\subseteq \mathbb R$ be any set in the Borel $\sigma$-algebra and $\mathbf{1}_B$ be its characteristic function. 
Using the change of variables formula for pushforward measure and invariance of PMI:
\begin{align*}
    \Profile_{ X'Y' }(B) &= \int \mathbf{1}_B(t)\, \mathrm{d}( (\PMI_{ X'Y' })_{\#} P_{X'Y'} ) (t) \\
    &= \int \mathbf{1}_B\left(\PMI_{X'Y'}(x', y')\right)\,  \mathrm{d}P_{X'Y'}(x', y') \\
    &= \int \mathbf{1}_B\left( \PMI_{X'Y'}( f(x), g(y) )  \right) \, \mathrm{d}P_{XY}(x, y) \\
    &= \int \mathbf{1}_B \left( \PMI_{XY}( x, y )  \right) \, \mathrm{d}P_{XY}(x, y) \\
    &= \Profile_{ XY }(B)
\end{align*}

\end{proof}

\begin{remark}
    The above proof can be understood also in the following manner. Lemma~\ref{lemma:invariance-of-pmi} allows us to express the PMI function of the transformed variables as the composition $\PMI_{X'Y'} = \PMI_{XY}\circ (f^{-1}\times g^{-1})$.
    As $\Profile_{X'Y'} =\left(\PMI_{X'Y'}\right)_\sharp P_{X'Y'}$ and $P_{X'Y'} = (f\times g)_\sharp P_{XY}$, the $f\times g$ cancels out with its inverse:
    \begin{align*}
        \Profile_{X'Y'} &= \left(\PMI_{XY}\circ (f^{-1}\times g^{-1}) \right)_\sharp (f\times g)_\sharp P_{XY} \\
        &= \left(\PMI_{XY} \circ  (f^{-1}\times g^{-1}) \circ (f\times g)  \right)_\sharp P_{XY} \\
        &= \left( \PMI_{XY} \circ \mathrm{identity} \right)_\sharp P_{XY} \\
        &= (\PMI_{XY})_\sharp P_{XY} = \Profile_{XY}.
    \end{align*}

    This perspective allows one to generalize PMI profile invariance to more general mappings, such as continuous injective functions.
    We prove a general result in Appendix~\ref{appendix:pmi-profile-hard-maths}. 
\end{remark}

\subsection{Derivations of pointwise mutual information profiles}
\label{appendix:known-pmi-profiles}

The following result shows that the distributions in all $\Pnice{\Xmanifold}{\Ymanifold}$ classes with zero mutual information have the same profile:

\pmiprofileindependent*

\begin{proof}
    If $\Profile_{XY}=\delta_0$, then the expected value is $\mutualinformation{X}{Y} = 0$.
    To prove the converse, if $\mutualinformation{X}{Y}=0$, then $X$ and $Y$ are independent.
    Hence, $p_{XY}(x, y) = p_X(x) p_Y(y)$ at every point $(x, y)$ and
    \(
        \PMI_{XY}(x, y) = 0
    \) 
    everywhere.
\end{proof}

The following result characterizes the PMI profiles for discrete r.v.s:
\pmiprofilediscrete*
\begin{proof}
    The measure $P_{XY}$ is discrete and given by
    \[
        P_{XY} = \sum_{x\in \Xmanifold}\sum_{y\in \Ymanifold} p_{XY}(x, y)\, \delta_{(x, y)},
    \]
    so its pushforward by the $\PMI_{\Xrv\Yrv}$ function has the form
    \[
    \Profile_{XY} = (\PMI_{XY})_\# P_{XY} = \sum_{x\in \Xmanifold}\sum_{y\in \Ymanifold} p_{XY}(x, y) \,\delta_{\PMI_{\Xrv\Yrv}(x, y)}.
    \]
\end{proof}

The next results describe the properties of PMI profiles associated with multivariate normal variables:

\pmiprofilemultinormal*

\begin{proof}
    Without loss of generality assume that $m\le n$.
    As the PMI profile is invariant to diffeomorphisms (Theorem~\ref{theorem:invariance-pmi-profile}), we can also assume that variables $X$ and $Y$ have been whitened by applying canonical correlation analysis~\citep{Jendoubi2019-CCA}, that is $\mathbb E[X] = 0$, $ \mathbb E[Y] = 0$ and the covariance matrix is given by
    \[
        \Sigma = \begin{pmatrix}
            I_m & \Sigma_{XY}\\
            \Sigma_{XY}^T & I_n
        \end{pmatrix} =
        \begin{pmatrix}
            I_m & R & 0\\
            R & I_m & 0\\
            0 & 0   & I_{n-m}
        \end{pmatrix}
    \]
    where
    \[
        \Sigma_{XY} = \begin{pmatrix}
            R & 0_{m\times (n-m)}
        \end{pmatrix}
    \]
    is an $m\times n$ matrix with the last $n-m$ columns being zero vectors and $R = \mathrm{diag}(\rho_1, \dotsc, \rho_m)$ being the $m\times m$ diagonal matrix representing canonical correlations.
    
    We will write the inverse in the block form
    \[
        \Sigma^{-1} = \begin{pmatrix}
            \Lambda_X & \Lambda_{XY}\\
            \Lambda_{XY}^T & \Lambda_Y
        \end{pmatrix} =
        \begin{pmatrix}
            \Lambda_X & \tilde R & 0\\
            \tilde R  & \Lambda_X & 0\\
            0 & 0 & I_{n-m}
        \end{pmatrix}
    \]
    where the blocks have been calculated using the formula from \citet[Sec.~9.1]{Petersen-MatrixCookbook}:
    \begin{align*}
        \Lambda_X &=  (I_m - \Sigma_{XY}\Sigma_{XY}^T)^{-1} = \mathrm{diag}\left( u_1, \dotsc, u_m \right)\\
        \Lambda_Y &= (I_n - \Sigma_{XY}^T\Sigma_{XY} ) = \mathrm{diag}\left(u_1, \dotsc,  u_m, 1, \dotsc, 1\right)\\
        \Lambda_{XY} &= -\Sigma_{XY} \Lambda_Y = \begin{pmatrix} \tilde R & 0_{ m\times(n-m) }\end{pmatrix},
    \end{align*}
    where $\tilde R = -\mathrm{diag}\left( u_1\rho_1, \dotsc, u_m\rho_m  \right)$
    and 
    $u_i = 1/\left(1-\rho_i^2\right)$.
    
    We define a quadratic form
    \begin{align*}
        s(x, y) &= x^T\Lambda_X x + y^T \Lambda_Y y +  2 x^T\Lambda_{XY} y \\
        &= \sum_{i=1}^m u_i \left( x_i^2 +y_i^2 - 2\rho_i x_iy_i \right) + \sum_{j=m+1}^n y_j^2 
    \end{align*}
    which can be used to calculate log-PDFs:
    \begin{align*}
        \log p_{XY}(x, y) &= -\frac 12 s(x, y) - \frac 12 \log \det \Sigma - \frac{m+n}2 \log 2\pi,\\
        \log p_{X}(x) &= -\frac 12 x^Tx - \frac{m}2 \log 2\pi,\\
        \log p_{Y}(y) &= -\frac 12 y^Ty - \frac{n}2 \log 2\pi.
    \end{align*}

    Hence,
    \[
        \PMI_{XY}(x, y) = \frac{ x^Tx + y^Ty - s(x, y) }{2} -  \frac 12 \log \det \Sigma.
    \]
    
    We recognize the last summand as
    \[
        \mutualinformation{X}{Y} = \frac 12 \log\left( \frac{\det I_m \cdot \det I_n} { \det \Sigma }\right) = -\frac 12 \log \det \Sigma.
    \]
    Define quadratic form
    \begin{align*}
        q(x, y) &= 2 \big(\PMI_{XY}(x, y) - \mutualinformation{X}{Y}\big) =  x^Tx + y^Ty - s(x, y) \\
        &= \sum_{i=1}^m \bigg(  (1-u_i) \left( x_i^2 +y_i^2\right) + 2\rho_iu_i x_iy_i \bigg),
    \end{align*}
    which has a corresponding matrix
    \[
        Q = \begin{pmatrix}
            K & F & 0\\
            F & K & 0\\
            0 & 0 & 0
        \end{pmatrix},
    \]
    where
    \[
        K = \mathrm{diag}( 1-u_1, \dotsc, 1-u_m )
    \]
    and
    \[
        F = \mathrm{diag}(\rho_1u_1, \dotsc, \rho_m u_m).
    \]

    We are interested in the distribution of 
    \[q(X, Y) = \begin{pmatrix} X^T & Y^T \end{pmatrix} Q \begin{pmatrix}
        X\\ Y
    \end{pmatrix},
    \]
    where $(X, Y)\sim \mathcal N(0, \Sigma)$. 

    \citet{Imhof-Generalized-chi-squared}~presents a~general approach to evaluating the distributions of such quadratic forms.
    Consider a r.v.
    \[
        Z = \begin{pmatrix}
            \eta\\
            \epsilon\\
            \xi
        \end{pmatrix}
        \sim
        \mathcal{N} (0, I_{m+n})
    \]
    which is split into blocks of sizes $m$, $m$ and $n-m$.
    We will construct a linear transformation $A$ such that
    \[
        \begin{pmatrix}
            X\\
            Y
        \end{pmatrix} = A\begin{pmatrix}
            \eta\\
            \epsilon\\
            \xi
        \end{pmatrix}.
    \]
    Then, the distribution of $q(X, Y)$ is the distribution of
    \[
        Z^T A^T Q A Z, \qquad Z\sim \mathcal{N}(0, I_{m+n}).
    \]

    We will construct $A$ as
    \[
        A = \begin{pmatrix}
            P_- & P_+ & 0\\
            -P_- & P_+ & 0\\
            0 & 0 & I_{n-m}
        \end{pmatrix}
    \]
    where
    \begin{align*}
        P_- = \mathrm{diag}\left( \sqrt{ \frac{1-\rho_1}{2} }, \cdots, \sqrt{ \frac{1-\rho_m}{2} }  \right), \qquad
        P_+ = \mathrm{diag}\left( \sqrt{ \frac{1+\rho_1}{2}}, \cdots, \sqrt{ \frac{1+\rho_m}{2}}\right).
    \end{align*}

    We calculate
    \[
        AA^T = \begin{pmatrix}
            P_-^2 + P_+^2 & P_+^2 - P_-^2 & 0 \\
            P_+^2 - P_-^2 & P_-^2 + P_+^2 & 0\\
            0 & 0 & I_{n-m}
        \end{pmatrix} = 
        \begin{pmatrix}
            I_m & R & 0\\
            R & I_m & 0\\
            0 & 0 & I_{n-m}
        \end{pmatrix}
        = \Sigma
    \]
    and
    \[
        A^T Q A = \begin{pmatrix}
            2 P_-^2 (K-F) & 0 & 0\\
            0 & 2 P_+^2 (K+F) & 0\\
            0 & 0 & 0
        \end{pmatrix},
    \]
    where
    \[
       2 P_-^2 (K-F) = \mathrm{diag}\left( - \rho_1, \dotsc, -\rho_m\right), \qquad
       2 P_+^2 (K+F) = \mathrm{diag}\left( \rho_1, \dotsc, \rho_m \right).
    \]

    Hence, the distribution of $q(X, Y)$ is the same as the distribution of
    \[
        \sum_{i=1}^m \rho_i (-\eta_i^2 + \epsilon_i^2) + \sum_{j=1}^{n-m} 0\cdot \xi_j^2, 
    \]
    where $(\eta_, \epsilon, \xi)\sim \mathcal N(0, I_{m+n})$.
    To summarize, let $Q_1, \dotsc, Q_m, Q'_1, \dotsc, Q'_m$ be i.i.d.~random variables distributed according to the $\chi^2_1$ distribution.
    The quadratic form $q(X, Y)$ has the distribution the same as
    \[
        \sum_{i=1}^m \rho_i(Q_i-Q'_i),
    \]
    which can also be written as
    \[
        q(X, Y)\sim \sum_{i=1}^m \left(\rho_i \chi^2_1 - \rho_i \chi^2_1\right). 
    \]
    Note that this distribution is symmetric around $0$.
    We can now reconstruct the profile from $q(X, Y)$:
    \[
       \Profile_{ XY } = \mutualinformation{X}{Y} + \sum_{i=1}^m\left(\frac{\rho_i}2 \chi^2_1 - \frac{\rho_i}2 \chi^2_1 \right),
    \]
    which is symmetric around $\mutualinformation{X}{Y}$ and, in agreement with Proposition~\ref{prop:pmi-profile-independent}, degenerates to the atomic distribution $\delta_0$ if and only if $\mutualinformation{X}{Y} = 0$, which is equivalent to $\rho_i = 0$ for all $i$.

    As a linear combination of independent $\chi^2_1$ variables, profile has all finite moments.
    Using the fact that the variance of $\chi^2_1$ distribution is $2$, and quadratic scaling of variance, each term has variance $2\cdot (\rho_i/2)^2 = \rho_i^2/2$. As variances of independent variables are additive, we can sum up all the $2m$ terms to obtain $\rho_1^2 + \cdots + \rho_m^2$.
\end{proof}

\begin{proposition}
    For a multivariate normal distribution with fixed mutual information, the variance of the PMI profile is maximized when $\rho_1^2= \rho_2^2 = \dotsc = \rho_m^2$.
\end{proposition}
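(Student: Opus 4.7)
The plan is to directly reduce the claim to a constrained optimisation problem using the explicit expressions from Theorem~\ref{prop:pmi-profile-multinormal}. That theorem tells us that the mutual information and the variance of the profile, expressed in terms of the canonical correlations $\rho_1,\dots,\rho_k$, are given by
\[
\mutualinformation{X}{Y} \;=\; -\tfrac{1}{2}\sum_{i=1}^{k}\log(1-\rho_i^2),
\qquad
\mathrm{Var}(T) \;=\; \sum_{i=1}^{k}\rho_i^2.
\]
So the statement reduces to: maximise $\sum_{i=1}^{k}\rho_i^2$ over $\rho_i^2\in[0,1)$ subject to the constraint that $\sum_{i=1}^{k}\log(1-\rho_i^2)$ is fixed. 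Note that both the objective and the constraint depend only on the squares, so without loss of generality we may work with the variables $a_i=\rho_i^2\in[0,1)$.

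The next step is a convenient change of variables: set $b_i = 1-a_i\in(0,1]$. Fixing $\mutualinformation{X}{Y}$ is equivalent to fixing $\sum_i \log b_i$, hence equivalent to fixing the product $\prod_i b_i = c$ for some constant $c\in(0,1]$. The objective $\sum_i a_i = k - \sum_i b_i$ is maximised precisely when $\sum_i b_i$ is minimised. So the whole problem becomes: minimise $\sum_{i=1}^{k} b_i$ subject to $\prod_{i=1}^{k} b_i = c$ with $b_i\in(0,1]$.

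This is the classical setting of the AM--GM inequality, which yields
\[
\frac{1}{k}\sum_{i=1}^{k} b_i \;\ge\; \Bigl(\prod_{i=1}^{k} b_i\Bigr)^{1/k} \;=\; c^{1/k},
\]
with equality if and only if all $b_i$ coincide. Setting $b_i = c^{1/k}$ for all $i$ is feasible (since $c^{1/k}\in(0,1]$), so the minimum is attained, and it corresponds to $a_i = 1 - c^{1/k}$ being the same for every $i$, i.e.\ $\rho_1^2 = \rho_2^2 = \cdots = \rho_k^2$, as claimed.

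There is essentially no obstacle here beyond identifying the right change of variables; once the problem is rewritten in terms of $b_i$, AM--GM (or equivalently the concavity of $\log$ combined with a Lagrange multiplier argument) gives the result immediately. If one prefers, the same conclusion follows by noting that $a\mapsto -\log(1-a)$ is a strictly convex increasing function of $a$ on $[0,1)$, so by Jensen's inequality, among $(a_1,\dots,a_k)$ with a prescribed sum the quantity $-\sum_i \log(1-a_i)$ is minimised when all $a_i$ are equal; inverting this monotone relationship shows that fixing the latter forces $\sum_i a_i$ to be maximised at the equal-components point.
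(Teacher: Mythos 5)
Your argument is correct and is essentially the paper's own proof: both substitute $1-\rho_i^2$ (your $b_i$, the paper's $a_i$), note that fixing the mutual information fixes the product of these quantities, and then apply the AM--GM inequality to minimise their sum, with the equality case forcing all canonical correlations to have the same square. The only cosmetic differences are your explicit feasibility check and the optional Jensen-based remark at the end.
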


\begin{proof}
    Let $a_i = 1-\rho_i^2$. To maximize the variance we equivalently have to minimize $a_1 + \dotsc + a_m$ preserving given constraint on mutual information and $a_i \in (0, 1]$.

    The constraint on mutual information takes the form
    \[
        \mutualinformation{X}{Y} = -\frac{1}{2} \sum_{i=1}^m \log \left(1-\rho_i^2\right) = -\frac 12 \log\left(a_1\cdots a_m \right).
    \]
    Hence, the product $a_1\cdots a_m$ has to be constant.
    Denote this constant by $A^m$ for $A \in (0, 1]$ as well. 

    Let $a_1, \dotsc, a_m$ be any minimum of $a_1 + \dotsc + a_m$ under the constraints $a_1\cdots a_m = A^m$ and $a_i\in (0, 1]$.
    From the inequality between arithmetic and geometric means we note that
    \[
        \frac{a_1 + \dotsc + a_m}{m} \ge \sqrt[m]{a_1\cdots a_m} = A,
    \]
    where the equality holds only if $a_1 = \dotsc = a_m = A$.
    Hence, this is the unique minimum under constraints provided.
    It follows that $\rho_1^2 = \cdots = \rho_m^2$.
\end{proof}

\begin{corollary}
    For a multivariate normal distribution with fixed mutual information and $m$ canonical correlations, the variance of the PMI profile does not exceed  
    \[
        V = m\left(  1-\exp\left( -2 \mutualinformation{X}{Y} / m \right)  \right).
    \]
\end{corollary}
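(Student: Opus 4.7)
The plan is to combine the two results that immediately precede the corollary. From Theorem~\ref{prop:pmi-profile-multinormal}, the variance of $\Profile_{XY}$ for a multivariate normal with canonical correlations $\rho_1,\dotsc,\rho_m$ is $\rho_1^2+\cdots+\rho_m^2$. The preceding proposition then tells us that, subject to the constraint that $\mutualinformation{X}{Y}$ is fixed, this sum is maximized precisely at the symmetric point $\rho_1^2=\cdots=\rho_m^2$. So the whole task reduces to evaluating the variance at that symmetric point.

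First I would set $\rho^2$ to be this common value, and substitute into the formula $\mutualinformation{X}{Y}=-\tfrac12\sum_i\log(1-\rho_i^2)$ from Theorem~\ref{prop:pmi-profile-multinormal}, which under symmetry collapses to
\[
    \mutualinformation{X}{Y} = -\frac{m}{2}\log(1-\rho^2).
\]
Solving for $\rho^2$ gives $\rho^2 = 1 - \exp(-2\mutualinformation{X}{Y}/m)$. Plugging this into the expression for the variance yields $m\rho^2 = m\bigl(1-\exp(-2\mutualinformation{X}{Y}/m)\bigr)$, which is exactly $V$.

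Finally, I would combine the two steps: for any admissible choice of $(\rho_i)$ with the prescribed MI, the variance $\sum_i\rho_i^2$ does not exceed the value at the symmetric maximizer, which we have just computed to be $V$. There is no real obstacle here; the only point worth stating carefully is that the symmetric configuration $\rho_i^2 = 1-\exp(-2\mutualinformation{X}{Y}/m)$ is admissible (i.e., lies in $[0,1)$), which holds for any $\mutualinformation{X}{Y}\in[0,\infty)$. The corollary then follows.
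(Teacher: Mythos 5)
Your proposal is correct and follows essentially the same route as the paper: invoke the preceding proposition to reduce to the symmetric case $\rho_1^2=\cdots=\rho_m^2$, solve $\mutualinformation{X}{Y}=-\tfrac{m}{2}\log(1-\rho^2)$ for $\rho^2$, and conclude $V=m\rho^2$. Your added remark on admissibility of the symmetric configuration is a small but welcome extra over the paper's version.
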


\begin{proof}
    The variance is maximized when $\rho_1^2 = \cdots = \rho_m^2$. 
    Writing $\rho^2$ for the common value, we have
    \[
        \rho^2 = 1-\exp\left( -2 \mutualinformation{X}{Y} / m \right)
    \]
    and
    \[
        V = m\rho^2 = m\left(  1-\exp\left( -2 \mutualinformation{X}{Y} / m \right)  \right).
    \]
    The mutual information can also be written as function of variance
    \[
        \mutualinformation{X}{Y} = -\frac{1}{2} m\log\left( 1-\rho^2 \right) = -\frac 12 m \log (1 - V/m).
    \]

\end{proof}

\begin{proposition}
    For a multivariate normal distribution with fixed non-zero mutual information the variance of the PMI profile is minimized when $\rho_i\neq 0$ for exactly one $i$.
\end{proposition}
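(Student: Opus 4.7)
The plan is to reparametrize and turn the problem into a constrained optimization with an easy exchange argument. Set $a_i = 1 - \rho_i^2 \in (0, 1]$. Then Theorem~\ref{prop:pmi-profile-multinormal} rewrites the variance and the mutual information as
\[
    V = \sum_{i=1}^m \rho_i^2 = m - \sum_{i=1}^m a_i, \qquad \mutualinformation{X}{Y} = -\tfrac{1}{2}\sum_{i=1}^m \log a_i,
\]
so fixing the mutual information at some value $I > 0$ is equivalent to fixing the product $\prod_i a_i = C$ where $C := e^{-2I} \in (0, 1)$. Minimizing $V$ under the MI constraint is thus equivalent to maximizing $S(a) = a_1 + \cdots + a_m$ over the feasible set $F = \{ a \in (0,1]^m : \prod_i a_i = C \}$. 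Since each $a_i \ge C$ (the other factors are at most $1$), $F$ is compact and a maximum exists.

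Next, I would use a two-coordinate exchange argument to rule out any maximizer with two coordinates strictly inside $(0,1)$. Suppose $a$ is feasible and indices $i \neq j$ satisfy $a_i, a_j \in (0,1)$. Replace the pair $(a_i, a_j)$ by $(1, a_i a_j)$. This preserves the product, and $a_i a_j < 1$ keeps feasibility in the box. The change in $S$ equals
\[
    (1 + a_i a_j) - (a_i + a_j) = (1 - a_i)(1 - a_j) > 0,
\]
so $a$ was not a maximizer. By iterating, the maximum is attained at a point with at most one coordinate strictly less than $1$; since $C < 1$, exactly one coordinate equals $C$ and the other $m-1$ equal $1$.

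Translated back through $a_i = 1-\rho_i^2$, this says the minimum of the variance is attained precisely when $\rho_i = 0$ for $m-1$ indices and $\rho_i^2 = 1 - e^{-2I}$ for the remaining index, which is the claim. As a byproduct the minimum value is $V_{\min} = m - (m-1) - C = 1 - e^{-2\mutualinformation{X}{Y}}$, complementing the earlier upper bound $V_{\max} = m(1 - e^{-2\mutualinformation{X}{Y}/m})$.

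I do not expect any real obstacle: compactness of $F$ gives existence, the exchange step is a one-line inequality, and the non-zero MI hypothesis is exactly what guarantees $C < 1$ so that the strict inequality $(1-a_i)(1-a_j) > 0$ is available. The only care needed is to verify that pushing one coordinate up to $1$ does not violate the box constraint, which follows immediately from $a_i a_j < 1$ whenever both $a_i, a_j < 1$.
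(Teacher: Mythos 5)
Your proof is correct and takes essentially the same route as the paper's: both reparametrize via $a_i = 1-\rho_i^2$, reduce the problem to maximizing $a_1+\cdots+a_m$ subject to the product constraint, and rest on the same key inequality $1 + a_i a_j \ge a_i + a_j$, i.e., $(1-a_i)(1-a_j)\ge 0$. The only difference is packaging — the paper iterates this inequality by induction to obtain the global bound $\sum_i a_i \le \prod_i a_i + (m-1)$ together with its equality cases, whereas you establish existence of a maximizer by compactness and use the inequality as a local exchange step; your version additionally records the explicit minimum value $V_{\min} = 1 - e^{-2\,\mathbf{I}(X;Y)}$, which the paper does not state.
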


\begin{proof}
    Let $a_i \in (0, 1]$ be any numbers.
    We have
    \[
        (1-a_1)(1-a_2) \ge 0
    \]
    which is equivalent to
    \[
        1 + a_1 a_2 \ge a_1 + a_2
    \]
    where the equality holds if and only if $a_1 = 1$ or $a_2 = 1$.

    Using the principle of mathematical induction one can prove a more general inequality:
    \begin{align*}
        a_1 a_2\cdots a_m + (m-1) &= 1 + a_1(a_2 \cdots a_m)  + (m-2) \\
        &\ge a_1 + \big( a_2 \cdots a_m + (m-2)  \big)  \\
        &\ge a_1 + a_2 + \big( a_3 \cdots a_m + (m-3)  \big) \\
        &~\vdots \\
        &\ge a_1 + a_2 + \cdots + a_m.
    \end{align*}
    Let us analyze when the equality can hold. To obtain equality in the first step, we need $a_1 = 1$ or $a_2\cdots = a_m=1$, which, given the constraints $a_i\in (0, 1]$ would mean that $a_2 = \cdots a_m = 1$.
    Reasoning inductively, one proves that equality holds only when at least $m-1$ among these numbers are $1$.

    Let us apply the above reasoning to the numbers $a_i = 1-\rho_i^2$ and note that we are solving a maximization problem $a_i\in (0, 1]$ under a constraint
    \[
        a_1 \cdots a_m = P.
    \]
    Using the argument above we note that all the maxima for the above problem are permutations of the sequence $P, 1, 1, \dotsc, 1$.
    This proves that at most one $\rho_i^2\neq 0$, what results in at most one $\rho_i\neq 0$. 
\end{proof}

\subsection{Constructing new \CoolModelFullName{}s}
\label{appendix:fine-distributions-lemma}

In this section we prove Proposition~\ref{prop:transformations-are-fine} and Proposition~\ref{prop:mixtures-are-fine}.

\transformationsarefine*

\begin{proof}
    From the proof of Lemma~\ref{lemma:invariance-of-pmi} and the assumption of tractability of the Jacobians of $f$ and $g$ we obtain the tractability of the formulae for the densities $p_{f(X)g(Y)}$, $p_{f(X)}$ and $p_{g(Y)}$.
    Sampling $(f(X), g(Y))$ amounts to sampling $(X, Y)$ and then transforming the sample using $f$ and $g$.
\end{proof}

\mixturesarefine*

\begin{proof}
    We can evaluate the densities $p_{X'Y'}(x, y)$, $p_{X'}(x)$ and $p_{Y'}(y)$ of the mixture distribution using the weighted sums of $p_{X_kY_k}(x, y)$, $p_{X_k}(x)$ and $p_{Y_k}(y)$, respectively.
    To sample $(X', Y')\sim P_{X'Y'}$ from the mixture distribution we can sample an auxiliary variable $Z\sim \mathrm{Categorical}(K; w_1, \dotsc, w_K)$. Then, we have $\big((X', Y')\mid Z=k \big) = (X_k, Y_k)$, so that we can sample from $P_{X_kY_k}$.     
\end{proof}

\subsection{Mutual information in finite mixtures}
\label{appendix:failing-channel-inequality}

\newcommand{\KLDiv}[2]{\mathbf{D}_\mathrm{KL}\left(#1 \parallel #2\right)}

In this section we discuss the counterintuitive properties of mutual information in mixture models. 
We start with the proof of the failing channel inequality:

\inequalityfailingchannel*
\begin{proof}
    Let $Z\sim \mathrm{Bernoulli}(1-\alpha)$ be an auxiliary variable.
    We have $(X, Y')\mid Z=1 \sim P_{XY}$ and $(X, Y')\mid Z = 0 \sim P_X\otimes N_Y$.

    From the data processing inequality and chain rule we conclude that
    \[
        \mutualinformation{X}{Y'} \le \mutualinformation{X}{Y', Z} = \mutualinformation{X}{Z} + \mutualinformation{X}{Y'\mid Z}.
    \]
    Now note that $X$ and $Z$ are independent, so $\mutualinformation{X}{Z} = 0$.

    Hence,
    \begin{align*}
        \mutualinformation{X}{Y'} &\le \mutualinformation{X}{Y'\mid Z} \\
        &= \alpha \KLDiv{P_{XY'\mid Z=0}}{P_{X\mid Z=0}\otimes P_{Y'\mid Z=0}} + (1-\alpha)\KLDiv{P_{XY'\mid Z=1}}{P_{X\mid Z=1}\otimes P_{Y'\mid Z=1}}\\
        &= \alpha \KLDiv{P_{XY'\mid Z=0}}{P_{X}\otimes N_{Y}} + (1-\alpha) \KLDiv{P_{XY}}{P_X\otimes P_Y}\\
        &= \alpha \KLDiv{P_X\otimes N_Y}{P_X\otimes N_Y} + (1-\alpha) \mutualinformation{X}{Y} \\
        &= (1-\alpha)\mutualinformation{X}{Y}.
    \end{align*}
\end{proof}

\label{subsection:creating-and-destroying-information}

In this example, mixing the signal with a noise component, not encoding any information, decreased the information.
However, mixing with independent noise components can also \emph{increase} mutual information.

\begin{example}
    Let $A=(0, 1)$ and $B=(1, 2)$ be two disjoint intervals of unit length.
    We define two pairs of random variables:
    \begin{align*}
        (X_1, Y_1) \sim \Uniform(A\times A),\qquad (X_2, Y_2) \sim \Uniform( B\times B).
    \end{align*}
    Note that
    \[
        \mutualinformation{X_1}{Y_1} = \mutualinformation{X_2}{Y_2} = 0.
    \]

    If $(X, Y) \sim 0.5 P_{X_1Y_1} + 0.5 P_{X_2Y_2}$ is distributed according to a mixture, we have
    \[
        p_{XY}(x, y) = \frac{1}{2} \mathbf{1}[ (x, y)\in A\times A \cup B\times B ]
    \]
    and
    \[
        p_X(x) = \frac 12\mathbf{1}[x\in A\cup B], \qquad p_Y(y) = \frac 12\mathbf{1}[x\in A\cup B].
    \]
    Hence,
    \[
        \PMI_{XY}(x, y) = \log 2 \cdot \mathbf{1}[(x, y)\in A\times A \cup B\times B]
    \]
    and
    \[
        \mutualinformation{X}{Y} = \frac 12 \log 2 + \frac 12\log 2 = \log 2.
    \]
\end{example}

\begin{figure*}[t]
   \centering \includegraphics[width=0.85\textwidth]{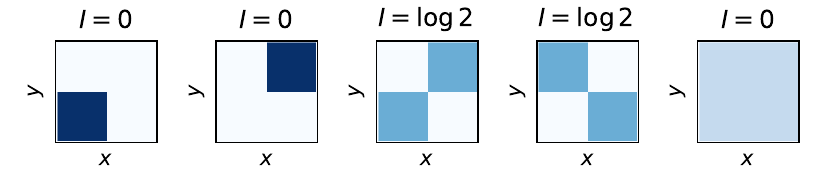}
    \caption{
    Mixing two distributions with zero MI can result in a mixture distribution with MI equal to one bit. Mixing two one-bit distributions can result in a mixture distribution with no mutual information.
    \label{fig:appearing-and-vanishing-mi}
    }
\end{figure*}

Mixing two distributions with one bit of information each, can however result in independent random variables:

\begin{example}
    Recall the distribution constructed as above:
    \[
        p_{XY}(x, y) = \frac{1}{2}\mathbf{1}[(x, y) \in A\times A\cup B\times B]
    \]
    and a symmetric one
    \[
        p_{UV}(x, y) = \frac{1}{2} \mathbf{1}[(x, y) \in A\times B \cup B\times A].
    \]

    We have
    \[
        \mutualinformation{X}{Y} = \mutualinformation{U}{V} = \log 2.
    \]
    On the other hand, the mixture distribution
\[
    (Z, T) \sim 0.5 P_{XY} + 0.5 P_{UV}
\]
has zero mutual information, $\mutualinformation{Z}{T} = 0$, as
\begin{align*}
    p_{ZT}(x, y) &= \frac{1}{4} \mathbf{1}[ (x, y)\in (A\cup B)\times (A\cup B) ] \\
    &= \frac 12 \mathbf{1}[x\in A\cup B]\cdot \frac 12 \mathbf{1}[y\in A\cup B]  \\
    &= p_Z(x)\cdot p_T(y).
\end{align*}

\end{example}

Both examples have been visualised in Fig.~\ref{fig:appearing-and-vanishing-mi}.
Note that similarly counterintuitive examples can be constructed using \CoolModel{}s by using mixtures of multivariate normal distributions.

This demonstrates that mixtures can create and destroy information.
There is however an upper bound on the amount of information mixtures can create (see also~\citet{Haussler-1997-mixtures-entropy} and \citet{Kolchinsky-2017-mixtures-entropy}):
\inequalityinformationmixture*

\begin{proof}
    The examples provide explicit constructions of distributions with the specified properties.
    To prove the upper bound, consider a variable $Z\sim \mathrm{Categorical}(K; w_1, \dotsc, w_K)$.
    The random variables corresponding to the mixture distribution, $(X', Y')$, have conditional distributions
    \[
        (X', Y') \mid Z=k \sim P_{X_kY_k}.
    \]
    From the data processing inequality and chain rule we have
    \[
        \mutualinformation{X'}{Y'} \le \mutualinformation{X'}{Y', Z} = \mutualinformation{X'}{Z} + \mutualinformation{X'}{Y'\mid Z}.
    \]
    As $Z$ is discrete, the first summand, $\mutualinformation{X'}{Z}$, is bounded from above by the entropy $H(Z)$~\citep[Th.~3.4e]{Polyanskiy-Wu-Information-Theory}, which cannot exceed $\log K$~\citep[Th.~1.4b]{Polyanskiy-Wu-Information-Theory}.
    The second summand can be written as
    \begin{align*}
        \mutualinformation{X'}{Y'\mid Z} &= \sum_{k=1}^K P(Z=k)\, \KLDiv{ P_{X'Y'\mid Z=k} }{ P_{X'\mid Z=k} \otimes P_{Y'\mid Z=k} } \\
        &= \sum_{k=1}^K w_k\, \KLDiv{ P_{X_kY_k} }{ P_{X_k}\otimes P_{Y_k} } = \sum_{k=1}^K w_k\, \mutualinformation{X_k}{Y_k}.
    \end{align*}  
\end{proof}

\subsection{Divergence profiles}
\label{appendix:pmi-profile-hard-maths}

In this section we show how to define the PMI profile in a more general setting of a \emph{divergence profile} and prove a general invariance result (Theorem~\ref{theorem:pmi-invariant-general}).
The main ideas stay similar to the proof of Theorem~\ref{theorem:invariance-pmi-profile}.

We consider a standard Borel space $\mathcal M$ equipped with two probability measures, $P$ and $Q$, such that $P\ll Q$.
For example, if one analyses the mutual information between random variables $X$ and $Y$, they can define $\mathcal M = \mathcal X\times \mathcal Y$ with $P = P_{XY}$ representing the joint distribution and $Q = P_X\otimes P_Y$ representing the product of marginals.
Then, a sufficient condition for $P\ll Q$ is to assume that $\mutualinformation{X}{Y} < \infty$.

As $P$ is absolutely continuous with respect to $Q$, the Radon--Nikodym derivative $f=\mathrm{d}P/\mathrm{d}Q$ can be defined.
It is a measurable function $f\colon \mathcal M\to [0, \infty)$ defined up to a $Q$-null set.

This allows one to define the Kullback--Leibler divergence:
$$
    \KLDiv{P}{Q} = \int f\log f\, \mathrm{d}Q = \int \log f\, \mathrm{d}P,
$$
where $0\log 0 = 0$ in the first expression. In the second expression, one can notice that as $P\ll Q$, $\log 0$ can occur only on a $P$-null set.
Additionally, using another Radon--Nikodym derivative (differing from $f$ on a $Q$-null set), does not change the value of the integral, so that Kullback--Leibler divergence is indeed well-defined.

These properties correspond to the following properties of the \emph{divergence profile}, defined as the pushforward measure
\[
    \Profile_{P\parallel Q} = (\log f)_\sharp P.
\]
Namely, this distribution does not depend on the choice of $f$ (i.e., we can replace $f$ by a function differing on a $Q$-null set) and it's easy to prove that $\Profile_{P\parallel Q}(\{-\infty, \infty\}) = 0$. As this distribution does not have atoms at the infinite values, we can treat it as a distribution over real numbers, $\mathbb R$.

Now consider a measurable mapping $i\colon \mathcal M\to \mathcal M'$ between standard Borel spaces, allowing us to define the pushforward measures $i_\sharp P$ and $i_\sharp Q$ on $\mathcal M'$.
We are interested in investigating the profile defined using these pushforward distributions.
For example, if $\mathcal M = \mathcal X\times \mathcal Y$, $\mathcal M' = \mathcal X'\times \mathcal Y'$, $P=P_{XY}$ and $Q=P_X\otimes P_Y$ and one considers random variables $X'=i_1(X)$ and $Y'=i_2(Y)$, then the mapping $i = i_1\times i_2\colon \mathcal M\to \mathcal M'$ defines the pushforward measures $P_{X'Y'}=i_\sharp P_{XY}$ and $P_{X'}\otimes P_{Y'} = i_\sharp (P_X\otimes P_Y)$.

First, note that the profile $\Profile_{i_\sharp P\parallel i_\sharp Q}$ can be defined in this case: as $P\ll Q$, then also $i_\sharp P\ll i_\sharp Q$, because $i_\sharp Q(B) = 0$ is equivalent to $Q(i^{-1}(B)) = 0$, which then implies $P(i^{-1}(B))=0$.
    
However, $\Profile_{i_\sharp P\parallel i_\sharp Q}$ can be generally different from $\Profile_{P\parallel Q}$: for example, using a constant mapping $i$ results in $\Profile_{i_\sharp P\parallel i_\sharp Q} = \delta_0$, independently on the original profile $\Profile_{P\parallel Q}$.

However, we can generalize Theorem~\ref{theorem:invariance-pmi-profile} as follows:

\begin{theorem}\label{theorem:pmi-invariant-general}
    Let $i\colon \mathcal M\to \mathcal M'$ be a measurable mapping between standard Borel spaces with a measurable left inverse.
    If $P$ and $Q$ are two probability distributions on $\mathcal M$ such that $P \ll Q$, then $\mathrm{Prof}_{i_\sharp P \parallel i_\sharp Q } = \mathrm{Prof}_{P \parallel  Q }$. 
\end{theorem}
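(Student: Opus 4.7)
The strategy mirrors Theorem~\ref{theorem:invariance-pmi-profile}: express both profiles as pushforwards of a common measure by functions that differ by a reparametrization which cancels out. Let $j\colon \mathcal M' \to \mathcal M$ denote a fixed measurable left inverse of $i$, so that $j\circ i = \mathrm{id}_{\mathcal M}$. Let $f = \mathrm{d}P/\mathrm{d}Q$ be a version of the Radon--Nikodym derivative on $\mathcal M$. The first step is to identify a Radon--Nikodym derivative of $i_\sharp P$ with respect to $i_\sharp Q$ in terms of $f$ and $j$.

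I claim that $f \circ j$ serves as $\mathrm{d}(i_\sharp P)/\mathrm{d}(i_\sharp Q)$. To verify this, for any measurable $B \subseteq \mathcal M'$ I compute, using the change-of-variables formula for pushforward measures and $j\circ i = \mathrm{id}$,
\[
    \int_B (f\circ j)\,\mathrm{d}(i_\sharp Q) = \int_{i^{-1}(B)} (f\circ j\circ i)\,\mathrm{d}Q = \int_{i^{-1}(B)} f\,\mathrm{d}Q = P(i^{-1}(B)) = i_\sharp P(B).
\]
Existence of $i_\sharp P \ll i_\sharp Q$ is straightforward (as noted in the excerpt) since $i_\sharp Q(B)=0$ implies $Q(i^{-1}(B))=0$ and thus $P(i^{-1}(B))=0$.

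With this in hand, the profile of the pushforwards is
\[
    \Profile_{i_\sharp P\parallel i_\sharp Q} = \bigl(\log(f\circ j)\bigr)_\sharp (i_\sharp P) = \bigl(\log f \circ j \circ i\bigr)_\sharp P = (\log f)_\sharp P = \Profile_{P\parallel Q},
\]
where the middle equality uses functoriality of the pushforward, $(h\circ i)_\sharp P = h_\sharp(i_\sharp P)$, and the third uses $j\circ i = \mathrm{id}$. This is the analogue of the cancellation $f^{-1}\circ f = \mathrm{id}$ that drives the diffeomorphism-invariance proof.

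\textbf{Expected main obstacle.} The delicate point is the handling of null sets: $f$ is only defined $Q$-almost everywhere, and $j$ may take arbitrary values outside $i(\mathcal M)$. The composition $f\circ j$ must therefore be shown to be well-defined as an equivalence class under $i_\sharp Q$-almost-everywhere equality. This is fine because $i_\sharp Q$ is concentrated on $i(\mathcal M)$, where $j$ is determined by $j\circ i = \mathrm{id}$, and if $f$ is replaced by $\tilde f$ agreeing with $f$ off a $Q$-null set $N$, then $f\circ j$ and $\tilde f\circ j$ agree off $i(N)$, which is $i_\sharp Q$-null since $i_\sharp Q(i(N)) = Q(i^{-1}(i(N))) \le Q(\text{set})$ needs care --- one should use that $j^{-1}(N)$ is the relevant null set, with $i_\sharp Q(j^{-1}(N)) = Q(i^{-1}(j^{-1}(N))) = Q((j\circ i)^{-1}(N)) = Q(N) = 0$. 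The standard Borel assumption guarantees the measurability of all mappings and sets involved, so these null-set manipulations are legitimate.
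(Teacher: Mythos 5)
Your proposal is correct and follows essentially the same route as the paper: the paper first proves the same auxiliary lemma that $f\circ a$ (for a measurable left inverse $a$) is a version of $\mathrm{d}(i_\sharp P)/\mathrm{d}(i_\sharp Q)$, via the identical change-of-variables computation, and then concludes by the same pushforward cancellation $(\log f\circ a\circ i)_\sharp P = (\log f)_\sharp P$. Your additional remarks on well-definedness up to $i_\sharp Q$-null sets match the paper's closing remark in its lemma, so there is nothing to add.
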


To prove it, we will use the following version of Lemma~\ref{lemma:invariance-of-pmi}:

\begin{lemma}\label{lemma:invariance-of-radon-nikodym-general}
    Let $i\colon \mathcal M\to \mathcal M'$ be a measurable mapping between standard Borel spaces such that there exists a measurable left inverse $a\colon \mathcal M' \to \mathcal M$.
    If $P$ and $Q$ are two probability distributions on $\mathcal X$ such that $P \ll Q$ and $f = \mathrm{d}P/\mathrm{d}Q$ is the Radon–Nikodym derivative, then $i_\sharp P \ll i_\sharp Q$ and the Radon–Nikodym derivative is given by $\mathrm{d} i_\sharp P / \mathrm{d}i_\sharp Q = f\circ  a$.
\end{lemma}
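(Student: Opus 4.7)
The strategy is to verify the Radon--Nikodym identity directly by a change-of-variables argument, exploiting the fact that $a\circ i=\mathrm{id}_{\mathcal M}$ makes the composition $f\circ a\circ i$ collapse to $f$. No delicate measure-theoretic machinery beyond the standard pushforward integration formula will be needed; the left-inverse hypothesis does all the work.

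\paragraph{Step 1: absolute continuity of the pushforward.}
First I would dispose of the $i_\sharp P\ll i_\sharp Q$ claim. Let $B\subseteq \mathcal M'$ be measurable with $i_\sharp Q(B)=0$. By definition of pushforward, $Q(i^{-1}(B))=0$, and since $P\ll Q$ this forces $P(i^{-1}(B))=0$, i.e.\ $i_\sharp P(B)=0$. So the Radon--Nikodym derivative $\mathrm{d} i_\sharp P/\mathrm{d} i_\sharp Q$ exists.

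\paragraph{Step 2: identifying the derivative.}
The candidate $f\circ a\colon \mathcal M'\to [0,\infty)$ is measurable as a composition of two measurable maps (here is where measurability of the left inverse $a$ is used). To show it is a version of $\mathrm{d} i_\sharp P/\mathrm{d} i_\sharp Q$, I would verify that for every measurable $B\subseteq \mathcal M'$,
\[
    \int_B (f\circ a)\, \mathrm{d}(i_\sharp Q) \;=\; i_\sharp P(B).
\]
Applying the standard change-of-variables formula for pushforward measures, the left-hand side equals
\[
    \int_{i^{-1}(B)} (f\circ a\circ i)\, \mathrm{d}Q.
\]
The key step is now purely algebraic: $a\circ i = \mathrm{id}_{\mathcal M}$, so $f\circ a\circ i = f$ pointwise on $\mathcal M$. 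Hence the integral reduces to $\int_{i^{-1}(B)} f\, \mathrm{d}Q = P(i^{-1}(B)) = i_\sharp P(B)$, which is precisely what we needed. By uniqueness of the Radon--Nikodym derivative (up to $i_\sharp Q$-null sets), $\mathrm{d} i_\sharp P/\mathrm{d} i_\sharp Q = f\circ a$.

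\paragraph{Expected obstacle.}
There is no substantial analytic obstacle; the only point requiring care is a mild one. Radon--Nikodym derivatives are defined only up to $Q$-null sets, so a priori different choices of $f$ yield different functions $f\circ a$. The argument above sidesteps this by verifying the integral identity for \emph{any} pointwise representative of $f$: since the resulting integrals agree with $i_\sharp P(B)$, all such $f\circ a$ coincide $(i_\sharp Q)$-a.e., which is exactly what ``the'' Radon--Nikodym derivative means. The standard Borel assumption is used only implicitly, to ensure the pushforward measures and measurable left inverses behave well; no separate regularity argument is required.
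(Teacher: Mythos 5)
Your proposal is correct and follows essentially the same route as the paper's proof: the change-of-variables formula for pushforward measures combined with the identity $a\circ i=\mathrm{id}_{\mathcal M}$ to collapse $f\circ a\circ i$ to $f$, plus the same observation that the absolute continuity $i_\sharp P\ll i_\sharp Q$ follows from $i_\sharp Q(B)=0\iff Q(i^{-1}(B))=0$. Your closing remark about independence of the choice of representative of $f$ matches the paper's final note that substituting another left inverse (or representative) changes $f\circ a$ only on an $i_\sharp Q$-null set.
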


\begin{proof}[Proof of Lemma~\ref{lemma:invariance-of-radon-nikodym-general}]
    Let $a\colon \mathcal M'\to \mathcal M$ be any measurable left inverse of $i$.
    To prove that $f\circ a\colon \mathcal M'\to [0, \infty)$ is the Radon–Nikodym derivative we need to show that
    \[
        i_\sharp P(B) = \int_B f\circ a \, \mathrm{d}i_\sharp Q
    \]
    for every Borel subset $B\subseteq \mathcal M'$.
    Using the change of variables formula:
    \begin{align*}
     \int_B f\circ a \, \mathrm{d}i_\sharp Q &= \int_{i^{-1}(B)} f\circ a\circ i\, \mathrm{d}Q \\
     &= \int_{i^{-1}(B)} f\, \mathrm{d}Q \\
      &= \int_{i^{-1}(B)} \mathrm{d}P \\
     &= P(i^{-1}(B)) = i_\sharp P(B),
    \end{align*}
    where the second equality follows from  the definition of the left inverse, $a\circ i = \mathrm{id}_{\mathcal M}$.

    Note that even if $a$ is substituted for another left inverse, the Radon–Nikodym derivative $f\circ a$ is still determined uniquely up to a $i_\sharp Q$-null set.
\end{proof}

\begin{proof}[Proof of Theorem~\ref{theorem:pmi-invariant-general}]

The profile is defined as
\(
  \Profile_{ i_\sharp P\parallel i_\sharp Q } = \left(\log \mathrm{d} i_\sharp P/ \mathrm{d} i_\sharp Q\right)_\sharp (i_\sharp P). 
\)
Take any measurable left inverse $a$ of $i$ and write
$$
  \log\frac{\mathrm{d} i_\sharp P }{\mathrm{d} i_\sharp Q} = \log f\circ a,
$$
where $f=\mathrm{d}P/\mathrm{d}Q$. 
We have
\[
\Profile_{ i_\sharp P\parallel i_\sharp Q } =
  (\log f \circ a)_\sharp \, i_\sharp P = (\log f\circ a\circ i)_\sharp P = (\log f)_\sharp P = \Profile_{P\parallel Q}.
\]  
\end{proof}

\begin{remark}
    The above proof suggests that for $P\ll Q$ on a standard Borel space $\mathcal M$ and sufficiently well-behaved functions $u\colon [0, \infty)\to \mathbb R$, one could define a generalized profile
    \[
        \left( u\circ \frac{\mathrm{d}P}{\mathrm dQ}
        \right)_\sharp P,
    \]
    which stays invariant under measurable mappings $i\colon \mathcal M\to \mathcal M'$ with measurable left inverses.

    These quantities can be used to study whether a distribution could not be obtained as a simple reparametrization of another one, similarly as the PMI profiles were used in Fig.~\ref{fig:distinct-profiles-illustration}.
    
    We suspect that it may also be possible to generalize the profiles to use a general $f$-divergence~\citep[Ch.~7]{Polyanskiy-Wu-Information-Theory}, rather than Kullback–Leibler divergence studied in this manuscript.
    \citet{Nowozin-2016-f-GAN} related $f$-divergences and their variational lower bounds (cf.~Sec.~\ref{section:variational-lower-bounds-and-critics}) to generative adversarial networks (GANs), although a relation between a possible ``$f$-divergence profile'' and implications for the GAN training remain unclear to us.
\end{remark}

Theorem~\ref{theorem:pmi-invariant-general} proves invariance under the existence of a measurable left inverse.
\citet[Theorem 2.1]{beyond-normal-2023} prove invariance of the mutual information under reparametrizations by continuous injective mappings.
As the lemma below shows, Theorem~\ref{theorem:pmi-invariant-general} holds also for continuous injective mappings:

\begin{lemma}
    Let $i\colon \mathcal M\to \mathcal M'$ be a continuous injective mapping between two standard Borel spaces.
    Then, $i$ admits a measurable left inverse.
\end{lemma}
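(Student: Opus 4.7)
The plan is to reduce this to a classical result in descriptive set theory, the Lusin--Suslin theorem, which asserts that a continuous injection $i$ between Polish spaces maps Borel sets to Borel sets; in particular the image $i(\mathcal M)$ is Borel in $\mathcal M'$, and the restriction $i\colon \mathcal M\to i(\mathcal M)$ is a Borel isomorphism, so its set-theoretic inverse $i^{-1}\colon i(\mathcal M)\to \mathcal M$ is Borel measurable. Since every standard Borel space admits a Polish topology generating its $\sigma$-algebra, this theorem can be invoked here.

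Granted this, I would construct $a$ by extending $i^{-1}$ to all of $\mathcal M'$ in the only sensible way: fix an arbitrary point $x_0\in \mathcal M$ and set
\[
  a(y) = \begin{cases} i^{-1}(y) & \text{if } y \in i(\mathcal M), \\ x_0 & \text{otherwise.} \end{cases}
\]
The identity $a\circ i = \mathrm{id}_{\mathcal M}$ is immediate from injectivity, so $a$ is a left inverse. For measurability, I would check that for any Borel $B\subseteq \mathcal M$ the preimage $a^{-1}(B)$ equals either $(i^{-1})^{-1}(B)$, when $x_0\notin B$, or $(i^{-1})^{-1}(B)\cup (\mathcal M'\setminus i(\mathcal M))$, when $x_0\in B$; both sets are Borel in $\mathcal M'$ by the first step.

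The conceptual obstacle lies entirely in the appeal to Lusin--Suslin: without the injectivity hypothesis, continuous images of Borel sets are only guaranteed to be analytic rather than Borel, in which case no measurable left inverse need exist. Once the image is known to be Borel and the restricted inverse to be Borel measurable, the remaining argument is an elementary case distinction, so the hard part is really just the citation of the descriptive-set-theoretic fact.
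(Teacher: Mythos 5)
Your proposal is correct and follows essentially the same route as the paper's proof: both fix an arbitrary $x_0\in\mathcal M$, define the same extension $a$ of the set-theoretic inverse, and reduce measurability to the Lusin--Suslin theorem via the same case distinction on whether $x_0\in B$ (your set $(i^{-1})^{-1}(B)$ is exactly the paper's $i(B)$). No substantive differences.
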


\begin{proof}
    Let's choose an arbitrary point $x_0\in \mathcal M$ and define a function $a\colon \mathcal M'\to \mathcal M$ in the following manner:
    \[
    a(y) = \begin{cases}
     x_0 &\text{ if } y\not\in i(\mathcal M)\\ 
     x   &\text{ if } x \text{ is the (unique) point such that } i(x) = y
    \end{cases}
    \]

    This function is well-defined due to the fact that $i$ is injective and it is a left inverse: $a(i(x)) = x$ for all $x\in \mathcal X$.
    Now we need to prove that it is, indeed, measurable.

    Take any Borel set $B\subseteq \mathcal X$ and consider its preimage $a^{-1}(B) = \{y \in \mathcal M' \mid a(y) \in B \}$.
    If $x_0\notin B$, we have $a^{-1}(B) = i(B)$, which is Borel by Lusin–Suslin theorem.
    If $x_0\in B$, we can write

    \begin{align*}
        a^{-1}(B) &=  a^{-1}( B\setminus \{x_0\} ) \cup a^{-1}(\{x_0\}) \\
          &= i(B\setminus \{x_0\}) \cup \{ i(x_0) \} \cup ( \mathcal M' \setminus i(\mathcal M)),
    \end{align*}
    which is Borel as a finite union of Borel sets.
\end{proof}

\section{Distributions involving discrete variables}
\label{appendix:discrete-variables}

The formalism in Section~\ref{section:pmi-profiles-theory} is applicable to both continuous and discrete random variables, although in Section~\ref{section:cool-tasks} we focus on the distributions in which both $X$ and $Y$ are continuous.
If $X$ and $Y$ are discrete, mutual information $\mutualinformation{X}{Y}$ can be calculated analytically from the joint probability matrix and there exist numerous approaches to estimate it from collected samples~\citep{Hutter-2001, Brillinger-2004}.

In this section we consider the mixed case, in which one variable is continuous and the other one is discrete.
For example, \citet{Carrara2023-KSG-MCTS} describe a particle physics experiment in which $X$ is an 18-dimensional random variable, but $Y$ is binary.
\citet{Grabowski-2019-systems-biology} consider a cell transmitting information through the MAPK signalling pathway, assuming the input signal $X$ to be discrete and the measured response $Y$ to be continuous.

\subsection{Known distributions}
There are only a few distributions $P_{XY}$ with known ground-truth mutual information assuming this discrete-continuous case.
\citet[Sec.~5]{Gao-2017-DiscreteContinuous} describe a discrete random variable $X$ which is uniformly sampled from the set $\mathcal X = \{0, \dotsc, m-1\}$ and the continuous $Y$ variable is sampled as
\[
    (Y\mid X=x) \sim \Uniform(x, x+2),
\]
which is therefore distributed on $\mathcal Y = (0, m+1)$.
\citet{Gao-2017-DiscreteContinuous} prove that mutual information in this case is
\[
    \mutualinformation{X}{Y} = \log m - \frac{m}{m-1} \log 2
\]
for $m\ge 2$ and $\mutualinformation{X}{Y} = 0$ for $m=1$ and consider a multivariate analogue of this distribution, in which pairs of variables $(X_k, Y_k)$ for $k=1, \dotsc, K$ are sampled independently using the above procedure. Then, they are concatenated into multivariate vectors $(X_1, \dotsc, X_K)$ and $(Y_1, \dotsc, Y_K)$ with $K$ times larger mutual information.

\begin{figure*}[t]
\centering
\includegraphics[width=0.7\linewidth]{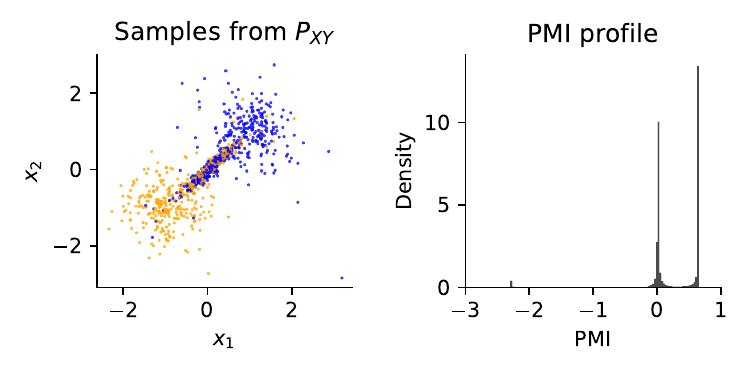}\\
\caption{
    Left: samples from the $P_{XY}$ distribution, colored by the value of the binary variable $Y$.
    Right: the PMI profile of $P_{XY}$ distribution.
}
\label{fig:discrete-continuous-mixture}
\end{figure*}

We will show how to relate the bivariate example to the framework of \CoolModelFullName{}s (multivariate case can be constructed analogously).
Note that the joint distribution $P_{XY}$ is not strictly in $\Pnice{\mathcal X}{\mathcal Y}$, as it is supported on the one-dimensional manifold $\mathcal M\subset\mathcal X\times \mathcal Y$ with $m$ connected components (cf.~\citet{politis:91:entropy-mixture} and \citet{marx:21:myl}), on which
\[
    p_{XY}(x, y) = \frac{1}{2m} \mathbf{1}[ y\in (x, x+2)],
\]
but it is possible to extend the definitions of Section~\ref{section:pmi-profiles-theory}, so that this technical difficulty is resolved.
The marginal distributions are tractable and admit PDFs:
\begin{align*}
    p_X(x) &= 1/m,\\
    p_Y(y) &= \sum_{x=0}^{m-1} p_{XY}(x, y).
\end{align*}
Although $p_Y$ is not smooth on $\mathcal Y$ (at integer points), this technical difficulty can also be resolved as this set is of measure zero.
Hence, although this distribution is not strictly a \CoolModel{}, it can still be modeled using the introduced framework.

Next, \citet[Sec.~5]{Gao-2017-DiscreteContinuous} consider the zero-inflated Poissonization of the exponential random variable valued in $\mathcal X = \{ x\in \mathbb R \mid x \ge 0 \}$, with:
\(
    X \sim \mathrm{Exp}(1)
\)
and $Y$ being a discrete random variable valued in the set of non-negative integers $\mathcal Y = \{0, 1, 2, \dotsc\}$:
\begin{equation*}
    (Y \mid X=x) \sim p\, \delta_0 + (1-p) \, \mathrm{Poisson}(x). 
\end{equation*}
They show that
\[
    \mutualinformation{X}{Y} = (1-p)\left(2\log 2 -\gamma -\sum_{k=0}^{\infty} \log k \cdot 2^{-k}\right),
\]
where $\gamma$ is the Euler--Mascheroni constant.

To use \CoolModel{}s in this case, one has to formally extend the definition of $\Pnice{\mathcal X}{\mathcal Y}$ as $\mathcal X$ is a manifold with boundary~\citep[Ch.~1]{Lee-2003-SmoothManifolds}.
The joint probability distribution is then given by
\[
    p_{XY}(x, y) = p_X(x)\cdot p_{Y\mid X}(y\mid x) = e^{-x} \cdot \left( p \cdot \mathbf{1}[y=0] +  (1-p) \frac{e^{-x} x^y}{y!} \right)
\]
and the PMF function of the $Y$ variable is also analytically known:
\[
    p_Y(y) = p\cdot \mathbf{1}[y=0]  + (1-p) \cdot 2^{-(1+y)}
\]
Hence, the framework of \CoolModel{}s, with minor technical adjustments, can accommodate the above distributions.

\subsection{Novel distributions}
However, \CoolModelFullName{}s also allow one to create more expressive distributions, for which analytical formulae for ground-truth mutual information are not available, but can be approximated with the Monte Carlo methods as explained in Section~\ref{section:pmi-profiles-theory}:
consider a continuous random variable $X$ and a discrete random variable $Y$.
To introduce a dependency between $X$ and $Y$ variable, we can use a mixture of distributions in which the component variables are independent, i.e., $P_{X_kY_k} = P_{X_k}\otimes P_{Y_k}$.
Therefore, we consider a graphical model $X\leftarrow Z \rightarrow Y$, in which the distributions $P_{X_k} = P_{X\mid Z=k}$ are known and have tractable PDFs.
The distributions of $P_{Y_k} = P_{Y\mid Z_k}$ are given by probability tables.
Monte Carlo estimators of Section~\ref{section:pmi-profiles-theory} can then be used to estimate $\mutualinformation{X}{Y}$ with high accuracy.
Note also that this general procedure includes the case $X\leftarrow Y$ by setting $Z=Y$.

We illustrate it in a simple example with $\mathcal X = \mathbb R^2$, $\mathcal Y = \{0, 1\}$ and $K=3$ components.
The first component models a cluster in the $\mathcal X$ space, strongly associated with $Y=1$ value.
For $P_{X_1}$ we use a bivariate Student distribution centered at $(1, 1)$ with isotropic dispersion $\Omega = 0.2 \cdot I_2$ and 8 degrees of freedom.
We take $P_{Y_1}$ to be the Bernoulli variable with probability $P(Y_1=1) = 0.95$.

Analogously, we define a second cluster, strongly associated with $Y=0$ value: $P_{X_2}$ is a bivariate Student distribution with the same dispersion matrix, but centered at $(-1, -1)$ and with 8 degrees of freedom. Then, $Y_2$ is a Bernoulli variable with $P(Y_2 = 1) = 0.05$.

We then define a third component using a bivariate normal distribution centered at $(0, 0)$ and with covariance matrix
\[
    \Sigma = 0.1 \begin{pmatrix}
        1 & 0.95\\
        0.95 & 1
    \end{pmatrix}.
\]
This component is not informative of $Y$, that is $P(Y_3=1) = 0.5$.

We used weights $w_1 = w_2 = 1/4$ and $w_3 = 1/2$, what resulted in the distribution visualised in Fig.~\ref{fig:discrete-continuous-mixture}.
We estimated both the profile and the mutual information using $N=10^6$ samples and obtained $\mutualinformation{X}{Y}=0.224$ with MCSE of $5.1\cdot 10^{-4}$.

In principle, the above construction can be used to generate realistic high-dimensional data sets (e.g.,~audio or image) with known ground-truth mutual information, by assuming the generative model $Y\to X$ (i.e.,~$Z=Y$) and modeling each $P_{X_k}$ using a normalizing flow or an autoregressive model~\citep[Ch.~22]{Murphy-ProbabilisticMachineLearning-AdvancedTopics} trained on an auxiliary data set with fixed label $Y_k=y_k$.
Hence, at least in principle, one could obtain highly-expressive generative process $p_{X\mid Y}(x\mid y)$ with tractable probability and sampling. Pairing this with an arbitrary probability vector $p_Y(y)$ one can obtain $p_{XY}(x, y)$ and $p_X(x)$ even for high-dimensional data sets, so that Monte Carlo estimator can be used to determine the ground-truth mutual information.
However, we anticipate possible practical difficulties with scaling up the proposed approach to high-dimensional data and we leave empirical investigation of this topic to future work. 

\section{Experimental protocols and additional experiments}
\label{appendix:experimental-details}

In this section we provide detailed information on experimental protocols used (such as hyperparameter choices) and supplement the findings of Sec.~\ref{section:applications} with additional experiments.

\subsection{Specification of the proposed distributions}
\label{appendix:cool-tasks}

In this section we provide the details of the distributions introduced in Sec.~\ref{section:cool-tasks}.
To estimate the ground-truth mutual information we used the Monte Carlo approach described in Sec.~\ref{section:pmi-profiles-theory} with $N=200\,000$ samples.

\subsubsection{The X distribution}
We constructed the X distribution as a mixture of bivariate normal distributions with equal weights, zero mean and covariance matrices specified by
\begin{equation*}
    \Sigma_{\pm} = 0.3\begin{pmatrix}
        1 & \pm 0.9\\
        \pm 0.9 & 1
    \end{pmatrix}.
\end{equation*}
Note that the marginal distributions of each of component distributions is $\mathcal N(0, 0.3^2)$ and subsequently their mixture has exactly the same marginal distributions.
This is therefore an interesting example of a distribution in which the joint probability distribution is not multivariate normal, although the marginal distributions of $X$ and $Y$ variables are normal individually.
The mutual information is in this case $\mutualinformation{X}{Y}=0.41$ nats.

\subsubsection{The AI distribution}
The AI distribution was constructed as an equally-weighted mixture of six bivariate normal distributions with equal weights and the following parameters:
\begin{align*}
    \mu_1 &= (1, 0)\\
    \Sigma_1 &= \mathrm{diag}(0.01, 0.2)\\
    \mu_2 &= (1, 1)\\
    \Sigma_2 &= \mathrm{diag}(0.05, 0.001)\\
    \mu_3 &= (1, -1)\\
    \Sigma_3 &= \mathrm{diag}(0.05, 0.001)\\
    \mu_4 &= (-0.8, -0.2)\\
    \Sigma_4 &= \mathrm{diag}(0.03, 0.001)\\
    \mu_5 &= (-1.2, 0)\\
    \Sigma_5 &= \begin{pmatrix}
        0.04 & 0.085\\
        0.085 & 0.2
    \end{pmatrix}\\
    \mu_6 &= (-0.4, 0)\\
    \Sigma_6 &= \begin{pmatrix}
        0.04 & -0.085\\
        -0.085 & 0.2
    \end{pmatrix}
\end{align*}

The mutual information of this distribution is $\mutualinformation{X}{Y} = 0.78$ nats. 

\subsubsection{The Galaxy distribution}
The Galaxy distribution was constructed as an equally-weighted mixture of isotropic multivariate normal distributions with $\mu_{\pm} = \pm (1, 1, 1)$ and unit covariance matrix and the $X$ variable was transformed using the spiral diffeomorphism with $v=0.5$, which is described by \citet{beyond-normal-2023}.
The Galaxy distribution contains $\mutualinformation{X}{Y}=0.49$ nats.

\subsubsection{The Waves distribution}
The Waves distribution was created as an equally-weighted mixture of 12 multivariate normal distributions with equal covariance matrices $\Sigma = \mathrm{diag}(0.1, 1, 0.1)$ and mean vectors
\[
    \mu_i = (x, 0, x\, \mathrm{mod}\, 4),\quad i \in \{0, 1, \dotsc, 11\}.
\]
This construction results in a distribution where different vertical components of the $X$ variable are assigned $Y$ values calculated modulo 4. Then, we transformed the $X$ variable with a continuous injection
\[
    f(x_1, x_2) = (x_1 + 5\sin(3x_2), x_2),
\]
which does not change the mutual information. Finally, we applied the affine mappings
\[
    a_1(x) = 0.1x - 0.8,\quad a_2(y) = 0.5y, 
\]
to make the range of the typical values comparable with other distributions.
This distribution encodes $\mutualinformation{X}{Y} = 1.31$ nats.

\subsection{A benchmark extension proposal}
\label{appendix:extended-benchmark}

In Sec.~\ref{section:cool-tasks} and Appendix~\ref{appendix:cool-tasks} we provide examples of novel low-dimensional distributions, which pose a significant challenge to the mutual information estimators.
Therefore, we would like to propose a new benchmark of mutual information estimators, extending the benchmark of \citet{beyond-normal-2023}.

\subsubsection{Included distributions}

\newcommand{\asinh}{\mathrm{asinh}}

The original benchmark of \citet{beyond-normal-2023} consists of 40 distributions.
On several problems the estimator performance was similar, so that choosing a representative from a group could allow us to reduce the computation amount.
Additionally, we decided to remove the tasks which we considered too difficult: for example, the Student distributions are mapped through the $\asinh$ transform to remove the tails.
Together with the new tasks, based on \CoolModel{}s the new version of the benchmark consists of \nTasksNewBenchmark{} distributions grouped as follows:

\paragraph{One-dimensional variables}
We include four distributions such that $\mathcal X=\mathcal Y = \mathbb R$.
From the original benchmark we decided to retain the additive noise task with $\varepsilon=0.75$ and the $\mathrm{asinh}$-transformed version of the centered bivariate Student distribution with one degree of freedom and the identity matrix for the dispersion (see \citet[App.~D]{beyond-normal-2023} for precise descriptions). 
Additionally, we include X and AI distributions described in Appendix~\ref{appendix:cool-tasks}.

\paragraph{Embeddings} We retained the Swiss roll embedding distribution \citep[App.~D]{beyond-normal-2023}.
In this problem the distribution $P_{XY}$ is supported on a two-dimensional surface embedded in $\mathbb{R}^3$.
This distribution does not have a density with respect to the Lebesgue measure on $\mathbb{R}^3$.

\paragraph{Many-versus-one distributions}
We consider multiple distributions with $\Xmanifold = \mathbb R^m$ and $\Ymanifold = \mathbb R$. For $m=2$ we consider the Waves and Galaxy distributions described in Appendix~\ref{appendix:cool-tasks}.
For other $m$ we propose the concentric isotropic multivariate normal distributions.
Namely, let $K$ be a parameter, specifying the number of components of a \CoolModel{}.
Each component has independent (multivariate) normal variables $X_k\sim \mathcal{N}(0, k^2I_m)$ and $Y_k\sim \mathcal N(k, 10^{-4})$.
To obtain non-zero mutual information we mix these distributions with equal proportions, i.e., the weights vector is given by $w_k = 1/K$ for all $k$.
We consider four tasks obtained by varying $m\in \{3, 5\}$ and $K\in \{5, 10\}$.
Additionally, we consider a high-dimensional distribution with $m=25$ and $K=5$ components. 

\paragraph{Multivariate normal and Student distributions}
We selected five problems from multivariate normal distributions \citep{beyond-normal-2023}.
We use $\Xmanifold = \mathbb R^m$ and $\Ymanifold = \mathbb R^n$ and the dense interaction model jointly changing $m=n\in\{5, 25, 50\}$ dimensions and the sparse interactions model (with two pairs of interacting dimensions) in $m=n\in \{5, 25\}$ dimensions.

Additionally, we selected three multivariate Student distributions with the dispersion being the identity matrix.
For $m=n=2$ we consider a distribution with one degree of freedom (i.e., the multivariate Cauchy distributions, which does not have first two moments) and for $m=n\in \{3, 5\}$ we consider a distribution with two degrees of freedom (for which the first moment is defined, but not the second).
As described above, the Student distributions had been transformed with the $\asinh$ mapping to reduce the tails.

\paragraph{Spiral diffeomorphism}
As described in \citet{beyond-normal-2023}, one can transform the multivariate normal distribution with the spiral diffeomorphism, which results in empirically challenging problems.
We selected the distributions corresponding to the sparse interactions in $m=n\in \{3, 5\}$ dimensions.

\paragraph{Inliers} Finally, we implemented four distributions based along the principles described in Sec.~\ref{section:outliers}.
Namely, we took the multivariate normal distributions with sparse interactions (described above) with $m=n\in \{5, 25\}$ dimensions $P_{XY}$ and then constructed a mixture $(1-\alpha) P_{XY} + \alpha P_{X}\otimes P_Y$ for the inlier fraction $\alpha\in \{0.2, 0.5\}$.

\subsubsection{Experimental protocol}

\paragraph{Determining ground-truth value}
The ground-truth mutual information for distributions proposed by \citet{beyond-normal-2023} is analytically available.
For the tasks based on \CoolModel{}s we used $N=200,\!000$ Monte Carlo samples to provide an estimate, as described in Section~\ref{subsection:theory-estimating-pmi-profiles}.

\paragraph{Included estimators}
We included a total of six estimators in this benchmark: four variational estimators, the neigbhorhood-based KSG estimator~\citep{kraskov:04:ksg}, and the simple CCA-based estimator of \citet{kay-elliptic}.
We chose them as the most promised approaches: \citet{beyond-normal-2023} argue that KSG is the preferred estimator for low-dimensional problems, neural estimators are preferred choices for high-dimensional problems if only a few dimensions are interacting, and the CCA-based estimator is the optimal choice for multivariate normal distributions.

\paragraph{Reported uncertainty}
For each distribution we sampled $N'=5,\!000$ data points $S=10$ times to obtain different data sets on which the estimators were run.
For each estimator and a task we then calculated the mean and the standard deviation (basing on the $S-1$ degrees of freedom), which we reported in the table up to two decimal digits.
We rounded the standard deviation upwards to two decimal digits to not underestimate it.

\subsubsection{Benchmark results}

\begin{table}
  \caption{Benchmark results. New benchmark problems are marked in green. Best-perfoming estimator in each row has been marked with bold font.}
  \label{table:benchmark-results}
  \centering
  \begin{tiny}
    \newcommand{\NewTask}[1]{\textcolor{green!50!black}{#1}}

\begin{tabular}{lrrrrrrr}
\toprule
 & CCA & DV & InfoNCE & KSG & MINE & NWJ & True MI \\
 &  &  &  &  &  &  &  \\
\midrule
\NewTask{AI} & $0.00 \pm 0.01$ & $0.57 \pm 0.06$ & $0.61 \pm 0.03$ & $\mathbf{0.78 \pm 0.02}$ & $0.49 \pm 0.07$ & $0.53 \pm 0.10$ & \textbf{0.78} \\
\NewTask{X} & $0.00 \pm 0.01$ & $0.37 \pm 0.02$ & $0.38 \pm 0.02$ & $\mathbf{0.42 \pm 0.02}$ & $0.33 \pm 0.04$ & $0.36 \pm 0.02$ & \textbf{0.41} \\
Additive & $0.18 \pm 0.01$ & $0.31 \pm 0.02$ & $0.30 \pm 0.02$ & $\mathbf{0.32 \pm 0.01}$ & $\mathbf{0.32 \pm 0.02}$ & $\mathbf{0.32 \pm 0.01}$ & \textbf{0.33} \\
Swiss roll & $0.02 \pm 0.01$ & $0.37 \pm 0.02$ & $0.37 \pm 0.03$ & $\mathbf{0.42 \pm 0.02}$ & $0.37 \pm 0.03$ & $0.37 \pm 0.02$ & \textbf{0.41} \\
\NewTask{Galaxy} & $0.01 \pm 0.01$ & $0.31 \pm 0.04$ & $0.36 \pm 0.03$ & $\mathbf{0.46 \pm 0.01}$ & $0.26 \pm 0.06$ & $0.23 \pm 0.04$ & \textbf{0.49} \\
\NewTask{Waves} & $0.04 \pm 0.01$ & $0.14 \pm 0.07$ & $0.34 \pm 0.13$ & $\mathbf{0.66 \pm 0.01}$ & $0.13 \pm 0.10$ & $0.11 \pm 0.07$ & \textbf{1.31} \\
\NewTask{Concentric (3-dim, 10)} & $0.00 \pm 0.01$ & $0.50 \pm 0.02$ & $0.50 \pm 0.02$ & $\mathbf{0.51 \pm 0.01}$ & $0.45 \pm 0.02$ & $0.44 \pm 0.03$ & \textbf{0.56} \\
\NewTask{Concentric (3-dim, 5)} & $0.00 \pm 0.01$ & $0.41 \pm 0.03$ & $0.41 \pm 0.02$ & $\mathbf{0.42 \pm 0.02}$ & $0.38 \pm 0.04$ & $0.36 \pm 0.03$ & \textbf{0.46} \\
\NewTask{Concentric (5-dim, 10)} & $0.00 \pm 0.01$ & $0.65 \pm 0.03$ & $\mathbf{0.66 \pm 0.03}$ & $0.55 \pm 0.01$ & $0.62 \pm 0.04$ & $0.56 \pm 0.02$ & \textbf{0.75} \\
\NewTask{Concentric (5-dim, 5)} & $0.00 \pm 0.01$ & $0.55 \pm 0.03$ & $\mathbf{0.56 \pm 0.02}$ & $0.46 \pm 0.01$ & $0.53 \pm 0.03$ & $0.42 \pm 0.05$ & \textbf{0.63} \\
\NewTask{Concentric (25-dim, 5)} & $0.00 \pm 0.01$ & $0.67 \pm 0.06$ & $\mathbf{0.68 \pm 0.04}$ & $0.12 \pm 0.01$ & $0.65 \pm 0.04$ & $0.28 \pm 0.06$ & \textbf{1.19} \\
Student (1-dim) & $0.00 \pm 0.01$ & $0.17 \pm 0.07$ & $0.20 \pm 0.02$ & $\mathbf{0.22 \pm 0.01}$ & $0.18 \pm 0.01$ & $0.18 \pm 0.06$ & \textbf{0.22} \\
Student (2-dim) & $0.00 \pm 0.01$ & $0.36 \pm 0.03$ & $\mathbf{0.37 \pm 0.03}$ & $0.36 \pm 0.02$ & $0.27 \pm 0.10$ & $0.24 \pm 0.08$ & \textbf{0.43} \\
Student (3-dim) & $0.00 \pm 0.01$ & $0.16 \pm 0.08$ & $\mathbf{0.20 \pm 0.03}$ & $0.17 \pm 0.01$ & $0.09 \pm 0.08$ & $0.03 \pm 0.01$ & \textbf{0.29} \\
Student (5-dim) & $0.01 \pm 0.01$ & $0.21 \pm 0.07$ & $\mathbf{0.28 \pm 0.03}$ & $0.23 \pm 0.01$ & $0.13 \pm 0.12$ & $0.03 \pm 0.03$ & \textbf{0.45} \\
\NewTask{Inliers (25-dim, 0.2)} & $\mathbf{0.58 \pm 0.03}$ & $0.34 \pm 0.04$ & $0.27 \pm 0.09$ & $0.12 \pm 0.03$ & $0.39 \pm 0.05$ & $0.41 \pm 0.04$ & \textbf{0.63} \\
\NewTask{Inliers (25-dim, 0.5)} & $\mathbf{0.24 \pm 0.02}$ & $-0.11 \pm 0.09$ & $-0.08 \pm 0.04$ & $0.05 \pm 0.02$ & $0.08 \pm 0.02$ & $0.06 \pm 0.04$ & \textbf{0.27} \\
\NewTask{Inliers (5-dim, 0.2)} & $0.52 \pm 0.02$ & $0.55 \pm 0.03$ & $0.55 \pm 0.03$ & $0.45 \pm 0.02$ & $0.54 \pm 0.05$ & $\mathbf{0.56 \pm 0.08}$ & \textbf{0.63} \\
\NewTask{Inliers (5-dim, 0.5)} & $0.18 \pm 0.01$ & $0.19 \pm 0.02$ & $0.18 \pm 0.03$ & $0.19 \pm 0.01$ & $0.20 \pm 0.04$ & $\mathbf{0.21 \pm 0.04}$ & \textbf{0.27} \\
Normal (25-dim, dense) & $\mathbf{1.35 \pm 0.02}$ & $1.09 \pm 0.07$ & $1.06 \pm 0.06$ & $1.05 \pm 0.02$ & $1.16 \pm 0.05$ & $0.18 \pm 0.38$ & \textbf{1.29} \\
Normal (5-dim, dense) & $\mathbf{0.60 \pm 0.02}$ & $0.55 \pm 0.03$ & $0.54 \pm 0.03$ & $0.56 \pm 0.01$ & $0.56 \pm 0.03$ & $0.56 \pm 0.02$ & \textbf{0.59} \\
Normal (50-dim, dense) & $1.87 \pm 0.02$ & $1.26 \pm 0.07$ & $1.20 \pm 0.09$ & $1.25 \pm 0.02$ & $\mathbf{1.45 \pm 0.05}$ & $0.62 \pm 0.78$ & \textbf{1.62} \\
Normal (25-dim, sparse) & $\mathbf{1.08 \pm 0.02}$ & $0.69 \pm 0.05$ & $0.67 \pm 0.07$ & $0.18 \pm 0.02$ & $0.78 \pm 0.05$ & $0.79 \pm 0.03$ & \textbf{1.02} \\
Normal (5-dim, sparse) & $\mathbf{1.03 \pm 0.02}$ & $0.95 \pm 0.02$ & $0.95 \pm 0.02$ & $0.69 \pm 0.02$ & $0.92 \pm 0.05$ & $0.95 \pm 0.02$ & \textbf{1.02} \\
Spiral (3-dim) & $0.24 \pm 0.02$ & $0.50 \pm 0.03$ & $0.53 \pm 0.04$ & $\mathbf{0.65 \pm 0.02}$ & $0.43 \pm 0.05$ & $0.45 \pm 0.03$ & \textbf{1.02} \\
Spiral (5-dim) & $0.39 \pm 0.02$ & $0.48 \pm 0.03$ & $\mathbf{0.52 \pm 0.03}$ & $0.50 \pm 0.01$ & $0.45 \pm 0.03$ & $0.48 \pm 0.03$ & \textbf{1.02} \\
\bottomrule
\end{tabular}

  \end{tiny}
\end{table}

We present the obtained results in Table~\ref{table:benchmark-results}. Results for the BMM estimator on low-dimensional tasks are shown in Table~\ref{table:bmm-results}.

\textbf{Problems not solved}~~Interestingly, the Waves task is not solved by any estimator, with the best-performing one being KSG and underestimating the MI by 50\%. 
Similarly, a high-dimensional Concentric distribution (25-dimensional variables with 5 components) is not solved by any estimator. This contrasts with using a single component (Normal (25-dim, dense) and Normal (25-dim, sparse)), for which the CCA estimator can be used. Problems involving the spiral diffeomorphism remain unsolved.

\textbf{KSG}~~For low-dimensional problems, the KSG estimator seems preferable, which agrees with the conclusions from \citet{beyond-normal-2023}.

\textbf{Neural estimators}~~In high-dimensional problems (25-dimensional or 50-dimensional variables), neural estimators generally outperform KSG. However, they typically still underestimate the ground-truth mutual information and the model-based CCA estimate may be preferable in the problems involving multivariate normal distribution.

\textbf{CCA}~~In the proposed benchmark it is visible that a CCA-based estimator is not competitive on majority of the problems.
The concentric multivariate normal distributions, designed as adversarial examples, indeed result in CCA not being able to find any mutual information.
Moreover, we see that larger numbers of inliers result in worse predictions.
Interestingly, for $N=5,\!000$ data points and high-dimensional problems the CCA-based estimator can overestimate the result, by overfitting to the noise.
This supports the view that regularizing may improve the estimates (see~Sec.~\ref{section:our-new-estimators}).

\subsection{Estimator hyperparameters}
\label{appendix:estimator-hyperparameters}

\citet[App.~E.4]{beyond-normal-2023} study the effects of hyperparameters on mutual information estimators.
We decided to use the histogram-based estimator~\citep{Cellucci-HistogramsMI, Darbellay-HistogramsMI} with a fixed number of 10 bins per dimension and the popular KSG estimator \citep{kraskov:04:ksg} with $k=10$ neighbors.
Canonical correlation analysis \citep{kay-elliptic, Brillinger-2004} does not have any hyperparameters.
Finally, we variational estimators with the neural critic being a ReLU network of variant M (with 16 and 8 hidden neurons), as it obtained competitive performance in the benchmark of \citet[App.~E.4]{beyond-normal-2023}.
As a preprocessing strategy, we followed \citet[App.~E.3]{beyond-normal-2023} and transformed all samples to have zero empirical mean and unit variance along each dimension.
Our code is based on the MIT-licensed code associated with the \citet{beyond-normal-2023} publication.

\subsection{Variational estimators of mutual information}
\label{appendix:more-on-neural-estimators}

In Sec.~\ref{section:variational-lower-bounds-and-critics} we study the loss of \citet{belghazi:18:mine} as well as two other loss functions.
This section provides a more detailed overview of these approaches.
At the same time, our review is not exhaustive: many more variational lower bounds exist and have been described in the literature.
For a detailed overview we refer to the articles by \citet{poole2019variational} and \citet{song:20:understanding}, as well as to Chapters 4 and 7 of the textbook by \citet{Polyanskiy-Wu-Information-Theory}.

\citet{belghazi:18:mine} use the Donsker--Varadhan loss,
\[
    I_\text{DV}(f) = \mathbb E_{P_{XY}}[f] - \log \mathbb E_{ P_X\otimes P_Y }\left[\exp f\right],
\]
which is a lower bound on $\mutualinformation{X}{Y}$ for any bounded function $f$.
Hence, they decide to use a neural critic, i.e., a neural network $f\colon \Xmanifold\times \Ymanifold\to \mathbb R$ 
maximizing the functional form to obtain an (approximate) lower bound on mutual information.

As \citet{poole2019variational} describe, this bound becomes tight if $c$ is any real number and $f=\PMI_{XY} + c$, i.e., $I_\text{DV}(f) = \mutualinformation{X}{Y}$, so one can approach MI estimation to optimisation $I_\text{DV}$ over a flexible family of functions $f$ parameterized by neural networks.
However, we note that as only a finite sample is available, the expectation value cannot be calculated exactly, so that any provided estimate does not need to be a lower bound on MI.
Moreover, if no split into training and test set is used, then $f$ may overfit and provide biased estimates \citep{McAllester-2020-FormalLimitations}.

Another lower bound, introduced in \citet{NWJ2007},
\[
    I_\text{NWJ}(f) =
        \mathbb{E}_{ P_{XY} }[f] - \mathbb{E}_{ P_X\otimes P_Y }\left[ \exp{(f-1)} \right],
\]
becomes tight for $f=\PMI_{ XY } + 1$.

\citet{oord:18:infonce} propose a variational approximation loss which uses a batch of $(x_i, y_i)_{i=1, \dotsc, n}$ samples from $P_{XY}$ to estimate 
\[
   I_\text{NCE}(f) = \mathbb E\left[ \frac{1}{n}\sum_{i=1}^n \log \frac{
        \exp f(x_i, y_i)
    }{
         \frac 1n \sum_{j=1}^n \exp f(x_i, y_j)
    }
    \right]
\]
Here, if
\(
    f(x, y) = \PMI_{XY}(x, y) + c(x),
\)
where $c$ is any function, then $I_\text{NCE}(f) \to \mutualinformation{X}{Y}$ as $n\to \infty$.

\subsection{Bayesian estimation of Gaussian mixture models}
\label{appendix:gaussian-mixtures}

In this section we provide additional details on using \CoolModel{}s to estimate mutual information and the pointwise mutual information profile.

\subsubsection{Model description}
Recall from Sec.~\ref{section:our-new-estimators} that Bayesian estimation of mutual information consists of the following steps:
\begin{enumerate}
    \item Propose a parametric generative model of the data, $P_\theta := P(X, Y\mid \theta)$, and assume a prior $P(\theta)$ on the parameter space.
    \item Use a Markov chain Monte Carlo method to obtain a sample $\theta^{(1)}, \dotsc, \theta^{(m)}$ from the posterior $P(\theta\mid X_1, Y_1, \dotsc, X_N, Y_N)$.
    \item Estimate mutual information (and the PMI profile) for each $\theta^{(m)}$ using the Monte Carlo method described in Sec.~\ref{section:pmi-profiles-theory}.
    \item Validate the findings using e.g.,~posterior predictive checks and cross-validation.
\end{enumerate}

We consider the following sparse Gaussian mixture model with $K=10$ components:
\begin{align*}
    \pi &\sim \mathrm{Dirichlet}(K; 1/K, 1/K, \dotsc, 1/K),\\
    Z_n \mid \pi &\sim \mathrm{Categorical}(\pi), &n = 1, \dotsc, N,\\
    \mu_{k} &\sim \mathcal N\left(0, 3^2 I_D \right), &k=1,\dotsc, K,\\
    \Sigma_{k} &\sim \mathrm{ScaledLKJ}(1, 1), &k=1,\dotsc, K, \\
    (X_n, Y_n)\mid Z_n, \{\mu_k, \Sigma_k\} &\sim \mathcal N(\mu_{Z_n}, \Sigma_{Z_n}), &n = 1, \dotsc, N.
\end{align*}

Sampling a single covariance matrix $\Sigma$ from $\mathrm{ScaledLKJ}(\sigma, \eta)$ distribution corresponds to sampling the correlation matrix $R$ from the Lewandowski-Kurowicka-Joe (LKJ) distribution~\citep{LKJ-prior-2009}:
\[
    p(R) \propto (\det R)^{\eta-1},
\]
sampling the scale parameters
\[
    \lambda_1, \lambda_2, \dotsc, \lambda_D \sim \mathrm{HalfCauchy}(\mathrm{scale}{=}\sigma),
\]
and then constructing the covariance matrix as $\Sigma_{ij} = R_{ij}\lambda_i\lambda_j$.

The sparse Dirichlet prior is a finite-dimensional alternative to the Dirichlet process, which truncates the number of occupied clusters depending on the data~\citep{Fruehwirth-From_here_to_infinity}.
In particular, the \emph{a priori} expected number of clusters depends on the number of data points to be observed.
We generally use NumPyro~\citep{Phan-2019-NumPyro} with local latent variables $Z_n$ marginalized out, what allows us to run Markov chain Monte Carlo inference using the NUTS sampler~\citep{Hoffman-2014-NUTS-sampler}.

The $m$th sample is therefore given by
\[
    \theta^{(m)} = \left( \pi^{(m)}, \left(\mu_k^{(m)}, \Sigma_k^{(m)}\right)_{k=1, \dotsc, K}  \right)
\]
which is then used to parametrize a Gaussian mixture distribution $P_{\theta^{(m)}}$.
Finally, using Monte Carlo method described in Sec.~\ref{section:pmi-profiles-theory} we then estimated mutual information $\modelmutualinformation{ \theta^{(m)} }$ and the PMI profile.

\subsubsection{\CoolModel{}s performance on the proposed benchmark}
\label{appendix:bmm-performance-on-benchmark}

\begin{table}
  \caption{\CoolModel{}-provided estimates for selected low-dimensional problems. Each run corresponds to estimation on a different sample and reports the mean together with a credibility interval obtained from the $10$th and $90$th percentile.\cite{} Bold font represents problems solved by \CoolModel{}s.
}
  \label{table:bmm-results}
  \centering
  \begin{tiny}
    \begin{tabular}{lllll}
\toprule
 & True MI & Run 1 & Run 2 & Run 3 \\
\midrule
\textbf{Additive} & 0.33 & 0.31 (0.29–0.34) & 0.29 (0.26–0.31) & 0.31 (0.29–0.34) \\
\textbf{X} & 0.41 & 0.42 (0.38–0.45) & 0.41 (0.38–0.45) & 0.41 (0.38–0.44) \\
\textbf{AI} & 0.78 & 0.78 (0.74–0.82) & 0.76 (0.73–0.80) & 0.78 (0.75–0.82) \\
Galaxy & 0.49 & 0.27 (0.24–0.29) & 0.29 (0.26–0.31) & 0.30 (0.27–0.32) \\
\textbf{Concentric (3-dim, 5)} & 0.46 & 0.46 (0.43–0.48) & 0.46 (0.43–0.49) & 0.45 (0.42–0.48) \\
\textbf{Concentric (5-dim, 5)} & 0.63 & 0.64 (0.61–0.68) & 0.63 (0.60–0.66) & 0.63 (0.60–0.66) \\
\textbf{Concentric (3-dim, 10)} & 0.56 & 0.56 (0.53–0.59) & 0.57 (0.53–0.60) & 0.46 (0.43–0.49) \\
Concentric (5-dim, 10) & 0.75 & 0.63 (0.60–0.66) & 0.62 (0.58–0.65) & 0.56 (0.53–0.59) \\
\textbf{Inliers (5-dim, 0.2)} & 0.63 & 0.69 (0.63–0.74) & 0.66 (0.61–0.71) & 0.64 (0.59–0.69) \\
Inliers (5-dim, 0.5) & 0.27 & 0.18 (0.15–0.21) & 0.29 (0.26–0.33) & 0.19 (0.16–0.21) \\
\textbf{Normal (5-dim, dense)} & 0.59 & 0.58 (0.54–0.62) & 0.61 (0.57–0.65) & 0.59 (0.56–0.63) \\
\textbf{Normal (5-dim, sparse)} & 1.02 & 1.02 (0.97–1.07) & 1.04 (0.99–1.09) & 1.03 (0.98–1.08) \\
\textbf{Student (1-dim)} & 0.22 & 0.24 (0.21–0.27) & 0.23 (0.20–0.26) & 0.22 (0.19–0.25) \\
\textbf{Student (2-dim)} & 0.43 & 0.43 (0.39–0.47) & 0.41 (0.37–0.44) & 0.40 (0.37–0.44) \\
Student (3-dim) & 0.29 & 0.19 (0.16–0.22) & 0.20 (0.17–0.23) & 0.22 (0.19–0.25) \\
Student (5-dim) & 0.45 & 0.26 (0.22–0.31) & 0.22 (0.19–0.25) & 0.26 (0.22–0.30) \\
\bottomrule
\end{tabular}

  \end{tiny}
\end{table}

We evaluated the performance of the \CoolModel{} described above on a subset of distributions proposed in Appendix~\ref{appendix:extended-benchmark}.
We excluded the high-dimensional distributions due to the high computational cost of fitting Markov chains: in our preliminary experiments used to select sampling hyperparameters, the chains often failed to sample properly the posterior distribution.
For lower-dimensional problems we did not notice convergence problems in our preliminary runs (apart from the label-switching issues, which do not affect the predictive distribution).
To reduce the computational cost, we decided to use three (rather than ten) samples from each distribution. Then, for each sample we ran a single Markov chain with $1,\!000$ warm-up steps and $1,\!000$ collected samples.
We then used $1,000$ Monte Carlo samples to estimate the MI in each distribution.

In Table~\ref{table:bmm-results} we report the MI estimates across three runs. Each estimate is a summary of the posterior distribution, specifying the mean and the 10th and 90th percentile of the distribution.

Overall, we see that \CoolModel{}s offer good performance and appropriate uncertainty quantification in low-dimensional problems, such as X, AI, normal and low-dimensional concentric distributions.
Interestingly, even when the model is slightly misspecified (additive and low-dimensional Student distributions with tail-shortening transformation applied), we obtain good performance.

It is however important to note that the proposed approach is not universal: for a 5-dimensional concentric distribution and 10 components and 5-dimensional problem with 50\% of inliers, the model, even though it is well-specified, does not always estimate the mutual information appropriately, underestimating the mutual information. We suppose that posterior predictive checking and sensitivity analysis can be used to diagnose problems with this model. We investigate this issue in more detail in Appendix~\ref{appendix:bmm-on-cool-tasks}.

\subsubsection{Posterior predictive checking and estimation of pointwise mutual information profiles}
\label{appendix:bmm-on-cool-tasks}

As discussed in Section~\ref{section:our-new-estimators}, \CoolModel{}s allow one to do model-based estimation of both MI and the PMI profile.
At the same time, they require domain expertise to propose a generative model as well as careful model criticism.
In this section we further investigate these aspects, by applying the model to the X, AI, Waves and Galaxy distributions, changing the number of data points $N\in \{125, 250, 500, 1000\}$.

For each distribution we sampled once $N$ data points, ran a single Markov chain with 2000 warm-up steps and collected 800 samples.
Then, for each sample we estimated MI and the profile via the Monte Carlo approximation using 100,000 samples.
We visualise the observed sample, a single posterior predictive sample and posterior on mutual information and the PMI profile in Fig.~\ref{fig:gmm-all-x}, Fig.~\ref{fig:gmm-all-ai}, Fig.~\ref{fig:gmm-all-waves} and Fig.~\ref{fig:gmm-all-galaxy}.

Although the model performance is good for the X and AI distributions, which additionally suports the results obtained in Appendix~\ref{appendix:bmm-on-cool-tasks}, we see that model misspecification results in unreliable estimates for the Waves and Galaxy distributions.

\begin{figure*}[th]
\centering
\includegraphics[width=0.8\linewidth]{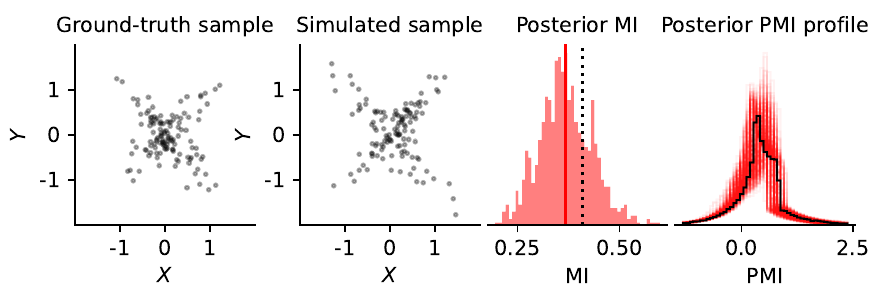}\\
\includegraphics[width=0.8\linewidth]{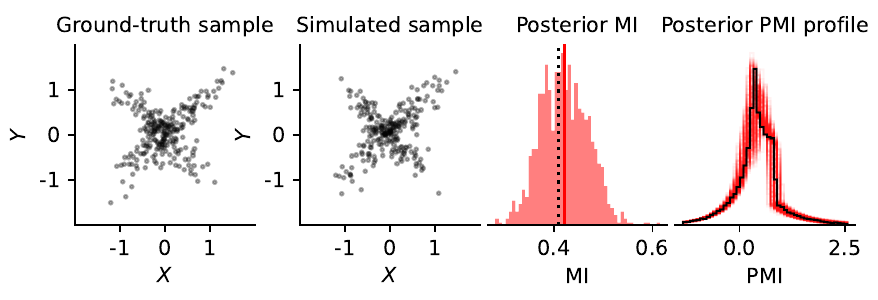}\\
\includegraphics[width=0.8\linewidth]{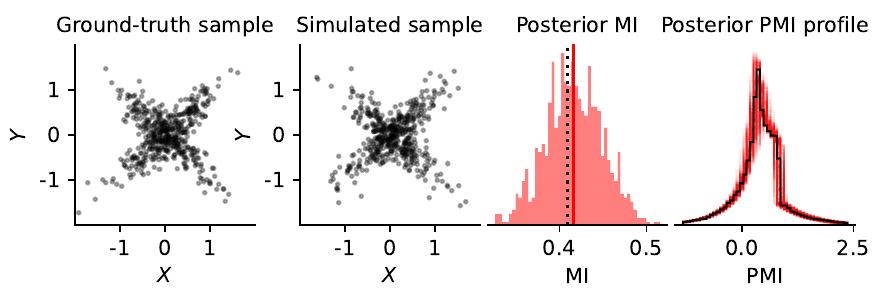}\\
\includegraphics[width=0.8\linewidth]{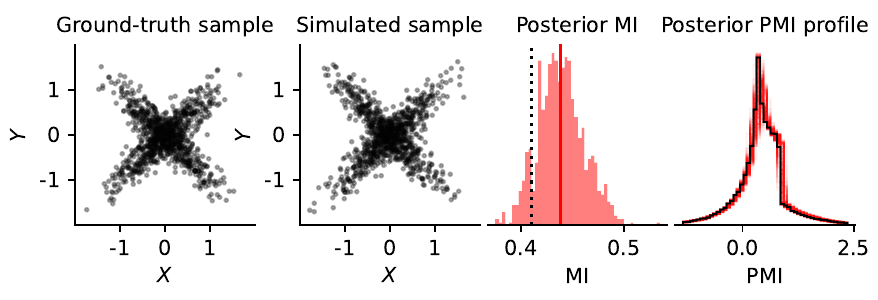}\\
\caption{
Gaussian mixture model fitted to the X distribution with 125, 250, 500 and 1000 samples.
}
\label{fig:gmm-all-x}
\end{figure*}

\begin{figure*}[th]
\centering
\includegraphics[width=0.8\linewidth]{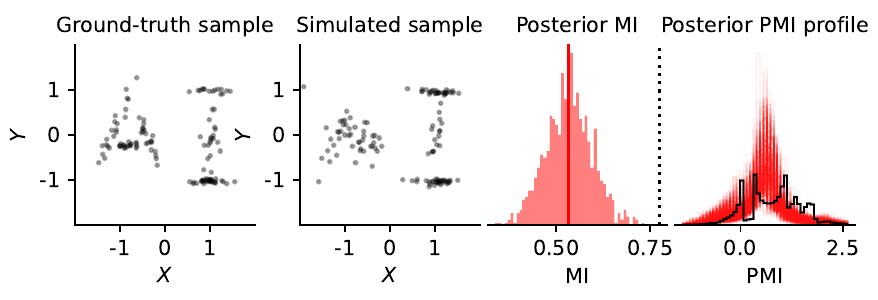}\\
\includegraphics[width=0.8\linewidth]{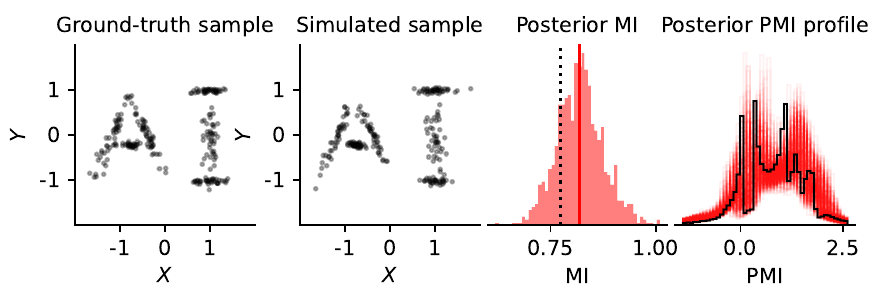}\\
\includegraphics[width=0.8\linewidth]{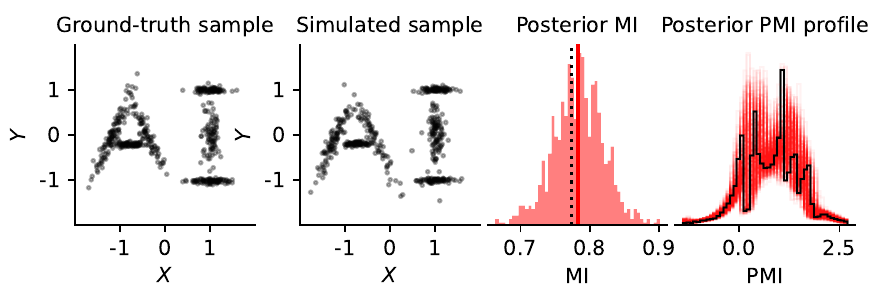}\\
\includegraphics[width=0.8\linewidth]{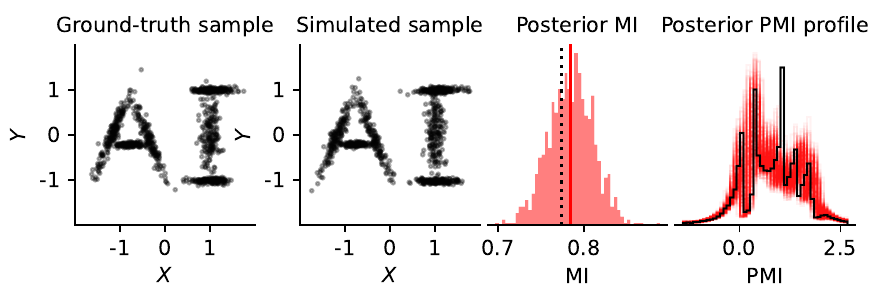}\\
\caption{
Gaussian mixture model fitted to the AI distribution with 125, 250, 500 and 1000 samples.
}
\label{fig:gmm-all-ai}
\end{figure*}

\begin{figure*}[th]
\centering
\includegraphics[width=0.8\linewidth]{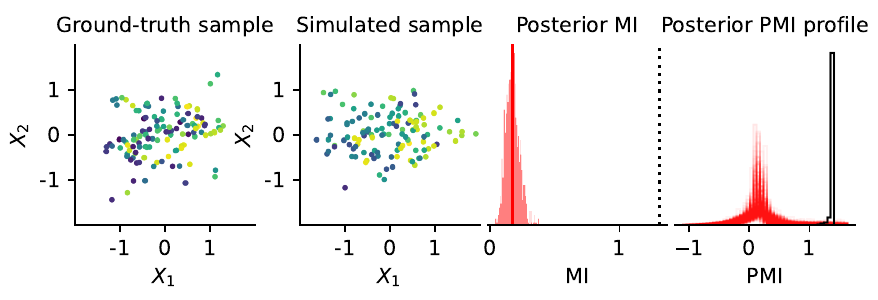}\\
\includegraphics[width=0.8\linewidth]{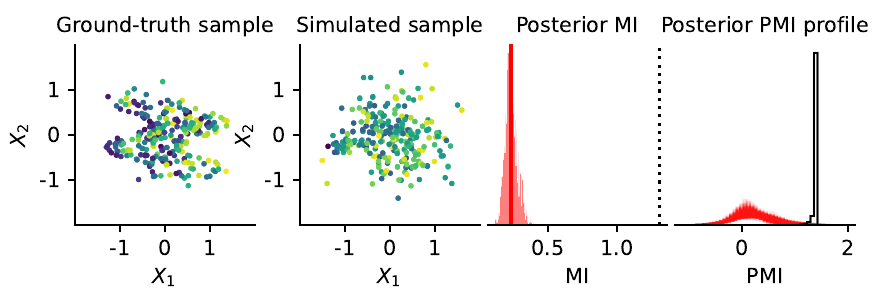}\\
\includegraphics[width=0.8\linewidth]{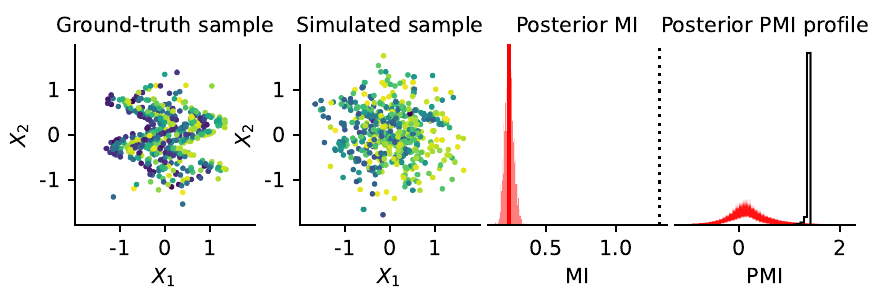}\\
\includegraphics[width=0.8\linewidth]{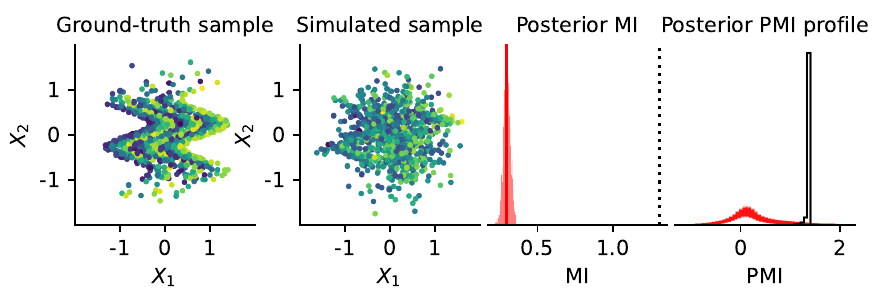}\\
\caption{
Gaussian mixture model fitted to the Waves distribution with 125, 250, 500 and 1000 samples.
}
\label{fig:gmm-all-waves}
\end{figure*}

\begin{figure*}[th]
\centering
\includegraphics[width=0.8\linewidth]{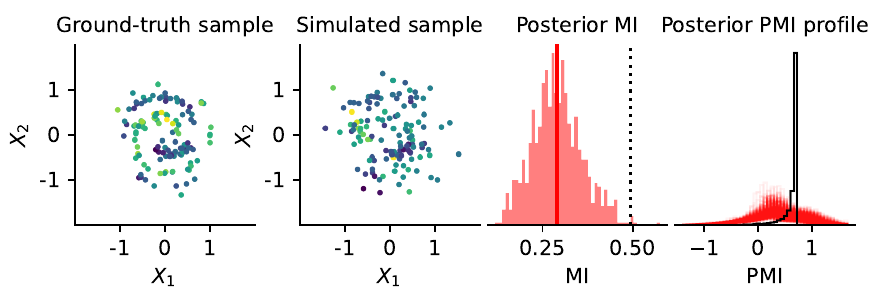}\\
\includegraphics[width=0.8\linewidth]{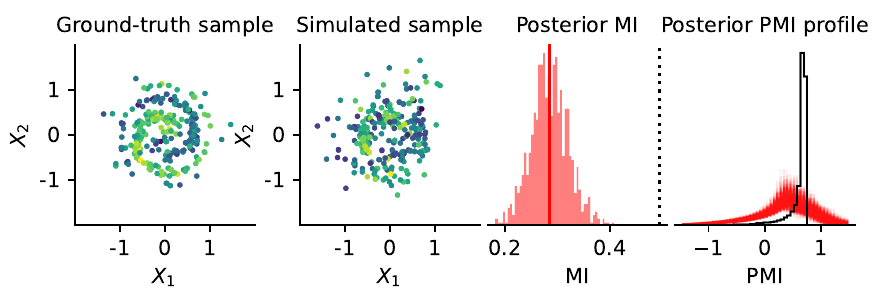}\\
\includegraphics[width=0.8\linewidth]{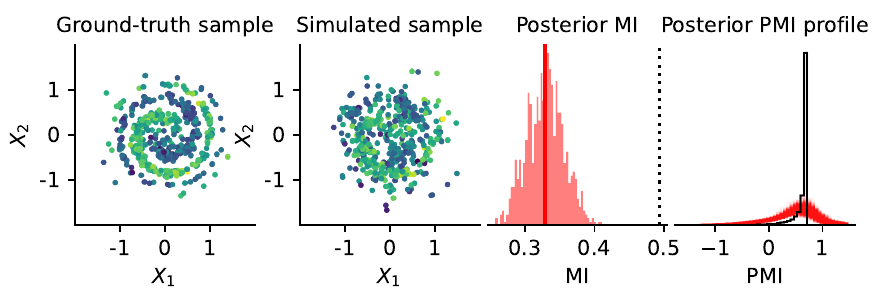}\\
\includegraphics[width=0.8\linewidth]{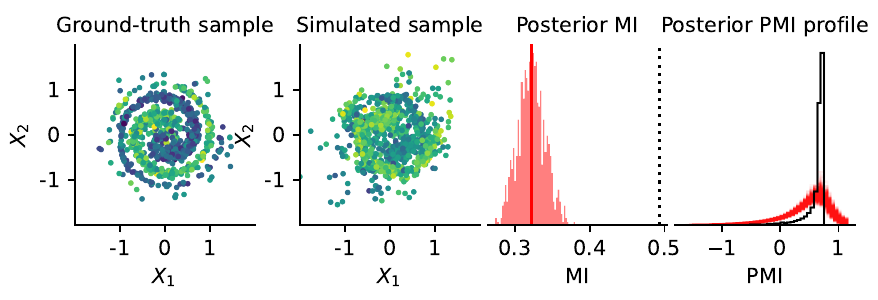}\\
\caption{
Gaussian mixture model fitted to the Galaxy distribution with 125, 250, 500 and 1000 samples.
}
\label{fig:gmm-all-galaxy}
\end{figure*}

\ifx\ArXiV\undefined
    \clearpage
    \input{neurips_checklist}
\fi

\end{document}